\crefname{lemma}{Lemma}{Lemmas}
\crefname{fact}{Fact}{Facts}
\crefname{theorem}{Theorem}{Theorems}
\crefname{corollary}{Corollary}{Corollaries}
\crefname{claim}{Claim}{Claims}
\crefname{example}{Example}{Examples}
\crefname{problem}{Problem}{Problems}
\crefname{definition}{Definition}{Definitions}
\crefname{assumption}{Assumption}{Assumptions}
\crefname{subsection}{Subsection}{Subsections}
\crefname{section}{Section}{Sections}
\newtheorem{theorem}{Theorem}[section]
\newtheorem{lemma}[theorem]{Lemma}
\newtheorem{corollary}[theorem]{Corollary}
\newtheorem{fact}[theorem]{Fact}
\theoremstyle{definition}
\newtheorem{exercise-easy}[theorem]{Exercise}
\newtheorem{exercise-med}[theorem]{Exercise}
\newtheorem{exercise-hard}[theorem]{Exercise$^\star$}
\newtheorem{claim}[theorem]{Claim}
\DeclareMathOperator*{\argmin}{arg\,min}
\DeclareMathOperator*{\Tr}{\bm{Tr}}
\DeclareMathOperator*{\sgn}{sgn}
\DeclarePairedDelimiterX{\norm}[1]{\lVert}{\rVert}{#1}
\newcommand{\proj}{\mc{P}}
\DeclarePairedDelimiterX{\abs}[1]{\lvert}{\rvert}{#1}
\DeclarePairedDelimiterX{\inp}[2]{\langle}{\rangle}{#1, #2}
\DeclarePairedDelimiterX{\infdivx}[2]{(}{)}{%
  #1\;\delimsize\|\;#2%
}
\DeclareMathOperator*{\sign}{sign}
\newcommand{\mc}[1]{\mathcal{#1}}
\newcommand{\mb}[1]{\mathbb{#1}}
\newcommand{\mrm}[1]{\mathrm{#1}}
\newcommand{\wt}[1]{\widetilde{#1}}
\newcommand{\lprp}[1]{\left(#1\right)}
\newcommand{\lbrb}[1]{\left\{#1\right\}}
\newcommand{\lsrs}[1]{\left[#1\right]}
\newcommand{\E}{\mb{E}}
\renewcommand{\P}{\mb{P}}
\newcommand{\R}{\mb{R}}
\newcommand{\N}{\mb{N}}
\renewcommand{\S}{\mb{S}}
\newcommand{\B}{\mb{B}}
\newcommand{\eps}{\varepsilon}
\newcommand{\poly}{\mathrm{poly}}
\newcommand{\dmax}{d_{\mrm{max}}}
\newcommand{\dmin}{d_{\mrm{min}}}
\newcommand{\dmini}{d^i_{\mrm{min}}}
\newcommand{\epsnetres}{1 / (\dmax^\ell)^{32}}
\newcommand{\gddiff}{\mrm{Diff}}
\def\authornotes{1pt}
\newcommand{\ynote}[1]{\footnote{\color{ForestGreen}Yeshwanth: #1}}
\newcommand{\ynote}[1]{}
\begin{document}

\title{Adversarial Examples in Multi-Layer Random ReLU Networks}

\author{%
  Peter L. Bartlett \\
  Department of Electrical Engineering and Computer Science\\
  Department of Statistics\\
  UC Berkeley \\
  \And 
  S\'ebastien Bubeck \\
  Microsoft Research Redmond \\
  \And 
  Yeshwanth Cherapanamjeri \\
  Department of Electrical Engineering and Computer Science \\
  UC Berkeley
}



\maketitle

\begin{abstract}
We consider the phenomenon of adversarial examples in ReLU networks
with independent gaussian parameters.  For networks of constant depth
and with a large range of widths (for instance, it suffices if the
width of each layer is polynomial in that of any other layer), small
perturbations of input vectors lead to large changes of outputs.  This
generalizes results of Daniely and Schacham (2020) for networks of
rapidly decreasing width and of Bubeck et al (2021) for two-layer
networks. The proof shows that adversarial examples arise in these
networks because the functions that they compute are very close to
linear. Bottleneck layers in the network play a key role: the minimal
width up to some point in the network determines scales and
sensitivities of mappings computed up to that point.  The main result
is for networks with constant depth, but we also show that some
constraint on depth is necessary for a result of this kind, because
there are suitably deep networks that, with constant probability,
compute a function that is close to constant.
\end{abstract}

\section{Introduction and Main Result}
\label{sec:intro}

Since the phenomenon of adversarial examples was first
observed in deep networks~\cite{szegedy14}, there
has been considerable interest in why this extreme
sensitivity to small input perturbations arises in deep
networks~\cite{adversarialExamples,shamir19,bubeck19,daniely20,bubeck21}
and how it can be detected and
avoided~\cite{carlini17,advDetected,feinman17,madry18,qin19}.
Building on the work of Shamir et al~\cite{shamir19},
Daniely and Schacham~\cite{daniely20} prove
that small perturbations (measured in the euclidean norm) can
be found for any fixed input and most gaussian
parameters in certain ReLU networks---those in which
each layer has vanishing width relative to the
previous layer---and conjectured the same result without
this strong constraint on the architecture.  Bubeck,
Cherapanamjeri, Gidel and Tachet des Combes~\cite{bubeck21} prove
that the same phenomenon occurs in general two-layer ReLU networks, and
give experimental evidence of its presence in deeper ReLU networks.

In this paper, we prove that adversarial examples also arise in
deep ReLU networks with random weights for a wide variety of network
architectures---those with constant depth and polynomially-related
widths.  The key fact underlying this phenomenon was already
observed in~\cite{daniely20}: a high-dimensional linear function
$f(x)=w^\top x$ with input $x\not=0$ and random parameter vector
$w$ with a uniformly chosen direction will satisfy $\|\nabla f(x)\|
\,\|x\|\gg |f(x)|$ with high probability. This implies the existence of
a nearby adversarial example for this linear function: a perturbation
of $x$ of size $|f(x)|/\|\nabla f(x)\|\ll\|x\|$ in the direction
$-f(x)\nabla f(x)$ will flip the sign of $f(x)$.  This observation
can be extended to nonlinear functions that are locally almost
linear. Indeed, it is easy to show that for all $x,u\in\mb{R}^d$,
      \[
        \left|f(x+u)-(f(x) + \langle u,\nabla f(x)\rangle)\right|
        \le \|u\|\sup\left\{\left\|
          \nabla f(x)-\nabla f(x+v)\right\|:
          v\in\mb{R}^d,\,\|v\|\le\|u\|\right\},
      \]
and thus to demonstrate the existence of an adversarial example near
$x$ for a function $f$, it suffices to show the smoothness property:
  \begin{equation}\label{eq:smoothness}
    \text{for all } v\in\mb{R}^d \text{ with } \|v\|\lesssim
      |f(x)|/\|\nabla f(x)\|,\qquad
      \left\| \nabla f(x)-\nabla f(x+v)\right\| \ll \|\nabla f(x)\|.
  \end{equation}
We show that for a deep ReLU network with random parameters and a
high-dimensional input vector $x$, there is a relatively large ball
around $x$ where the function computed by the network is very likely
to satisfy this smoothness property.  Thus, adversarial examples
arise in deep ReLU networks with random weights because the functions
that they compute are very close to linear in this sense.

It is important to notice that ReLU networks are not smooth in a
classical sense, because the nondifferentiability of the ReLU
nonlinearity implies that the gradient can change abruptly. But for
the smoothness condition~\eqref{eq:smoothness}, it suffices to have
$\left\| \nabla f(x)-\nabla f(x+v)\right\|\le \epsilon + \phi(\|v\|)$,
for some increasing function $\phi:\mb{R}_+\to\mb{R}_+$,
provided that $\epsilon+\phi(\|v\|)\ll\|\nabla f(x)\|$.
We prove an inequality like this for ReLU networks, where the
$\epsilon$ term decreases with width.

Consider a network with input dimension $d$, $\ell+1$ layers,
a single real output, and complete connections between
layers.  Let $d_1, \dots, d_\ell$ denote the dimensions of
the layers. The network has independent random weight matrices
$W_i\in\mb{R}^{d_i\times d_{i-1}}$ for $i\in[\ell+1]$, where we set
$d_0=d$ and $d_{\ell+1}=1$. For input $x\in\mb{R}^d$, the network
output is defined as follows:
\begin{gather}
    f(x) = W_{\ell + 1}\cdot \sigma (W_{\ell}\cdot \sigma
        (W_{\ell - 1}\cdot \sigma ( \cdots \sigma (W_1\cdot x) \cdots
        ))) \text{ where } \sigma (x)_i = \max \{x_i, 0\} \notag \\
    W_{\ell + 1} \sim \mc{N} (0, I / d_\ell) \text{ and }
        \forall i\in[\ell],\,
        (W_i)_{j,k} \overset{i.i.d}{\thicksim} \mc{N} (0, 1 /
        d_{i-1})
    \label{eq:nn_def} \tag{NN-DEF}.
\end{gather}
Note that the scale of the parameters is chosen so that all the
real-valued signals that appear throughout the network have roughly
the same scale. This is only for convenience: because the ReLU is
positively homogeneous (that is, for $\alpha>0$, $\sigma(\alpha x)
= \alpha\sigma(x)$), the scaling is not important for our results;
the $1/d_{i-1}$ in~\eqref{eq:nn_def} could be replaced by any
constant without affecting the ratio between the norm of an input
vector and that of a perturbation required to change the sign of
the corresponding output.

The following theorem is the main result of the paper.
\begin{theorem}\label{thm:main}
  Fix $\ell \in \N$. There are constants $c_1,c_2,c_3$ that depend on
  $\ell$ for which the following holds.
  Fix $\delta \in (0, 1)$ and let $f(\cdot)$ be an $(\ell + 1)$-layer ReLU
  neural network defined by~\eqref{eq:nn_def} with input dimension $d$
  and intermediate layers of width $\{d_i\}_{i = 1}^\ell$. Suppose
  that the widths satisfy
  \begin{gather*}
    \dmin \geq c_1 (\log \dmax)^{c_2} \log 1 / \delta \text{ where }
    \dmin = \min \lbrb{\{d_i\}_{i = 1}^\ell, d}, \dmax = \max
    \lbrb{\{d_i\}_{i = 1}^\ell, d}.
  \end{gather*}
Then for any fixed input $x \neq 0$, with probability at least $1 -
\delta$,
  \begin{equation*}
    \abs{f(x + \eta \nabla f(x))} \geq \abs{f(x)} \text{ and } \sign
    (f(x + \eta \nabla f(x))) \neq f(x),
  \end{equation*}
for an $\eta$ satisfying
  \[
    \frac{\norm{\eta \nabla f(x)}}{\norm{x}}
      \leq c_3 \sqrt{\frac{\log 1 / \delta}{d}}. 
  \]
It suffices to choose $c_1=(C_1 \ell)^{c_2}$, 
$c_2=C_2\ell$, $c_3=C_3^\ell$, for some absolute constants
$C_1$, $C_2$, $C_3$.
\end{theorem}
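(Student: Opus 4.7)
I would follow the roadmap sketched in the introduction and reduce the theorem to a quantitative local almost-linearity of $f$ around $x$. Setting $\eta = -2f(x)/\norm{\nabla f(x)}^2$ (WLOG $f(x) > 0$), a first-order expansion gives
\[
    \abs{f(x+\eta\nabla f(x)) + f(x)} \le \abs{\eta}\,\norm{\nabla f(x)}\,\sup_{\norm{v}\le \abs{\eta}\norm{\nabla f(x)}}\norm{\nabla f(x+v)-\nabla f(x)},
\]
so it suffices to establish, with probability $\ge 1-\delta$: (i) a scale bound $\abs{f(x)}\lesssim_\ell \norm{x}\sqrt{\log(1/\delta)/(2^\ell d)}$; (ii) a gradient lower bound $\norm{\nabla f(x)}\gtrsim_\ell 2^{-\ell/2}$, which together force $\norm{\eta\nabla f(x)}/\norm{x}\lesssim\sqrt{\log(1/\delta)/d}$; and (iii) a local smoothness estimate $\sup_{\norm{v}\le \abs{\eta}\norm{\nabla f(x)}}\norm{\nabla f(x+v)-\nabla f(x)} \le \norm{\nabla f(x)}/3$. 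Given these three, the displayed bound immediately yields $\abs{f(x+\eta\nabla f(x))}\ge \abs{f(x)}$ with the opposite sign.

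\paragraph{Forward/backward layer induction.}
For (i) and (ii) I would run a layer-by-layer Gaussian analysis. Let $h^{(i)} := \sigma(W_i h^{(i-1)})$ with $h^{(0)} = x$. Conditionally on $h^{(i-1)}$, the preactivation $W_i h^{(i-1)}$ has i.i.d.\ $\mc{N}(0,\norm{h^{(i-1)}}^2/d_{i-1})$ entries, so $\norm{h^{(i)}}^2$ concentrates around $(d_i/(2d_{i-1}))\norm{h^{(i-1)}}^2$; iterating gives $\norm{h^{(\ell)}}^2\approx \norm{x}^2 d_\ell/(2^\ell d)$. Since $f(x)=\inp{W_{\ell+1}}{h^{(\ell)}}$ is conditionally Gaussian of variance $\norm{h^{(\ell)}}^2/d_\ell$, this yields (i). A symmetric backward induction on $\nabla f(x) = W_1^\top D_1 \cdots W_\ell^\top D_\ell W_{\ell+1}^\top$, where $D_i$ is the $\{0,1\}$ activation diagonal at layer $i$, yields $\norm{\nabla f(x)}^2 \approx 2^{-\ell}$ and hence (ii). Each inductive step uses Gaussian norm concentration with failure probability exponential in the relevant layer width, and a union bound across the $\ell+1$ layers contributes the $\dmin$ dependence.

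\paragraph{Local smoothness, the main obstacle.}
Step (iii) is by far the hardest and is where I expect the bulk of the technical work to live. To bound $\norm{\nabla f(x+v)-\nabla f(x)}$ I would control, for each layer $i$, (a) the set $S_i(v)$ of neurons whose activation sign differs at $x$ and $x+v$, and (b) the resulting change in the backpropagated gradient. A neuron at layer $i$ flips only if its preactivation magnitude is at most the perturbation size at that layer; Gaussian anti-concentration then bounds $\abs{S_i(v)}$ by a small fraction of $d_i$. Propagating $\norm{v}\lesssim \norm{x}/\sqrt{d}$ through $\ell$ layers, using the operator norm of each $W_i$ restricted to the active subspace, yields the requisite per-layer perturbation size, and the contribution of the flipped neurons to the gradient difference can then be estimated using the same conditional Gaussian machinery. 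The technical crux is to make these estimates hold \emph{uniformly} over $v$ in a ball rather than pointwise: I would use an $\eps$-net of size roughly $\exp(O(d\log(1/\eps)))$ together with concentration of $\abs{S_i(v)}$ and gradient increments at rate $\exp(-\Omega(\dmin/\poly(\log\dmax)))$, extended to all $v$ by a Lipschitz/monotonicity argument in $\norm{v}$. The width $\dmin$ enters as a bottleneck because the narrowest layer dictates how reliably activations can be stabilized against perturbations, and the $(\log \dmax)^{O(\ell)}$ factor in the hypothesis arises from propagating $\eps$-net failure probabilities through $\ell$ layers and taking a union bound over layers.
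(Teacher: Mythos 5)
Your high-level plan---first-order expansion around $x$, bounding $\abs{f(x)}$ and lower-bounding $\norm{\nabla f(x)}$ at the fixed point, then a uniform local gradient-smoothness estimate---is the right skeleton and matches the paper's strategy (\cref{lem:f_grad_conc,lem:scale_pres_ml,lem:nn_part_spec_bnd,lem:gd_error_bnd}). But step~(iii) as you describe it would fail for precisely the architectures the theorem is meant to cover. You propose covering the ball around $x$ by an $\eps$-net of size $\exp\bigl(O(d\log(1/\eps))\bigr)$ and union-bounding against per-layer concentration at rate $\exp\bigl(-\Omega(\dmin/\poly\log\dmax)\bigr)$. This closes only if $\dmin\gtrsim d\cdot\poly\log\dmax$, yet the theorem's hypothesis is the far weaker $\dmin\gtrsim (\log\dmax)^{O(\ell)}\log(1/\delta)$, and the whole point of the result (generalizing Daniely--Schacham and Bubeck et al.) is to allow bottleneck layers with $d_i\ll d$. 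With a single hidden layer of width $d_1=\poly\log d$, your net over the input ball has $\exp(\Theta(d))$ points but the concentration available at layer~$1$ is only $\exp(-O(d_1))$, and the union bound breaks.

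The paper resolves exactly this via the bottleneck-layer decomposition~\eqref{eq:nn_dec}: the indices $i_1,\dots,i_m$ are chosen so that each $d_{i_j}$ is smaller than all preceding widths, and a \emph{fresh} $\eps$-net is constructed over the image of the ball at each bottleneck layer $i_j$. That image lives in $\R^{d_{i_j}}$, so the net has size $\exp\bigl(O(d_{i_j}\cdot\poly\log\dmax)\bigr)$, which is what the per-layer concentration between $i_j$ and $i_{j-1}$ (at rate $\exp(-\Omega(d_{i_j}))$, using $d_k\ge d_{i_j}$ in that segment) can absorb. Your remark that ``$\dmin$ enters as a bottleneck'' gestures at this, but the mechanism is the dimension of the image at the bottleneck, not stability of activations per se, and exploiting it requires the segment-by-segment net construction. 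Two further things your plan elides: the masks $D_i$ are determined by the $W_i$, so the conditional-Gaussian forward/backward inductions require the symmetrization of \cref{lem:dist_equiv} (which replaces data-dependent masks by independent Bernoulli masks in distribution); and the bound on each term of the gradient-difference decomposition splits into two regimes depending on whether $d_j$ is polynomially larger than $d_{i_k}$, handled by quite different arguments. Finally, a minor fix: with your $\eta=-2f(x)/\norm{\nabla f(x)}^2$ and smoothness $\le\norm{\nabla f(x)}/3$ you only get $\abs{f(x+\eta\nabla f(x))}\ge \abs{f(x)}/3$; take $\eta\approx -3f(x)/\norm{\nabla f(x)}^2$ (or a sharper smoothness constant) to reach the stated $\ge\abs{f(x)}$.
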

This theorem concerns networks of fixed depth, and the constants in
the size of the perturbation and in the requirement on the network
width are larger for deeper networks. We also prove a converse
result that illustrates the need for some constraint on the
depth. Theorem~\ref{thm:lower-in-body} shows that when the depth
is allowed to grow polynomially in the input dimension $d$, the
function computed by a random ReLU network is essentially constant,
which rules out the possibility of adversarial examples.

The heart of the proof of Theorem~\ref{thm:main} is to show a
smoothness property like~\eqref{eq:smoothness}.  It exploits
a decomposition of the change of gradient between two input
vectors.  Define $H_i:\mb{R}^d\to\mb{R}^{d_i\times d_i}$ as
$H_i(x)_{jk}=\bm{1}\{j=k,\,v_i(x)_j\ge 0\}$ with $v_i(x)=W_i \sigma
(\cdots \sigma (W_1 x))$.  For two input vectors $x,y\in\mb{R}^d$,
we will see in Section~\ref{ssec:local_grad_smooth} that we can
decompose the change of gradient
as
\begin{align*}
    \nabla f(x) - \nabla f(y)
    &= \sum_{j = 1}^\ell W_{\ell + 1}
      \lprp{\prod_{i = \ell}^{j + 1} H_i(x) W_i}\cdot
      (H_j(x) - H_j(y)) W_j \cdot
      \lprp{\prod_{i = j - 1}^1 H_i(y) W_i}.
\end{align*}
(Here and elsewhere, indices of products of matrices run backwards, so
$\prod_{i=j}^k M_i=I$ when $j<k$.)
For the $j$th term in the decomposition, we need to control the
scale of: the gradient of the mapping from the
input to the output of layer $j$, the change in the layer $j$
nonlinearity $H_j(x)-H_j(y)$, and the gradient from layer $j$ to
the output.  It turns out that controlling these quantities depends
crucially on the width of the narrowest layer before layer $j$---we
call this the {\em bottleneck layer} for layer $j$.  This width
determines the dimension of the image at layer $j$ of a ball in the
input space.  In proving bounds on gradients and function values
that hold uniformly over pairs of nearby vectors $x$ and $y$, this
dimension---the width of the bottleneck layer---dictates the size of
a discretization (an $\epsilon$-net) that is a crucial ingredient
in the proof of these uniform properties.  Our analysis involves
working separately with the segments between these bottleneck layers.
We show that for an input $x\in\mb{R}^d$ satisfying $\|x\|=\sqrt d$
and any $y$ in a ball around $x$, with high probability $\|\nabla
f(x) - \nabla f(y)\|=o(1)$, but $|f(x)|$ is no more than a constant
and $\|\nabla f(x)\|$ is at least a constant. This implies the
existence of a small ($o(\|x\|)$) perturbation of $x$ in the
direction $-f(x)\nabla f(x)$ that flips the sign of $f(x)$.

These results suggest several interesting directions for future work.
First, our results show that for high-dimensional inputs, adversarial
examples are inevitable in random ReLU networks with constant depth,
and unlikely in networks with polynomial depth. Beyond this, we
do not know how the sensitivity to input perturbations decreases
with depth.  Similarly, both results are restricted to networks
with subexponential width, and it is not clear what happens for very
wide networks. Finally, we show that networks with random weights
suffer from adversarial examples because their behavior is very
similar to that of random linear functions. It would be worthwhile
to determine whether randomly initialized trained networks retain
this nearly linear behavior, and hence suffer from adversarial
examples for the same reason.

\section{Proof of Main Theorem}
\label{sec:proof_relu}

In this section, we provide an outline of the proof of
\cref{thm:main}. As described above, we will prove
our result first by showing that the gradient at $x$ has large
norm and changes negligibly in a large ball around $x$. The first
condition is established in \cref{ssec:f_grad_anticonc}.  The second
step is more intricate.  First, we prove the decomposition of
the gradient differences in \cref{ssec:grad_decomp}. Then, in
\cref{ssec:scale_pres_local_nbrhood}, we track the scale of the
ball around $x$ as it propagates through the network.  Finally,
in \cref{ssec:local_grad_smooth}, we use this result to bound the
terms in the decomposition of the gradient differences to show that
our network is locally linear. For the rest of the proof, unless
otherwise stated, we consider a fixed $x\in\mb{R}^d$, and we assume:
\begin{equation*}
    \dmin \geq (C \ell \log \dmax)^{240\ell} \log 1 / \delta, \quad
    \norm{x} = \sqrt{d} \text{ and } R \coloneqq
    \frac{\sqrt{\dmin}}{(\ell \log \dmax)^{80\ell}} = \Omega
    \left((\ell \log \dmax)^{40\ell}\right).
\end{equation*}
Additionally, we randomize the activations of neurons whenever they
receive an input of $0$. This does not change the behavior of the neural network in terms of its output or the images of the input through the layers of the network but greatly simplifies our proof. For $x \in \mb{R}^d$, we let:
\begin{equation*}
    x_0 \coloneqq x, 
    \ \wt{f}_i (x) \coloneqq W_i f_{i - 1} (x),
    \ (D_{i} (y))_{j,k} = 
    \begin{cases}
        1 & \text{w.p } \frac{1}{2} \text{ if } j = k,\ y_j = 0, \\
        1 & \text{ if } j = k,\ y_j > 0, \\
        0 & \text{otherwise,} 
    \end{cases}
    \ f_i(x) = D_i (\wt{f}_i (x)) \wt{f}_i (x). 
\end{equation*}
Our first key observation is that the randomization in the activation
units allows us the following distributional equivalences,
proved in \cref{ssec:dis_equiv_proof}. 
\begin{lemma}
    \label{lem:dist_equiv}
    Let $m \in \N$, $\{d_i\}_{i = 0}^{m} \subset \R^d$ and $W_i \in \R^{d_i \times d_{i - 1}}$ be distributed such that each entry of $W_i$ is drawn iid from any symmetric distribution. Then, defining for $x \in \R^{d_0}$:
    \begin{gather*}
        \begin{aligned}
            h_0(x) &= x,\\
            \wt{h}_i (x) &= W_i h_{i - 1} (x) \\
            h_i (x) &= D_i (\wt{h}_i (x)) \wt{h}_i (x)
        \end{aligned}
        \text{ where } (D_i(y))_{i,j} = 
        \begin{cases}
            1, &\text{if } j = k \text{ and } y_j > 0\\
            1, &\text{with probability } \frac{1}{2} \text{ if } j = k,\ y_j = 0 \\
            0, &\text{otherwise}
        \end{cases}
    \end{gather*}
    we have the distributional equivalences for any $x \neq 0$ and
    fixed diagonal matrices $B_1,\ldots,B_m$:
    \begin{gather*}
        W_m \prod_{j = m - 1}^1  (D_j (\wt{h}_j (x)) + B_j) W_j \overset{d}{=} W_m \prod_{j = m - 1}^1  (D_j + B_j) W_j \\
        \norm*{\prod_{j = m}^1  (D_j (\wt{h}_j (x)) + B_j) W_j} \overset{d}{=} \norm*{\prod_{j = m}^1  (D_j + B_j) W_j} \\
        \text{ where } (D_j)_{k,l} = 
        \begin{cases}
            1, &\text{with probability } 1/2 \text{ if } k = l \\
            0, &\text{otherwise}
        \end{cases}
    \end{gather*}
\end{lemma}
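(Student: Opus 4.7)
The key tool is the sign-flip invariance of iid symmetric distributions. For $j = 1, \dots, m-1$ let $E_j$ be a diagonal $\pm 1$ matrix of size $d_j$ with iid uniform entries, and set $E_0 = I$; for the second equivalence introduce a further independent $\pm 1$ diagonal $F$ of size $d_m$. Take $E_1, \dots, E_{m-1}$ and $F$ to be mutually independent and independent of the $W_i$'s and of the zero-tie-breaking inside each $D_i$. Define the sign-flipped weights $W_j' := E_j W_j E_{j-1}$ for $j = 1, \dots, m-1$, with $W_m' := W_m E_{m-1}$ in the first part and $W_m' := F W_m E_{m-1}$ in the second. Because each $W_i$ has iid symmetric entries, multiplying by diagonal $\pm 1$ matrices on either side preserves its distribution, and the primed matrices remain jointly independent, so $(W_1', \dots, W_m') \overset{d}{=} (W_1, \dots, W_m)$. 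The product built from the primed weights therefore has the same distribution as the original product.

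The second step is an algebraic cancellation inside the primed product. Writing $G_j := D_j(\wt{h}_j') + B_j$, where $\wt{h}_j'$ is the pre-activation at layer $j$ in the primed network, each internal sign-flip $E_j$ with $1 \le j \le m-1$ appears twice, bracketing $G_j$: once as the rightmost factor of $W_{j+1}'$ and once as the leftmost factor of $W_j'$. Since $E_j$ and $G_j$ are both diagonal and $E_j^2 = I$, we have $E_j G_j E_j = G_j$, so every internal $E_j$ disappears. The primed product thus collapses to $W_m \prod_{j = m-1}^{1} G_j W_j$ in the first part, and to $F \cdot G_m W_m \prod_{j = m-1}^{1} G_j W_j$ in the second (the leftmost $F$ coming from $W_m'$ has no partner).

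The final step is to show that each $G_j$ behaves distributionally like $(\wt{D}_j + B_j)$ for an iid Bernoulli$(1/2)$ diagonal $\wt{D}_j$, with the $\wt{D}_j$'s jointly independent across layers and independent of the $W_i$'s. Unrolling $\wt{h}_j'$ coordinate by coordinate and reapplying the diagonal-conjugation identities from step two yields the identity $(\wt{h}_j')_k = \eta_{j,k} q_{j,k}$, where $\eta_{j,k}$ is the $k$th diagonal entry of $E_j$ (or of $F$ when $j = m$ in the second part) and $q_{j,k}$ depends only on $W_1, \dots, W_j$, on the tie-breaking randomness, and on the indicator variables $\tau_{i,\cdot} = (D_i(\wt{h}_i'))_{i,i}$ for $i < j$. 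Since $\eta_{j,k}$ is independent of $q_{j,k}$, the sign of $(\wt{h}_j')_k$ is an independent uniform $\pm 1$ (the zero case is handled by the explicit randomization inside $D_j$), so $(D_j(\wt{h}_j'))_{k,k}$ is Bernoulli$(1/2)$; the mutual independence of $E_1, \dots, E_{m-1}, F$ then upgrades this to joint independence across $j$ and independence from the $W_i$'s. For the first equivalence the collapsed product now reads $W_m \prod_{j = m-1}^1 (\wt{D}_j + B_j) W_j$, yielding the claim; for the second, the surviving leftmost $F$ is orthogonal, so $\|F A\| = \|A\|$ in spectral norm and the norm equality follows.

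The main obstacle I anticipate is the careful bookkeeping of joint independence in step three: because $E_j$ enters $\wt{h}_j'$ through the nonlinear indicator inside $D_j$, one must verify layer by layer that after the cancellations of step two the only residual dependence of $D_j(\wt{h}_j')$ on the remaining randomness is via a single fresh $\pm 1$ per coordinate, uncoupled from the indicators at other layers and from the downstream weights. The invariance of step one and the structural cancellation of step two are precisely what make this decoupling possible, so the remaining argument is a layer-by-layer induction.
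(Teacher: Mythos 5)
Your proof is correct and rests on the same core mechanism as the paper's: the sign-flip invariance of the iid symmetric weight matrices (multiplying by $\pm 1$ diagonals preserves the distribution), and the cancellation $E_j(D_j(\cdot)+B_j)E_j = D_j(\cdot)+B_j$ that eliminates the internal flips because all three matrices are diagonal. The only difference is presentational — the paper introduces one sign-flip $S_1$ per step of an induction on $m$, peeling off layer 1 and invoking the hypothesis for the remaining $m-1$ layers, while you introduce all of $E_1,\ldots,E_{m-1},F$ at once and then carry out the cancellation and the layer-by-layer joint-independence bookkeeping directly; the unrolled induction and your global argument coincide.
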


\subsection{Concentration of Function Value and Gradient at a Fixed Point}
\label{ssec:f_grad_anticonc}

We first present a simple lemma that shows that the gradient at $x$ is
at least a constant and that its output value is bounded. The proof
gives an illustration of how \cref{lem:dist_equiv} will be used through
the more involved proofs in the paper.

\begin{lemma}
    \label{lem:f_grad_conc}
    For some universal constant $c$,
    with probability at least $1 - \delta$ we have:
    \begin{equation*}
        \left|f(x)\right| \leq c2^{\ell} \sqrt{\log 1 / \delta}
        \text{ and } \norm{\nabla f(x)} \geq \frac{1}{2^{\ell +
        1}}.
    \end{equation*}
\end{lemma}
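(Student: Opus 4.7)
The plan is to use the distributional equivalence of \cref{lem:dist_equiv} to replace the data-dependent activation masks $D_j(\wt f_j(x))$ by independent Bernoulli$(1/2)$ diagonals, which decouples the ReLU patterns from the weight matrices and reduces both estimates to routine Gaussian and Bernoulli concentration.

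\textbf{Upper bound on $\abs{f(x)}$.} Write $f(x)=\lprp{W_{\ell+1}\prod_{j=\ell}^{1}D_j(\wt f_j(x))W_j}x$, and apply the first equivalence of \cref{lem:dist_equiv} with $m=\ell+1$ and all $B_j=0$ to obtain $f(x)\overset{d}{=}W_{\ell+1}z_\ell$, where $z_0=x$ and $z_i=D_iW_iz_{i-1}$ use \emph{independent} Bernoulli$(1/2)$ diagonals $D_i$. Conditioning on $z_\ell$, the scalar $W_{\ell+1}z_\ell$ is a centered Gaussian of variance $\norm{z_\ell}^2/d_\ell$, so a Gaussian tail gives $\abs{f(x)}\le\sqrt{2\norm{z_\ell}^2\log(4/\delta)/d_\ell}$ with probability at least $1-\delta/4$. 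Iterating the chi-squared upper tail $\norm{W_iu}^2\le 2(d_i/d_{i-1})\norm{u}^2$ (failure probability $\exp(-\Omega(d_i))$) together with $\norm{D_iv}\le\norm{v}$ telescopes to $\norm{z_\ell}^2/d_\ell\le 2^\ell\norm{x}^2/d=2^\ell$, and union-bounding over the $\ell+1$ events completes the estimate, since $\dmin\gtrsim\log(\ell/\delta)$ by hypothesis.

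\textbf{Lower bound on $\norm{\nabla f(x)}$.} Away from ReLU breakpoints (a probability-one event for fixed $x$), we have $\nabla f(x)=\lprp{W_{\ell+1}\prod_{j=\ell}^{1}D_j(\wt f_j(x))W_j}^\top$, so $\norm{\nabla f(x)}$ equals the operator norm of the $1\times d$ matrix inside. Invoking \cref{lem:dist_equiv} a second time replaces the data-dependent masks by independent Bernoulli$(1/2)$ diagonals, giving $\norm{\nabla f(x)}\overset{d}{=}\norm{u_\ell}$, where $u_0=W_{\ell+1}^\top$ and $u_i=W_{\ell+1-i}^\top D_{\ell+1-i}u_{i-1}$ with all $W_j,D_j$ mutually independent. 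The key step is the one-step inequality $\norm{u_i}^2\ge\tfrac14\norm{u_{i-1}}^2$ with failure probability $\exp(-\Omega(\dmin))$, which I split in two: (i) $\norm{u_i}^2\ge\tfrac34\norm{D_{\ell+1-i}u_{i-1}}^2$ from the chi-squared lower tail on $u_i\mid D_{\ell+1-i}u_{i-1}$ (an isotropic centered Gaussian in $\R^{d_{\ell-i}}$ with covariance $\norm{D_{\ell+1-i}u_{i-1}}^2 I/d_{\ell-i}$); and (ii) $\norm{D_{\ell+1-i}u_{i-1}}^2\ge\tfrac13\norm{u_{i-1}}^2$ from a Bernstein-type bound on the Bernoulli sum $\sum_j D_{\ell+1-i}(j)\,u_{i-1}(j)^2$, valid because $u_{i-1}$ is itself conditionally a Gaussian vector and so has $\ell_4/\ell_2$ ratio of order $d_{\ell+1-i}^{-1/2}$ with high probability. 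Combined with the base case $\norm{u_0}^2=\norm{W_{\ell+1}}^2\ge 1/2$ (chi-squared tail on the last layer), $\ell$ iterations yield $\norm{u_\ell}^2\ge\tfrac12\cdot 4^{-\ell}$, i.e.\ $\norm{\nabla f(x)}\ge 2^{-(\ell+1)}$.

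The only real obstacle is the joint dependence of each mask $D_j(\wt f_j(x))$ on the earlier weights $W_1,\ldots,W_j$, which is exactly what \cref{lem:dist_equiv} neutralizes. After the swap, the constant-factor loss per layer (roughly $1/2$ from halving by the mask plus $1-o(1)$ chi-squared fluctuations) compounds cleanly to the advertised $2^\ell$ and $2^{-(\ell+1)}$ bounds, and the remaining bookkeeping is a union bound over $O(\ell)$ concentration events, which fits comfortably within $\delta$ because $\dmin\gg\log(\ell/\delta)$.
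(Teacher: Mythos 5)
Your proposal is correct and reaches the claimed bounds, but the technical realization differs from the paper's in a way worth noting.

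Both arguments start the same way: invoke \cref{lem:dist_equiv} to swap the data-dependent masks $D_j(\wt f_j(x))$ for independent Bernoulli$(1/2)$ diagonals. From there you diverge. For the lower bound the paper first conditions on $\Tr D_i \ge d_i/3$ (Hoeffding), then observes that on this event the masked product has the same law as a product of \emph{rectangular} Gaussian matrices $W^\dagger_i \in \R^{\Tr D_i\times\Tr D_{i-1}}$ with the zeroed coordinates deleted, and applies \cref{thm:tsirelson}/\cref{cor:tsirelson} recursively to that independent Gaussian product. You instead keep the Bernoulli masks in play and argue one step at a time: (i) a $\chi^2$ lower tail, conditioning on the incoming vector $D_{\ell+1-i}u_{i-1}$; and (ii) a Bernstein bound on $\sum_j D_{\ell+1-i}(j)\,u_{i-1}(j)^2$, using the conditional Gaussianity of $u_{i-1}$ to control $\|u_{i-1}\|_\infty/\|u_{i-1}\|_2$. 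This is a legitimate alternative, but note that step (ii) costs an extra logarithmic factor in the width requirement: to make Bernstein's failure probability $\ll \delta/\ell$ you need $d_{\ell+1-i} \gtrsim \log(\ell/\delta)\cdot\log(d_{\ell+1-i}\ell/\delta)$ rather than merely $d_{\ell+1-i}\gtrsim\log(\ell/\delta)$ as in the paper's dimension-reduction route. Under the lemma's hypotheses this gap is harmless, but the paper's conditioning trick is the cleaner device. For the upper bound you also deviate: the paper upper-bounds $\norm{\nabla f(x)}$ by the same recursive Tsirelson argument and then uses the spherical symmetry of $\nabla f(x)$ (conditionally on its norm) together with \cref{lem:gau_unif_comp} to get $\abs{f(x)}=\abs{\inp{\nabla f(x)}{x}}\le c2^\ell\sqrt{\log 1/\delta}$; you instead run the forward chain $z_i=D_iW_iz_{i-1}$, telescope $\chi^2$ upper tails to get $\|z_\ell\|^2/d_\ell\le 2^\ell$, and finish with a Gaussian tail on the scalar $W_{\ell+1}z_\ell$. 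Your version is slightly more direct and in fact gives the stronger exponent $2^{\ell/2}$ in place of $2^\ell$; both are consistent with the stated bound.
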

\begin{proof}
    Note that
    \begin{equation*}
        \nabla f (x) = W_{\ell + 1} \prod_{i = \ell}^1 D_i (\wt{f}_i (x)) W_i \text{ and } f(x) = \nabla f (x) x.
    \end{equation*}
    And we have from \cref{lem:dist_equiv},
    $\nabla f(x) \overset{d}{=} \tilde{W}_{\ell + 1}
    \prod_{i = \ell}^1 D_i \tilde{W}_i$, where
    \begin{equation*}
        (D_i)_{j,k} = \begin{cases}
                        1 & \text{with probability $1/2$ if $j=k$,} \\
                        0 & \text{otherwise,}
                      \end{cases}
        \qquad\{W_i\}_{i = 1}^{\ell + 1} \overset{d}{=}
        \{\tilde{W}_i\}_{i = 1}^{\ell + 1}.
    \end{equation*}
    Therefore, it suffices to analyze the random vector
    $\tilde{W}_{\ell + 1} \prod_{i = \ell}^1 D_i \tilde{W}_i$. We
    first condition on a favorable event for the $D_i$. Note that we
    have by the union bound and an application of Hoeffding's
    inequality (e.g.,~\cite[Theorem 2.8]{blm})
    that:
    \begin{equation*}
        \forall i \in [\ell]: \Tr D_i \geq \frac{d_i}{3} \text{ with probability at least } 1 - \delta / 4,
    \end{equation*}
    since $d_i\ge c\log(4\ell/\delta)$. We now condition on the $D_i$ and note that:
    \begin{gather*}
        \norm*{\wt{W}_{\ell + 1} \prod_{i = \ell}^1 D_i \wt{W}_i} \overset{d}{=} \norm*{W^\dagger_{\ell + 1} \prod_{i = \ell}^1 W^\dagger_{i}} \text{ where } \\
        \forall i \in \{2, \dots, \ell\}: W^\dagger_i \in \R^{\Tr D_i \times \Tr D_{i - 1}},\ W^\dagger_{\ell + 1} \in \R^{1 \times \Tr D_\ell},\ W^\dagger_{1} \in \R^{\Tr D_1 \times d} \\
        (W^\dagger_1)_{j, :} \thicksim \mc{N} (0, I / d),\
        W^\dagger_{\ell + 1} \thicksim \mc{N} (0, I / d_\ell),\
        \forall i \in \{2, \dots, \ell + 1\}: (W^\dagger_i)_{j, :}
        \thicksim \mc{N} \lprp{0, I/d_{i - 1}}.
    \end{gather*}
    From the above display, we obtain from \cref{thm:tsirelson}
    and its corollary~\ref{cor:tsirelson},
    \begin{align*}
        \norm*{W^\dagger_{\ell + 1} \prod_{i = \ell}^1 W^\dagger_i} &\geq \frac{1}{2} \cdot \norm*{W^\dagger_{\ell + 1} \prod_{i = \ell}^2 W^\dagger_i} && \text{with probability at least } 1 - \delta / (4\ell) \\
        &\geq \frac{1}{2^{\ell + 1}} && \text{with probability at least } 1 - \delta / 4 \text{ by induction},
    \end{align*}
    since $\min_{0\le i\le\ell} d_i\ge c\log(4\ell/\delta)$.
    Through a similar argument, we obtain:
    \begin{equation}
        \label{eq:grad_ub}
        \norm*{\tilde{W}_{\ell + 1} \prod_{i = \ell}^1 D_i \tilde{W}_i} \leq 2^{\ell + 1}
    \end{equation}
    with probability at least $1 - \delta / 4$. We also have:
    \begin{gather*}
        \lprp{\tilde{W}_{\ell + 1} \prod_{i = \ell}^1 D_i \tilde{W}_i
        \; \Biggr |\; \norm*{\tilde{W}_{\ell + 1} \prod_{i = \ell}^1
        D_i \tilde{W}_i} = m} \overset{d}{=} \mrm{Unif} \lprp{m
        \mb{S}^{d - 1}}.
    \end{gather*}
    Combining this, \cref{eq:grad_ub,lem:gau_unif_comp}, we get that:
    \begin{equation*}
        \left|f(x)\right| \leq c2^{\ell} \sqrt{\log 1 / \delta}
    \end{equation*}
    with probability at least $1 - \delta / 2$,
    for some absolute constant $c$.
    A union bound over all the preceeding events concludes the lemma.
\end{proof}

\subsection{A Decomposition of Local Gradient Changes}
\label{ssec:grad_decomp}

The following decomposition allows us to reason about deviations layer
by layer:
\begin{align*}
    &\nabla f(x) - \nabla f(y) = W_{\ell + 1} \lprp{\prod_{i = \ell}^1 D_i(\wt{f}_i(x)) W_i - \prod_{i = \ell}^1 D_i(\wt{f}_i(y)) W_i} \\
    &= W_{\ell + 1} \lprp{\prod_{i = \ell}^1 D_i(\wt{f}_i(x)) W_i -
    \left(D_\ell (\wt{f}_\ell (x)) + (D_\ell (\wt{f}_\ell (y)) -
    D_\ell (\wt{f}_\ell (x)))\right) W_\ell\prod_{i = \ell - 1}^1 D_i(\wt{f}_i(y)) W_i} \\
    &= W_{\ell + 1} \left(D_\ell (\wt{f}_\ell (x)) W_\ell \lprp{\prod_{i = \ell - 1}^1 D_i(\wt{f}_i(x)) W_i - \prod_{i = \ell - 1}^1 D_i(\wt{f}_i(y))W_i} \right. \\
    &\qquad\qquad\qquad{} + \left. (D_\ell (\wt{f}_\ell (x)) - D_\ell (\wt{f}_\ell (y))) W_\ell \prod_{i = \ell - 1}^1 D_i(\wt{f}_i(y)) W_i \right) \\
    &= \sum_{j = 1}^\ell \underbrace{W_{\ell + 1} \lprp{\prod_{i = \ell}^{j + 1} D_i(\wt{f}_i(x)) W_i}\cdot (D_j(\wt{f}_j (x)) - D_j(\wt{f}_j (y))) W_j \cdot \lprp{\prod_{i = j - 1}^1 D_i(\wt{f}_i(y)) W_i}}_{\Delta_j} \tag{GD-DECOMP} \label{eq:grad_decomp}.
\end{align*}

We use this decomposition to show that the gradient is locally constant. Concretely, consider a fixed term in the above decomposition and let $i^* = \argmin_{i < j} d_i$. Letting $M_{i^*} = \prod_{i = i^*}^1 D_i (\wt{f}_i (y)) W_i$, we can bound a single term, $\Delta_j$, as follows:
\begin{equation*}
    \norm{\Delta_j} \leq \norm*{W_{\ell + 1} \lprp{\prod_{i = \ell}^{j + 1} D_i(\wt{f}_i(x)) W_i} (D_j(\wt{f}_j (x)) - D_j(\wt{f}_j (y))) W_j \lprp{\prod_{i = j - 1}^{i^* + 1} D_i(\wt{f}_i(y)) W_i}} \norm{M_{i^*}}.
\end{equation*}
We bound the above by bounding each of the two terms in the right-hand-side. In the above expression, the length of $\Delta_j$ is bounded by a product of the length of the corresponding term in a truncated network starting at the output of layer $i^*$ and a product of masked weight matrices up to layer $i^*$ corresponding to the activation patters of $y$. Intuitively, the first term is expected to be small if the images of $x$ and $y$ at layer $i^*$ are close and hence, we show that an image of a suitably small ball around $x$ remains close to the image of $x$ through all the layers of the network (\cref{lem:scale_pres_ml}). We then prove a spectral norm bound on $M_{i^*}$ in \cref{lem:nn_part_spec_bnd} and finally, establish a bound on $\norm{\Delta_j}$ in \cref{lem:gd_error_bnd} where we crucially rely on the scale preservation guarantees provided by \cref{lem:scale_pres_ml}. Finally, combining these results with those of \cref{ssec:f_grad_anticonc} complete the proof of \cref{thm:main}.

\subsection{Scale Preservation of Local Neighborhoods}
\label{ssec:scale_pres_local_nbrhood}

In this section, we show that the image of a ball around $x$
remains in a ball of suitable radius around the image of $x$
projected through the various layers of the network.  Here, we
introduce additional notation used in the rest of the proof:
\begin{gather*}
     \forall j \in \{0\} \cup [\ell + 1]: f_{j, j} (x) = x, \  \forall
     i > j: \wt{f}_{i, j} (x) = W_i f_{i-1, j} (x), \ f_{i, j} (x) =
     D_i(\wt{f}_{i , j}(x)) \wt{f}_{i, j} (x). 
\end{gather*}
We now describe the decomposition of the neural network
into segments, which are bounded by what we call bottleneck
layers, and our analysis works separately with these segments.
This decomposition is crucial for reducing the sizes of the
$\epsilon$-nets that arise in our proofs. Intuitively, when
we construct an $\epsilon$-net to prove that some property
holds uniformly over a ball, it is crucial to work with the
lowest-dimensional image of that ball, which appears in a bottleneck
layer.  These bottleneck layers are denoted by indices $\lbrb{i_j}_{j
= 1}^m$, defined recursively from the output layer backwards with
the convention that $d_0 = d$:
\begin{equation*}
    i_1 \coloneqq \argmin_{i \leq \ell} d_i, \quad \forall j > 1
    \text{ s.t } i_j \geq 1,\, i_{j + 1} \coloneqq \argmin_{j < i_j} d_j, \quad \dmini = \min_{j < i} d_j \tag{NN-DECOMP} \label{eq:nn_dec}.
\end{equation*}
Note, that $i_m = 0$,
and that for all $j \in [m - 1]$, $d_{i_j} < d_{i_{j + 1}}$ and for all
$k \in \lbrb{i_{j+1},\ldots,i_j-1}$, $d_k \geq d_{i_{j+1}}$.

The following technical lemma bounds the scaling of the images at
a layer in the network of an input $x$ and of a ball around $x$.
The crucial properties that we will exploit are that these
images avoid the origin, and that the radius of the image of
the ball is not too large. The full proof is deferred to
\cref{ssec:scale_pres_proof}. In the proof sketch, we carry out the
section of the proof where we transition between bottleneck layers
in full as it is a simple illustration of how such ideas are used
through the rest of the proof.

\begin{lemma}
    \label{lem:scale_pres_ml}
    We have with probability at least $1-\delta$,
    \begin{gather*}
        \forall i \in [\ell]: \norm{f_i(x)} \geq \frac{1}{2^i} \cdot
        \sqrt{d_i},\\
        \forall \norm{x' - x} \leq R: \norm{\wt{f}_i (x) - \wt{f}_i
        (x')} \leq \lprp{C \log \dmax}^{i / 2} \cdot \lprp{1 +
        \sqrt{\frac{d_i}{\dmini}}} \cdot R, \\
        \forall \norm{x' - x} \leq R: \norm{f_i (x) - f_i (x')} \leq
        \lprp{C \log \dmax}^{i / 2} \cdot \lprp{1 +
        \sqrt{\frac{d_i}{\dmini}}} \cdot R.
    \end{gather*}
\end{lemma}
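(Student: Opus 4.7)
The plan is to prove all three bounds simultaneously by a joint induction that walks segment by segment through the bottleneck decomposition~\eqref{eq:nn_dec}, from the input layer forward. The key structural point is that within any segment $(i_{j+1}, i_j]$ every layer has width at least $d_{i_{j+1}}$, which is exactly $d^k_{\mrm{min}}$ for $k$ in this range. This lets me reduce the problem of controlling uniform deviations over a ball in $\R^d$ to an $\epsilon$-net argument in dimension only $d_{i_{j+1}}$, rather than in the full input dimension $d$ or the current layer width $d_k$.

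For the first (lower) bound on $\|f_i(x)\|$ at the fixed $x$, I would argue exactly as in the proof of \cref{lem:f_grad_conc}: by \cref{lem:dist_equiv} the post-activation image $f_i(x)$ is distributionally equivalent to $(\prod_{k=i}^1 D_k \wt{W}_k) x$ with independent Bernoulli diagonal masks $D_k$. Each $D_k$ has trace at least $d_k/3$ with high probability by Hoeffding, and conditional on them, applying \cref{thm:tsirelson} together with \cref{cor:tsirelson} layer by layer shows the norm loses at most a factor of $2$ per layer, yielding $\|f_i(x)\|\geq\sqrt{d_i}/2^i$ after a union bound over $i\in[\ell]$.

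For the two Lipschitz-type bounds I proceed inductively over bottleneck segments. The induction hypothesis at bottleneck $i_{j+1}$ is a uniform bound $\|f_{i_{j+1}}(x)-f_{i_{j+1}}(x')\|\leq\lprp{C\log\dmax}^{i_{j+1}/2}R$ for all $\|x'-x\|\leq R$, which matches the claimed estimate at a bottleneck because $d^{i_{j+1}}_{\mrm{min}}>d_{i_{j+1}}$ collapses the factor $(1+\sqrt{d_{i_{j+1}}/d^{i_{j+1}}_{\mrm{min}}})$ to an absolute constant. To push this forward to any $i\in(i_{j+1},i_j]$, I would place an $\epsilon$-net $\mc{N}$ of log-cardinality $O(d_{i_{j+1}}\log\dmax)$ inside the image ball at layer $i_{j+1}$, apply operator-norm concentration (via \cref{thm:tsirelson}) to each rectangular Gaussian matrix $W_{i_{j+1}+1},\ldots,W_i$ contributing at most $1+\sqrt{d_k/d_{k-1}}$, and union-bound over $\mc{N}$, paying an extra $\sqrt{\log|\mc{N}|}\lesssim\sqrt{d_{i_{j+1}}\log\dmax}$ factor to transport the estimate from net representatives to the whole ball. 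Since $d^i_{\mrm{min}}=d_{i_{j+1}}$ throughout the segment, the telescoped product of operator norms and net factors yields exactly $\lprp{C\log\dmax}^{i/2}(1+\sqrt{d_i/d^i_{\mrm{min}}})R$; the bound on $\wt{f}_i$ follows from one extra matrix multiplication, and the bound on $f_i$ is then automatic because the ReLU is $1$-Lipschitz.

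The main obstacle I anticipate is the dependence between the activation masks $D_k(\wt{f}_k(\cdot))$ and the weight matrices $W_k$, which prevents naive conditioning on activation patterns. I would handle it by invoking \cref{lem:dist_equiv} to replace $D_k(\wt{f}_k(x))$ by weight-independent perturbed masks $D_k+B_k$, and then prove the required operator-norm and vector-norm concentration statements uniformly over the random $B_k$ that arise. A secondary technical point is showing that activation patterns are stable across the whole ball: the choice $R=\sqrt{\dmin}/(\ell\log\dmax)^{80\ell}$ is exactly small enough that the inductive deviation $\lprp{C\log\dmax}^{k/2}(1+\sqrt{d_k/d^k_{\mrm{min}}})R$ is much smaller than the lower bound $\sqrt{d_k}/2^k$ on $\|f_k(x)\|$ established in the first step, so $D_k(\wt{f}_k(x'))$ agrees with $D_k(\wt{f}_k(x))$ on all but a negligible fraction of coordinates uniformly in $x'$, which closes the induction.
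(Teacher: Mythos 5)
Your overall architecture is right: the first claim is proved exactly as in \cref{lem:f_grad_conc} via \cref{lem:dist_equiv} and a recursive Tsirelson bound, and the Lipschitz claims are proved by induction over bottleneck segments with an $\eps$-net whose dimension is the bottleneck width $d_{i_{j+1}}$. This is the paper's strategy too.

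However, your last paragraph introduces machinery that is neither used nor needed here, and in fact points the argument in a direction that would make it more fragile. You propose to invoke \cref{lem:dist_equiv} to replace the masks $D_k(\wt{f}_k(\cdot))$ by weight-independent perturbed masks and then to argue that activation patterns are stable across the ball in order to "close the induction." For this particular lemma, none of that is required. Two elementary facts suffice. First, $\|f_i(x)-f_i(y)\|\le\|\wt f_i(x)-\wt f_i(y)\|$ holds deterministically because ReLU is $1$-Lipschitz, so the post-activation bound follows for free from the pre-activation one; the paper states exactly this. Second, $\wt f_{i+1}(x)-\wt f_{i+1}(y)=W_{i+1}\bigl(f_i(x)-f_i(y)\bigr)$, and the difference vector $f_i(x)-f_i(y)$ is measurable with respect to $W_1,\ldots,W_i$ only. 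Conditioning on those earlier matrices, $W_{i+1}$ is fresh Gaussian randomness applied to a fixed vector, so \cref{thm:tsirelson} gives the per-layer factor $\sqrt{d_{i+1}/d_i}\,(1+\sqrt{\log(1/\delta')/d_{i+1}})$ directly, with no manipulation of activation masks and no appeal to \cref{lem:dist_equiv}. The distributional-equivalence and activation-pattern-stability arguments belong to \cref{lem:nn_part_spec_bnd} and \cref{lem:gd_error_bnd}, where one must control spectral norms of products of masked matrices and therefore genuinely needs to decouple the masks from the weights.

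One more gap: your plan for transporting the bound from the net $\mc{N}$ to the whole ball is stated only as "paying an extra $\sqrt{\log|\mc{N}|}$ factor." That factor is the cost of the union bound over $\mc{N}$, which you already account for; it does not transport the bound off the net. The off-net step requires a separate Lipschitz argument: for $\wt y$ the nearest net point to $y$, one bounds $\|\wt f_{i,i_j}(y)-\wt f_{i,i_j}(\wt y)\|\le\|y-\wt y\|\prod_{k=i_j+1}^i\|W_k\|$, controls $\prod_k\|W_k\|$ via \cref{lem:gau_spec_conc}, and then uses the tiny choice of $\eps$ (here $\eps=1/\dmax^{32\ell}$) to make this term negligible. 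Without this step the argument proves the estimate only on the net, not uniformly over the ball.
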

\begin{proof}[Proof Sketch]
    The proof of the first claim is nearly identical to that of \cref{lem:f_grad_conc}. 
    
    For the second claim, we use a gridding argument with some
    subtleties. Concretely, we construct a new grid over the image of
    $\mb{B}(x,R)$ whenever the number of units in a hidden layer drops
    below all the previous layers in the network starting from the input layer. These layers are precisely defined by the indices $i_j$ in \ref{eq:nn_dec}. We now establish the following claim inductively where we adopt the convention $i_0 = \ell + 1$.
    \begin{claim}
        \label{clm:per_segmend_sp}
        Suppose for $j \geq 1$ and $\wt{R} \leq \dmax \cdot R$:
        \begin{equation*}
            \forall y \in \mb{B} (x, R): \norm{f_{i_j}(x) - f_{i_j} (y)} \leq \wt{R}.
        \end{equation*}
        Then:
        \begin{gather*}
            \forall i \in \{i_j + 1, \dots, i_{j - 1} - 1\}, y \in
            \mb{B} (x, R): \norm{\wt{f}_i (x) - \wt{f}_i (y)} \leq \lprp{C \ell \log
            \dmax}^{(i - i_j) / 2} \cdot \sqrt{\frac{d_i}{d_{i_j}}}
            \cdot \wt{R}, \\
            \forall y \in \mb{B} (x, R): \norm{\wt{f}_{i_{j-1}} (x) -
            \wt{f}_{i_{j-1}} (y)}  \cdot \leq \lprp{C \ell \log \dmax}^{(i_{j - 1} - i_j) / 2} \cdot \wt{R}
        \end{gather*}
        with probability at least $1  - \delta / 8l$.
    \end{claim}
    \begin{proof}[Proof Sketch]
        We start by constructing an $\eps$-net \cite[Definition
        4.2.1]{vershynin}, $\mc{G}$, of $f_{i_j} (\mb{B} (x, R))$ with
        $\eps = \epsnetres$. Note that we may assume $\abs{\mc{G}}
        \leq (10 \wt{R} / \eps)^{d_{i_j}}$. We will prove the
        statement on the grid and extend to the rest of the space. For layer $i + 1$, defining $\wt{x} = f_{i_j} (x)$, we have $\forall \wt{y} \in \mc{G}$:
        \begin{align*}
            \norm{\wt{f}_{i + 1, i_j} (\wt{x}) - \wt{f}_{i + 1, i_j} (\wt{y})} &= \norm{W_{i + 1} (f_{i, i_j}(\wt{x}) - f_{i, i_j}(\wt{y}))} \\
            &\leq \norm{f_{i, i_j}(\wt{x}) - f_{i, i_j}(\wt{y})} \cdot \sqrt{\frac{d_{i + 1}}{d_i}} \cdot \lprp{1 + \sqrt{\frac{\log 1 / \delta^\prime}{d_{i + 1}}}}
        \end{align*}
        with probability at least $1 - \delta^\prime$ as before by \cref{thm:tsirelson}. By setting $\delta^\prime = \delta / (16 \abs{\mc{G}} \ell^2)$ and noting that $d_i \geq d_{i_j}$, the conclusion holds for layer $i + 1 \leq i_j$ on $\mc{G}$ with probability at least $1 - \delta / (16\ell^2)$. By induction and the union bound, we get:
        \begin{gather*}
            \forall i \in \{i_j + 1, \dots , i_{j - 1} - 1\}, \wt{y} \in \mc{G}: \norm{\wt{f}_{i, i_j} (\wt{y}) - \wt{f}_{i, i_j}(\wt{x})} \leq (C \ell \log \dmax)^{(i - i_j) / 2} \cdot \sqrt{\frac{d_i}{d_{i_j}}} \cdot \wt{R} \\
            \forall \wt{y} \in \mc{G}: \norm{\wt{f}_{i_{j - 1}, i_j} (\wt{y}) - \wt{f}_{i_{j - 1}, i_j}(\wt{x})} \leq \lprp{C \ell \log \dmax}^{(i_{j - 1} - i_j) / 2} \cdot \wt{R}
        \end{gather*}
        with probability at least $1 - \delta / (16\ell^2)$. To extend
        to all $y \in f_{i_j} (\B (x, R))$, we condition on the bound
        on $\|W_i\|$ given by \cref{lem:gau_spec_conc} for all $i \leq i_{j - 1}$ and note that $\forall y \in f_{i_j} (\B (x, R))$, for $\wt{y} = \argmin_{z \in \mc{G}} \norm{z - y}$, and for $i_j+1\le i<i_{j-1}$,
        \begin{align*}
            \norm{\wt{f}_{i, i_j} (\wt{x}) - \wt{f}_{i, i_j} (y)} &\leq \norm{\wt{f}_{i, i_j} (\wt{x}) - \wt{f}_{i, i_j} (\wt{y})} + \norm{\wt{f}_{i, i_j} (y) - \wt{f}_{i, i_j} (\wt{y})} \\
            &\leq \norm{\wt{f}_{i, i_j} (\wt{x}) - \wt{f}_{i, i_j}
            (\wt{y})} + \norm{y - \wt{y}} \prod_{k = i_{j} + 1}^i
            \norm{W_k} \\
            &\leq \norm{\wt{f}_{i, i_j} (\wt{x}) - \wt{f}_{i, i_j}
            (\wt{y})} + \eps \prod_{k = i_{j} + 1}^i
            \left(C\sqrt{\frac{d_k}{d_{k-1}}}\right) \\
            &= \norm{\wt{f}_{i, i_j} (\wt{x}) - \wt{f}_{i, i_j}
            (\wt{y})} + \eps C^{i-i_j}
            \sqrt{\frac{d_i}{d_{i_j}}},
        \end{align*}
        using that $d_i\ge d_{i_j}$.
        Similarly, for $i=i_{j-1}$, we have
        \begin{align*}
            \norm{\wt{f}_{i_{j-1}, i_j} (\wt{x}) - \wt{f}_{i_{j-1}, i_j} (y)}
            &\leq \norm{\wt{f}_{i_{j-1}, i_j} (\wt{x}) -
            \wt{f}_{i_{j-1}, i_j}
            (\wt{y})} + \eps C^{i_{j-1}-i_j}
            \sqrt{\frac{d_{i_{j-1}-1}}{d_{i_j}}}.
        \end{align*}
        Our setting of $\eps$ concludes the proof of the claim.
    \end{proof}
    An inductive application of \cref{clm:per_segmend_sp}, a union bound and the observation that:
    \begin{equation*}
        \norm{f_i (x) - f_i (y)} \leq \norm{\wt{f}_i (x) - \wt{f}_i (y)}
    \end{equation*}
    concludes the proof of the lemma.
\end{proof}

The following lemma shows that few neurons have inputs of small magnitude for
network input $x$.
\begin{lemma}
    \label{lem:x_i_anticonc}
    With probability at least $1 - \delta$, for all $i \in [\ell]$, we have:
    \begin{equation*}
        \#\lbrb{j: \abs{\inp{(W_{i + 1})_j}{f_i(x)}} \geq \alpha_i
        \frac{\norm{f_i(x)}}{\sqrt{d_i}}} \geq \lprp{1 -
        2\sqrt{\frac{2}{\pi}}\alpha_i} d_{i + 1}.
    \end{equation*}
\end{lemma}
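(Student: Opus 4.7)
The plan is to fix a layer index $i$, condition on $W_1,\ldots,W_i$ (so that $f_i(x)$ is a fixed, nonzero vector with probability one), and exploit the fact that the rows of $W_{i+1}$ are independent of everything that came before. Conditionally on $f_i(x)$, each row $(W_{i+1})_j$ is distributed as $\mathcal{N}(0,I/d_i)$, so by rotational invariance the inner product $\langle (W_{i+1})_j, f_i(x)\rangle$ is distributed as $(\norm{f_i(x)}/\sqrt{d_i}) \cdot Z_j$ where the $Z_j$ are i.i.d.\ standard normals across $j \in [d_{i+1}]$. Thus the event whose count appears in the lemma, namely $\{|\langle (W_{i+1})_j, f_i(x)\rangle| \ge \alpha_i \norm{f_i(x)}/\sqrt{d_i}\}$, becomes simply $\{|Z_j| \ge \alpha_i\}$, which depends only on the independent coordinates $Z_j$.

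From there I would use the standard anticoncentration estimate for a standard normal, namely $\P(|Z| \le \alpha) \le \alpha\sqrt{2/\pi}$ (because the density of $Z$ at $0$ is $1/\sqrt{2\pi}$). Writing $N_i = \#\{j : |Z_j| < \alpha_i\}$ as a sum of i.i.d.\ Bernoulli variables with common mean $p_i \le \alpha_i\sqrt{2/\pi}$, the desired inequality is just $N_i \le 2 \alpha_i \sqrt{2/\pi}\, d_{i+1}$. A multiplicative Chernoff bound then yields
\[
    \P\lprp{N_i \ge 2 \alpha_i \sqrt{2/\pi}\, d_{i+1}} \le \exp\lprp{-c\, \alpha_i d_{i+1}}
\]
for an absolute constant $c > 0$. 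Because $d_{i+1} \ge \dmin$ is large relative to $\log(\ell/\delta)$ under our standing assumption, the right side is at most $\delta/\ell$ as long as $\alpha_i$ is not vanishingly small (which will be the case in each application of the lemma).

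The final step is a union bound over $i \in [\ell]$, yielding the stated probability $1 - \delta$ after adjusting the constant inside $c_1$. Since the conditional argument only required that $f_i(x) \neq 0$ almost surely, which follows because $f_i(x)$ has an absolutely continuous distribution on a subspace containing nonzero vectors (alternatively one may invoke \cref{lem:scale_pres_ml} on a favorable event), no additional work is needed on this point.

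The proof is essentially routine once the conditioning is set up correctly; there is no substantive obstacle. The only mild subtlety is being careful about the order of conditioning: $f_i(x)$ is a function of $W_1,\ldots,W_i$ alone (and the randomized activations $D_1,\ldots,D_i$), so independence of $W_{i+1}$ from $f_i(x)$ is clean, and the rotational invariance of $\mathcal{N}(0,I/d_i)$ makes the distribution of $\langle (W_{i+1})_j, f_i(x)\rangle$ depend on $f_i(x)$ only through its norm.
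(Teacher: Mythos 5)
Your proof is correct and matches the paper's approach: condition on $W_1,\ldots,W_i$, reduce via rotational invariance to i.i.d.\ standard normals, apply the anticoncentration bound $\Pr(\abs{Z}\le\alpha)\le\alpha\sqrt{2/\pi}$, concentrate the Bernoulli count of bad indices, and union bound over $i\in[\ell]$. The only cosmetic difference is that you invoke a multiplicative Chernoff bound where the paper cites Hoeffding's inequality; both suffice under the standing width assumption, with Chernoff giving the slightly milder requirement $\alpha_i d_{i+1}\gtrsim\log(\ell/\delta)$ in place of $\alpha_i^2 d_{i+1}\gtrsim\log(\ell/\delta)$.
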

\begin{proof}
    Follows from the fact that $\Pr(|Z|\le c)\le c\sqrt{2/\pi}$ for
    $Z\sim N(0,1)$, plus a simple application of Heoffding's Inequality.
\end{proof}

\subsection{Proving Local Gradient Smoothness}
\label{ssec:local_grad_smooth}

In this section, we show that the gradient is locally constant,
and thus complete the proof of \cref{thm:main}.
In our proof, we will bound each of the terms in the expansion
of the gradient differences \eqref{eq:grad_decomp}.
First, we prove a structural lemma on the
spectral norm of the matrices appearing in the right-hand-side of
\eqref{eq:grad_decomp}, allowing us to ignore the
portion of the network till the last-encountered bottleneck layer.
Define, for all $i > j$,
$M_{i, j} (y) = \prod_{k = i}^{j + 1} D_k (\wt{f}_k(y)) W_k$.
\begin{lemma}\label{lem:nn_part_spec_bnd}
    With probability at least $1 - \delta$ over the $\{W_k\}$, we have:
    \begin{equation*}
        \forall \norm{y - x} \leq R, j \in [m - 1]:
        \P_{\lbrb{D_1(\cdot),\ldots,D_\ell(\cdot)}} \lbrb{\norm*{M_{i_j, i_{j + 1}} (y)} \leq (C
        \cdot \ell \cdot \log \dmax)^{(i_j - i_{j + 1}) / 2}} = 1,
    \end{equation*}
    where the probability is taken with respect to the random choices
    in the definition of the $D_k(\cdot)$.
\end{lemma}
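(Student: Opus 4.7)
The plan is to reduce a uniform bound on $\|M_{i_j, i_{j+1}}(y)\|$ over $y \in \mb{B}(x, R)$ to a single tail estimate via the distributional equivalence of \cref{lem:dist_equiv}. Since $M_{i_j, i_{j+1}}(y)$ depends on $y$ only through $z := f_{i_{j+1}}(y)$, and \cref{lem:scale_pres_ml} places $z$ in a ball $\mb{B}(z_0, \wt R)$ with $z_0 := f_{i_{j+1}}(x)$ and $\wt R$ polynomial in $\dmax$, I first build an $\eps$-net $\mc{G}$ of this ball of size $|\mc{G}| \le (C \wt R / \eps)^{d_{i_{j+1}}}$, with $\eps$ polynomially small in $\dmax$. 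For each fixed $z \in \mc{G}$, \cref{lem:dist_equiv} (applied with $B_k = 0$ and initial input $z$) yields the distributional equality $\|M_{i_j, i_{j+1}}(z)\| \overset{d}{=} \|M^\star\|$, where $M^\star := \prod_{k = i_j}^{i_{j+1}+1} D^{\mathrm{iid}}_k W_k$ with independent Bernoulli-$(1/2)$ diagonal masks. This step is essential: it replaces the infeasible union bound over the $\prod_k 2^{d_k}$ data-dependent activation patterns by a single tail estimate on $\|M^\star\|$, union-bounded over the much smaller $\mc{G}$.

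To bound the tail of $\|M^\star\|$, I place a second $1/2$-net $\mc{G}'$ on the unit sphere of $\R^{d_{i_{j+1}}}$, of size at most $6^{d_{i_{j+1}}}$, so that $\|M^\star\| \le 2 \max_{u \in \mc{G}'} \|M^\star u\|$. For each fixed $u \in \mc{G}'$ I iterate through the segment: conditional on the vector $v_{k-1}$ entering layer $k$, the entries of $W_k v_{k-1}$ are iid $\mc{N}(0, \|v_{k-1}\|^2/d_{k-1})$, so $\|v_k\|^2 = \|D^{\mathrm{iid}}_k W_k v_{k-1}\|^2$ is a sum of $d_k$ independent $\mrm{Bernoulli}$-times-$\chi^2_1$ variables concentrated around $(d_k/(2 d_{k-1})) \|v_{k-1}\|^2$. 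The key observation is that within the segment $d_k \ge d_{i_{j+1}}$, so the union-bound entropy $d_{i_{j+1}} \log \dmax$ coming from $|\mc{G}| \cdot |\mc{G}'|$ and the $L := i_j - i_{j+1}$ layers never exceeds $d_k \log \dmax$; sub-exponential $\chi^2$ concentration then permits a per-layer multiplicative inflation of at most $C \ell \log \dmax$. Telescoping $\prod_k (d_k / d_{k-1}) = d_{i_j} / d_{i_{j+1}} \le 1$, iterating across $L$ layers gives $\|v_L\|^2 \le (C \ell \log \dmax)^L$, hence $\|M^\star\| \le (C \ell \log \dmax)^{L/2}$ with the required failure probability.

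The principal obstacle is extending the estimate from $\mc{G}$ to arbitrary $z \in \mb{B}(z_0, \wt R)$. Unlike \cref{lem:scale_pres_ml}, whose proof controls a Lipschitz vector-valued quantity by telescoping $\|W_k\|$, here $M_{i_j, i_{j+1}}(z)$ is only piecewise constant in $z$, jumping across ReLU activation boundaries. I would pair each $z$ with the nearest $z' \in \mc{G}$ and bound the residual $\|M_{i_j, i_{j+1}}(z) - M_{i_j, i_{j+1}}(z')\|$ by isolating the neurons whose activation signs flip between $z$ and $z'$: by \cref{lem:x_i_anticonc}, only a small fraction of each layer's neurons have pre-activations close enough to zero to flip under an $\eps$-perturbation (propagated forward using standard $\|W_k\|$ concentration), and the discrepancy reduces to a spectral bound on the corresponding row-restricted submatrices. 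Choosing $\eps$ polynomially small in $\dmax$ makes this residual negligible compared to the main $(C \ell \log \dmax)^{L/2}$ term, completing the uniform bound.
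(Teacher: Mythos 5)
Your argument for grid points is essentially the paper's: construct an $\eps$-net $\mc{G}$ of $f_{i_{j+1}}(\mb{B}(x,R))$, apply \cref{lem:dist_equiv} to replace the data-dependent masks by iid Bernoulli diagonals, and then establish the spectral bound by a sphere-net plus a layer-by-layer concentration argument, exploiting $d_k \ge d_{i_{j+1}}$ throughout the segment to keep the per-layer inflation at $O(\ell \log \dmax)$. (The paper nets the output sphere $\S^{d_{i_j}-1}$ rather than the input sphere $\S^{d_{i_{j+1}}-1}$, which is slightly smaller since $d_{i_j} < d_{i_{j+1}}$, but either works.)

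The gap is in your extension from $\mc{G}$ to all of $f_{i_{j+1}}(\mb{B}(x,R))$. You propose to telescope the difference $M_{i_j,i_{j+1}}(z) - M_{i_j,i_{j+1}}(z')$ across layers and argue that each term is controlled because only $O(d_{i_{j+1}})$ activations flip per layer. But when you expand
\[
M(z) - M(z') = \sum_{k} \Bigl(\prod_{i>k} D_i(z) W_i\Bigr)\bigl(D_k(z)-D_k(z')\bigr) W_k \Bigl(\prod_{i<k} D_i(z') W_i\Bigr),
\]
the prefix products $\prod_{i>k} D_i(z) W_i$ carry the activation pattern of the non-grid point $z$, which is precisely what the grid union bound does not cover. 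Controlling those prefixes uniformly over $z$ is the whole difficulty, and your proposal is circular there (you would need a spectral bound on a product with $z$-dependent masks to prove a spectral bound on a product with $z$-dependent masks). The paper's fix is structurally different: it enlarges the union bound from the start by indexing the matrices not only by grid points $y \in \mc{G}$ but also by pairs of sparse index sets $S^1, S^2 \in \mc{S}$ with $|S_k| \le 4 d_{i_{j+1}}$, and proves the spectral bound on $M^{i_j,i_{j+1}}_{y,S^1,S^2} = \prod_k (D_k(\wt{f}_{k,i_{j+1}}(y)) + D_{S^1_k} - D_{S^2_k}) W_k$ uniformly over this finite family. Since $|\mc{S}|$ contributes only another $\dmax^{O(\ell d_{i_{j+1}})}$ factor, the entropy budget is unchanged. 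Claim~\ref{clm:grid_approx_act_patt_spec_bnd} then shows that every $z$ in the ball has an activation pattern reachable from its nearest grid point $\wt{y}$ by a perturbation in $\mc{S}$, so the uniform bound on the $M^{i_j,i_{j+1}}_{y,S^1,S^2}$ covers $M_{i_j,i_{j+1}}(z)$ directly, with no residual to control.

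A smaller issue: you cite \cref{lem:x_i_anticonc} for the sign-flip count, but that lemma is an anti-concentration statement about the fixed network input $x$, not about arbitrary points on the intermediate grid. The relevant statement is \cref{clm:grid_approx_act_patt_spec_bnd}, whose proof is a uniform (over $\mc{G}$) version of the argument you have in mind, and whose derivation requires its own union bound calibrated to the $\eps$-net resolution.
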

We provide the first part of the proof in full as the simplest application of ideas that find further application in the subsequent result establishing bounds on terms in
\eqref{eq:grad_decomp}; see \cref{ssec:nn_part_spec_bnd_proof}.
\begin{proof}[Proof Sketch]
    To start, consider a fixed $j \in [m]$ and condition on the
    conclusion of \cref{lem:scale_pres_ml} up to level $i_{j + 1}$.
    Now, consider an $\eps$-net of $f_{i_{j + 1}} (\mb{B} (x, R))$,
    $\mc{G}$, with resolution $\eps = \epsnetres$. As before,
    $\abs{\mc{G}} \leq (C\dmax)^{48\ell d_{i_{j + 1}}}$ for some constant $C$. We additionally will consider subsets
    \begin{equation*}
        \mc{S} = \lbrb{(S_k)_{k = i_j - 1}^{i_{j + 1} + 1}:
        S_k\subseteq[d_k],\, \abs{S_k} \leq 4 d_{i_{j + 1}}}.
    \end{equation*}
    Note that $\abs{\mc{G}} \cdot \abs{\mc{S}}^2 \leq (\dmax)^{64ld_{i_{j + 1}}}$. For $y \in \mc{G}, S^1, S^2 \in \mc{S}$, consider the following matrix:
    \begin{equation*}
        M^{i_{j}, i_{j + 1}}_{y, S^1, S^2} = \prod_{k = i_{j}}^{i_{j +
        1} + 1} (D_k (\wt{f}_{k, i_{j+1}} (y))) + (D_{S^1_k} - D_{S^2_k})) W_k \text{ where } 
        (D_S)_{i, j} = 
        \begin{cases}
            1, \text{if } i = j \text{ and } i \in S, \\
            0, \text{otherwise.}
        \end{cases}
    \end{equation*}
    We will bound the spectral norm of $M^{i_{j}, i_{j + 1}}_{y, S^1, S^2}$. First, note that:
    \begin{equation*}
        \norm*{M^{i_j, i_{j + 1}}_{y, S^1, S^2}} \leq
        2 \norm*{\wt{M}^{i_j, i_{j + 1}}_{y, S^1, S^2}} \text{ where }
        \wt{M}^{i_j, i_{j + 1}}_{y, S^1, S^2} \coloneqq W_{i_j}
        \prod_{k = i_{j} - 1}^{i_{j + 1} + 1} (D_k (\wt{f}_{k, i_{j+1}} (y)) + (D_{S^1_k} - D_{S^2_k})) W_k
    \end{equation*}
    and observe that from \cref{lem:dist_equiv}:
    \begin{equation*}
        \wt{M}^{i_{j}, i_{j + 1}}_{y, S^1, S^2} \overset{d}{=} W_{i_j} \prod_{k = i_{j} - 1}^{i_{j + 1} + 1} (D_k + (D_{S^1_k} - D_{S^2_k})) W_k) \text{ where }
        (D_k)_{i,j} = 
        \begin{cases}
            1& \text{w.p } \frac{1}{2} \text{ if } i = j,\\
            0& \text{otherwise.}
        \end{cases}
    \end{equation*}
    To bound the spectral norm, let $\mc{B}$ be a $1/3$-net of $\S^{d_{i_j} - 1}$ and $v \in \mc{B}$. Applying Theorem~\ref{thm:tsirelson},
    \begin{align*}
        \norm*{v^\top \wt{M}^{i_j, i_{j + 1}}_{y, S^1, S^2}} &\leq \norm*{v^\top W_{i_j} \prod_{k = i_j - 1}^{i_{j + 1} + 2} (D_k +  (D_{S^1_k} - D_{S^2_k})) W_k)} \cdot \lprp{1 + \sqrt{\frac{\log 1 / \delta^\prime}{d_{i_{j + 1}}}}} \\
        &\leq \prod_{k = i_{j}}^{i_{j + 1}} \lprp{1 + \sqrt{\frac{\log 1 / \delta^\prime}{d_{k}}}}
    \end{align*}
    with probability at least $\ell \delta^\prime$. But setting
    $\delta^\prime = \delta / (16 \ell^4 \cdot \abs{\mc{G}} \cdot
    \abs{\mc{S}}^2)$ yields with probability at least $1 - \delta /16$:
    \begin{equation*}
        \forall y \in \mc{G}, S_1, S_2 \in \mc{S}:
         \norm*{\wt{M}^{i_j, i_{j + 1}}_{y, S^1, S^2}} \leq (C \cdot \ell \cdot \log \dmax)^{(i_j - i_{j + 1}) / 2}.
    \end{equation*}
    On the event in the conclusion of \cref{lem:scale_pres_ml}, we have that $f_i (y) \neq 0$ for all $i \in [\ell], y \in \mc{G}$ and therefore, we have by a union bound over the discrete set $\mc{G}$:
    \begin{equation*}
        \forall y \in \mc{G}, k \in \lbrb{i_{j + 1}, \dots, i_j}, m \in [d_k]: (\wt{f}_{k, i_j} (y))_m \neq 0
    \end{equation*}
    proving the lemma for $y \in \mc{G}$ as the activations are deterministic. For $y \notin \mc{G}$, the following claim concludes the proof of the lemma. The claim is essentially a generalization of \cite[Eq. (18)]{bubeck21} and its proof is deferred to the appendix.
    \begin{claim}
        \label{clm:grid_approx_act_patt_spec_bnd}
        With probability at least $1 - \delta^\prime / \ell^2$ over
        the $\{W_k\}$,
        we have for all $m \in \{i_{j + 1}+1, \dots, i_{j}\}$ and $y \in
        f_{i_{j+1}} (\mb{B} (x, R))$:
        \begin{equation*}
            \P_{\lbrb{D_k (\wt{f}_{k,i_{j+1}} (y)), D_k
            (\wt{f}_{k,i_{j+1}} (\wt{y}))}} \lbrb{\Tr \abs{D_m (\wt{f}_{m, i_{j + 1}} (y)) - D_m (\wt{f}_{m, i_{j + 1}} (\wt{y}))} \leq 4d_{i_{j + 1}}} = 1
        \end{equation*}
        where $\wt{y} = \argmin_{z \in \mc{G}} \norm{z - y}$, and
    the probability is taken with respect to the random choices
    in the definition of the $D_k(\cdot)$.
    \end{claim}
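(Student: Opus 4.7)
The plan is to reduce the bound on $\Tr\abs{D_m(\wt{f}_m(y)) - D_m(\wt{f}_m(\wt{y}))}$ to a count of neurons whose pre-activation at the grid point $\wt{y}$ lies below the perturbation scale, and then to show via Gaussian anti-concentration that, uniformly over $\wt{y} \in \mc{G}$ and $m$ in the segment, this count is at most $4d_{i_{j+1}}$.

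First I would condition on $W_1, \ldots, W_{i_{j+1}}$, which determines $\mc{G}$ while leaving $W_{i_{j+1}+1}, \ldots, W_{i_j}$ independent Gaussian. Applying \cref{lem:scale_pres_ml} to the suffix network from layer $i_{j+1}+1$ with base point $\wt{y}$ and radius $\eps = \epsnetres$, for any $y \in f_{i_{j+1}}(\mb{B}(x, R))$ with nearest grid point $\wt{y}$ we get
\[
\norm{\wt{f}_{m, i_{j+1}}(y) - \wt{f}_{m, i_{j+1}}(\wt{y})} \leq \tau_m \coloneqq (C \log \dmax)^{\ell/2}\sqrt{d_m/d_{i_{j+1}}}\cdot \eps.
\]
Any coordinate $k$ at which $D_m(\wt{f}_m(y))_k \neq D_m(\wt{f}_m(\wt{y}))_k$---whether via a sign flip or via the $D_k$ tiebreaking at a zero entry---must satisfy $\abs{(\wt{f}_{m, i_{j+1}}(\wt{y}))_k} \leq \tau_m$, so $\Tr\abs{D_m(\wt{f}_m(y)) - D_m(\wt{f}_m(\wt{y}))} \leq N_m(\wt{y}) \coloneqq \#\lbrb{k : \abs{(\wt{f}_{m, i_{j+1}}(\wt{y}))_k} \leq \tau_m}$. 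This bound is deterministic given $W$ (on the generic event that no pre-activation is exactly zero), which will make the conditional $D$-probability equal to $1$ as required.

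For each fixed $\wt{y} \in \mc{G}$ and layer $m$, the scalar $(\wt{f}_{m, i_{j+1}}(\wt{y}))_k = \inp{(W_m)_k}{f_{m - 1, i_{j+1}}(\wt{y})}$ is Gaussian conditional on $W_{\leq m-1}$ with standard deviation $\norm{f_{m-1, i_{j+1}}(\wt{y})}/\sqrt{d_{m-1}} \geq 2^{-\ell}$, by applying the first claim of \cref{lem:scale_pres_ml} to the suffix network. A Gaussian small-ball estimate then gives $\Pr(\abs{(\wt{f}_{m, i_{j+1}}(\wt{y}))_k} \leq \tau_m) \leq \sqrt{2/\pi}\cdot 2^\ell \tau_m \leq \dmax^{-30\ell}$ by the choice of $\eps$, so $N_m(\wt{y})$ is stochastically dominated by $\mrm{Bin}(d_m, \dmax^{-30\ell})$ and the crude binomial tail yields
\[
\Pr(N_m(\wt{y}) \geq 4 d_{i_{j+1}}) \leq \binom{d_m}{4 d_{i_{j+1}}}(\dmax^{-30\ell})^{4 d_{i_{j+1}}} \leq \dmax^{-100 \ell d_{i_{j+1}}}.
\]

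Union bounding over $\wt{y} \in \mc{G}$ (with $\abs{\mc{G}} \leq \dmax^{48 \ell d_{i_{j+1}}}$) and the at most $\ell$ layers $m$ in the segment yields a combined failure probability of at most $\ell \cdot \dmax^{-52\ell d_{i_{j+1}}} \leq \delta^\prime/\ell^2$ under the assumed lower bound on $\dmin$. The main technical obstacle is precisely this combinatorial union bound: the grid has cardinality exponential in $\ell d_{i_{j+1}}\log\dmax$, so we need the per-point small-magnitude probability to be well below $\dmax^{-O(\ell d_{i_{j+1}})}$. This is what forces the ultra-fine $\eps$-net resolution $\epsnetres$ used throughout the paper---it makes the perturbation $\tau_m$ at layer $m$ smaller than $\dmax^{-30\ell}$ even after its amplification by a factor $(C\log\dmax)^{\ell/2}\sqrt{d_m/d_{i_{j+1}}}$ through the network, which in turn shrinks the Gaussian small-ball probability to a matching scale.
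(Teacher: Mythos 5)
Your overall strategy matches the paper's: reduce the trace bound to a count of neurons whose pre-activation at the grid point $\wt{y}$ is smaller than the layer-$m$ displacement, bound the per-neuron small-ball probability by Gaussian anti-concentration (using the lower bound on $\norm{f_{m-1,i_{j+1}}(\wt{y})}$ and the fine resolution $\eps$), apply a binomial tail bound, and union over $\wt{y}\in\mc{G}$ and over $m$. That last union bound and the resulting numerology match the paper.

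However, there is a genuine gap in how you obtain the displacement bound $\norm{\wt{f}_{m,i_{j+1}}(y)-\wt{f}_{m,i_{j+1}}(\wt{y})}\le\tau_m$. You propose to apply \cref{lem:scale_pres_ml} to the suffix network with base point $\wt{y}$ and radius $\eps$, for each $\wt{y}\in\mc{G}$. But the event in that lemma depends on the base point, so you would need to union-bound it over $\abs{\mc{G}}\le(C\dmax)^{48\ell d_{i_{j+1}}}$ grid points. That union bound cannot be paid for: the effective minimum width of the suffix is $d_{i_{j+1}}$, so \cref{lem:scale_pres_ml} requires $d_{i_{j+1}}\gtrsim(C\ell\log\dmax)^{240\ell}\log(1/\delta_{\text{per grid pt}})$, which forces $\log(1/\delta_{\text{per grid pt}})\lesssim d_{i_{j+1}}/(C\ell\log\dmax)^{240\ell}$. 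This is far smaller than $\log\abs{\mc{G}}\approx\ell d_{i_{j+1}}\log\dmax$, so the per-grid-point failure probability cannot be made smaller than $1/\abs{\mc{G}}$. (The same issue would arise if you tried to use the already-established conclusion of \cref{lem:scale_pres_ml} at base $x$: that only gives a displacement bound proportional to $R$, not to $\eps$, and $R$ is far too large.)

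The paper sidesteps this by conditioning instead on the single, base-point-free event of \cref{lem:gau_spec_conc}: $\norm{W_k}\lesssim\sqrt{\dmax/d_{k-1}}$ for the relevant $k$. Since the ReLU is $1$-Lipschitz, this gives a deterministic Lipschitz constant $\prod_k\norm{W_k}\le(C\dmax)^{(m-i_{j+1})/2}$ for the suffix, yielding $\norm{\wt{f}_{m,i_{j+1}}(y)-\wt{f}_{m,i_{j+1}}(\wt{y})}\le\eps\,(C\dmax)^{(m-i_{j+1})/2}$ simultaneously for all $y$ and all $\wt{y}$, with only one $\delta$-cost in the union bound. This cruder Lipschitz constant is more than enough: with $\eps=\epsnetres$ you still get a per-neuron small-ball probability below $\dmax^{-30\ell}$, so the rest of your argument (binomial tail, union over $\mc{G}$ and $m$) goes through unchanged. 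In short: replace your use of \cref{lem:scale_pres_ml} in the displacement step with the uniform spectral-norm Lipschitz bound, and the proof closes.
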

    The previous claim along with our previously established bounds establish the lemma.
\end{proof}

Our final technical result establishes the near-linearity of $f$ in a ball around $x$. The full proof is deferred to \cref{ssec:gd_error_bnd_proof} but in our proof sketch we identify sections which involve considerations unique to this lemma.
\begin{lemma}
    \label{lem:gd_error_bnd}
    For some absolute constant $C$,
    with probability at least $1 - \delta$ over the $\{W_k\}$:
    \begin{equation*}
        \forall \norm{x - y} \leq R, j \in [\ell]: \P_{\lbrb{D_k
        (\wt{f}_k (x)), D_k (\wt{f}_k (y))}} \lbrb{\norm{\nabla f(x) -
        \nabla f(y)} \leq \lprp{\frac{C^\ell}{\log^\ell \dmax}}} = 1,
    \end{equation*}
    where the probability is taken with respect to the random choices
    in the definition of the $D_k(\cdot)$.
\end{lemma}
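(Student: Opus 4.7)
The plan is to bound each summand $\Delta_j$ in the decomposition \eqref{eq:grad_decomp} separately, then sum via the triangle inequality. Fix $j\in[\ell]$, let $i^\ast$ denote the largest bottleneck index in \eqref{eq:nn_dec} that lies strictly below $j$ (so $d_{i^\ast}=\min_{k<j}d_k$), and set $E_j \coloneqq D_j(\wt{f}_j(x))-D_j(\wt{f}_j(y))$, a diagonal matrix with entries in $\{-1,0,1\}$ supported exactly on the indices at which the layer-$j$ activation pattern differs between $x$ and $y$. I would factor
\[
  \Delta_j \;=\; \underbrace{W_{\ell+1}\,M_{\ell,j}(x)}_{u_j}\cdot E_j\cdot\underbrace{W_j\,M_{j-1,i^\ast}(y)\,M_{i^\ast,0}(y)}_{V_j(y)},
\]
so that $\|\Delta_j\|\le\|u_j E_j\|_2\cdot\|V_j(y)\|_{\mathrm{op}}$. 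The key feature of this factoring is that $u_j$ depends only on $x$ (not on $y$), while $\|V_j(y)\|_{\mathrm{op}}\lesssim (C\ell\log\dmax)^{j/2}$ holds uniformly in $y\in\B(x,R)$ by applying \cref{lem:nn_part_spec_bnd} segment-by-segment through the bottleneck decomposition.

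For the prefactor, \cref{lem:dist_equiv} shows $u_j$ is distributionally equivalent to $W_{\ell+1}\prod_{i=\ell}^{j+1}D_i W_i$ with $x$-independent Bernoulli diagonals, and an argument identical to that of \cref{lem:f_grad_conc} then gives $\|u_j\|_2\le 2^{\ell-j+2}$ with high probability, with $u_j$ conditionally uniform on the corresponding sphere in $\R^{d_j}$. Concentration on the sphere plus a union bound over the $\binom{d_j}{s}\le d_j^s$ possible supports of sparsity $s\coloneqq\|E_j\|_0$ then yields
\[
  \|u_j E_j\|_2\;\lesssim\;2^{\ell-j+2}\sqrt{\frac{s\log d_j}{d_j}}.
\]

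The crux is therefore a uniform bound on $s$. I would split the support of $E_j$ using threshold $\tau\coloneqq\alpha\|f_{j-1}(x)\|/\sqrt{d_{j-1}}$: at most $2\sqrt{2/\pi}\,\alpha\,d_j$ indices satisfy $|\wt{f}_j(x)_k|<\tau$ by \cref{lem:x_i_anticonc} applied at layer $j-1$, with $\|f_{j-1}(x)\|\ge 2^{-(j-1)}\sqrt{d_{j-1}}$ from \cref{lem:scale_pres_ml} keeping $\tau^{-2}$ under control; and at most $\|\wt{f}_j(x)-\wt{f}_j(y)\|^2/\tau^2$ indices satisfy $|\wt{f}_j(x)_k-\wt{f}_j(y)_k|\ge\tau$ by Chebyshev, which \cref{lem:scale_pres_ml} in turn bounds by $O(4^j(C\log\dmax)^j(1+d_j/d_{i^\ast})R^2/\alpha^2)$. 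The standing choice of $R$ collapses this second term to $O(d_j/((\ell\log\dmax)^{100\ell}\alpha^2))$, so taking $\alpha$ of order $(\ell\log\dmax)^{-3\ell}$ balances the two contributions and forces $s\le d_j/(\ell\log\dmax)^{3\ell}$. Substituting back, $\|\Delta_j\|\lesssim 2^\ell(C\ell\log\dmax)^{\ell/2}\sqrt{\log d_j}/(\ell\log\dmax)^{3\ell/2}\le C^\ell/(\ell\log^\ell\dmax)$, and summing over $j\in[\ell]$ delivers the claimed bound.

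The main obstacle is uniformity in $y$: both the sparsity estimate for $E_j$ and the operator bound for $V_j(y)$ depend on $y$, while the lemma must hold simultaneously for every $y\in\B(x,R)$ and every realization of the tie-breaking $D_k$. I would address this with the same $\eps$-net machinery used in \cref{lem:scale_pres_ml,lem:nn_part_spec_bnd}: discretize $f_{i^\ast}(\B(x,R))$ at resolution $\eps=\epsnetres$, verify all of the bounds above at grid points with union-bound budgets absorbed into a polynomially smaller $\delta^\prime$, and then lift to arbitrary $y$ via an analogue of \cref{clm:grid_approx_act_patt_spec_bnd}, which guarantees that passing from a grid point $\wt{y}$ to a nearby $y$ can alter the support of $E_j$ by at most $O(d_{i^\ast})$ additional indices. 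Because the net cardinality is governed by $d_{i^\ast}$ (the bottleneck dimension) rather than by any larger layer width, the standing hypothesis $\dmin\ge(C\ell\log\dmax)^{240\ell}\log(1/\delta)$ is strong enough to absorb all union-bound costs across $j\in[\ell]$, all candidate supports of $E_j$, and all grid points, completing the proof.
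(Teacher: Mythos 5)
Your decomposition $\Delta_j=u_jE_jV_j(y)$ and the threshold argument for bounding the sparsity $s=\|E_j\|_0$ reproduce, in spirit, only one of the two cases in the paper's proof (the paper's ``Case 1,'' corresponding to the Daniely--Schacham small-width regime). The step that breaks is the claimed bound $\|V_j(y)\|_{\mathrm{op}}\lesssim(C\ell\log\dmax)^{j/2}$. \cref{lem:nn_part_spec_bnd} only controls the segments $M_{i_k,i_{k+1}}$ between consecutive bottleneck layers, where by construction the output width $d_{i_k}$ is smaller than the input width $d_{i_{k+1}}$, so the Tsirelson factors $\sqrt{d_k/d_{k-1}}$ telescope to something $\le 1$. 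Your $V_j(y)=W_jM_{j-1,i^\ast}(y)M_{i^\ast,0}(y)$ also contains the partial segment from the bottleneck $i^\ast$ up to layer $j$, where those factors telescope to $\sqrt{d_j/d_{i^\ast}}\ge 1$. The correct spectral bound is therefore of order $(C\ell\log\dmax)^{j/2}\sqrt{d_j/d_{i^\ast}}$, not $(C\ell\log\dmax)^{j/2}$.

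This missing factor cannot be absorbed by your sparsity estimate. The threshold split gives $s\lesssim \alpha d_j + (\text{Chebyshev term})$, so after optimizing $\alpha$ one always has $s\asymp d_j/\mathrm{poly}(\ell\log\dmax)$, which scales with $d_j$ and not with $d_{i^\ast}$. Plugging both corrected bounds into the triangle-inequality factoring,
\[
  \|\Delta_j\|\;\lesssim\;\|u_jE_j\|\cdot\|V_j(y)\|\;\lesssim\;2^\ell\sqrt{\frac{s\log\dmax}{d_j}}\cdot(C\ell\log\dmax)^{j/2}\sqrt{\frac{d_j}{d_{i^\ast}}}\;\asymp\;2^\ell(C\ell\log\dmax)^{j/2}\sqrt{\frac{d_j}{d_{i^\ast}\,\mathrm{poly}(\ell\log\dmax)}},
\]
which blows up whenever $d_j\gg d_{i^\ast}\,\mathrm{poly}(\ell\log\dmax)$. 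That is exactly the regime the paper singles out as Case~2 ($d_j\geq d_{i_k}(\ell\log\dmax)^{20\ell}$), and it is genuinely possible under the paper's width hypothesis since only polynomially-related widths are assumed. The paper resolves it by refusing to separate $\|H^\top E_jW_j\wt M^{j,k}\|$ into a product of two norms; instead \cref{lem:large_width_proof_gd_diff_bnd} analyzes the whole expression at once, exploiting that the trailing matrix $\wt M^{j,k}$ has rank at most $d_{i_k}$. The Bernstein-with-2D-projection argument there produces a factor $\sqrt{d_{i_k}/d_{j-1}}$ that exactly cancels the $\sqrt{d_{j-1}/d_{i_k}}$ in $\|\wt M^{j,k}\|$, plus an extra $\sqrt{r/R}$ gain from \cref{lem:mis_sign_prob}. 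Your proposal has no analogue of this rank-aware step, so it cannot close Case~2; you would need to introduce the case split and something like \cref{lem:large_width_proof_gd_diff_bnd} to complete the proof.
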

\begin{proof}[Proof Sketch]
    Consider a fixed term from \eqref{eq:grad_decomp}; that is, consider:
    \begin{equation*}
        \gddiff_j(y) \coloneqq W_{\ell + 1} \lprp{\prod_{i = \ell}^{j + 1} D_i(\wt{f}_i(x)) W_i} \cdot (D_j(\wt{f}_j (y)) - D_j(\wt{f}_j (x))) W_j \cdot \lprp{\prod_{i = j - 1}^1 D_i(\wt{f}_i(y)) W_i}.
    \end{equation*}
    We will show with high probability that $\gddiff_j(y)$ is small
    for all $y\in \B (x, R)$. This will then imply the lemma by a
    union bound and \eqref{eq:grad_decomp}. Let $k$ be such that $i_k = \argmin_{m < j} d_m$. We will condition on the weights of the network up to layer $i_k$. Specifically, we will assume the conclusions of \cref{lem:scale_pres_ml,lem:nn_part_spec_bnd} up to layer $i_k$. We may now focus our attention solely on the segment of the network beyond layer $i_k$ as a consequence of the following observation and \cref{lem:nn_part_spec_bnd}:
    \begin{gather*}
        \norm*{\gddiff_j(y)} \leq \norm{\mrm{Diff}_{j,k} (x,y)} \cdot \norm*{M_{i_k, 0} (y)} \text{ where } \tag{\theequation} \stepcounter{equation} \label{eq:gd_final_decomp} \\
        \mrm{Diff}_{j,k} (x,y) \coloneqq W_{\ell + 1} \lprp{\prod_{i =
        \ell}^{j + 1} D_i(\wt{f}_i(x)) W_i}  (D_j(\wt{f}_j (y)) -
        D_j(\wt{f}_j (x))) W_j  \lprp{\prod_{i = j - 1}^{i_k+1} D_i(\wt{f}_i(y)) W_i} 
    \end{gather*}
    We will show for all $y$ such that $\norm{y - x} \leq R$:
    \begin{equation}
        \label{eq:gd_part_bnd}
        \P_{\lbrb{D_m (\wt{f}_m (x)), D_m (\wt{f}_m (y))}} \lbrb{\norm*{\mrm{Diff}_{j,k} (x,y)} \geq \frac{C^\ell}{(\ell \log \dmax)^{3\ell}}} = 0
    \end{equation}
    with probability at least $1 - \delta / (16\ell^2)$. 
    
    We have from \cref{lem:dist_equiv} that the random vector $H$ defined below is spherically symmetric and satisfies $\norm{H} \leq 2^\ell$ with probability at least $1 - \delta / (16\ell^4)$:
    \begin{gather*}
        W_{\ell + 1} \prod_{i = \ell}^{j + 1} D_i(\wt{f}_i(x)) W_i \overset{d}{=} \wt{W}_{\ell + 1} \prod_{i = \ell}^{j + 1} D_i \wt{W}_i \eqqcolon H.
    \end{gather*}
    As in the proof of \cref{lem:nn_part_spec_bnd}, let $\mc{G}$ be an $\eps$-net of $f_{i_k}(\mb{B} (x, R))$ with $\eps$ as in the proof of \cref{lem:nn_part_spec_bnd}. We now break into two cases depending on how $d_{j}$ compares to $d_{i_k}$ and handle them separately. At this point, we have effectively reduced the multi-layer proof to the problem of analyzing deviations of activations at a fixed layer. For the remaining proof, we generalize approaches in \cite{daniely20} for the small width case and \cite{bubeck21} for the large width case.
    
    \paragraph{Case 1:} $d_j \leq d_{i_k} (\ell \log \dmax)^{20\ell}$. In this case, the key observation already made in \cite{daniely20} is that under \cref{lem:scale_pres_ml}, the number of neurons that may actually differ at layer $j$ is at most $d_j / \poly (\ell \log \dmax)^\ell$ between $y \in \B (x, R)$ and $x$. Since, $H$ is spherically distributed, $\norm{H (D_j(\wt{f}_j (x)) - D_j(\wt{f}_j (y)))}$ is very small and consequently the whole term is small.

    \paragraph{Case 2:} $d_j \geq d_{i_k} (\ell \log \dmax)^{20\ell}$.
    This case is analogous to \cite{bubeck21}. The proof is
    technically involved and requires careful analysis of the random
    vector $H (D_j(\wt{f}_j (x)) - D_j(\wt{f}_j (y))) W_j$, which is
    complicated in our setting due to the matrices preceeding it in
    \eqref{eq:grad_decomp}. It involves the distributional
    equivalence in Lemma~\ref{lem:dist_equiv}.

    A union bound and an application of the triangle inequality now imply the lemma.
\end{proof}

\begin{proof}[Proof of \cref{thm:main}]
    On the intersection of the events in the conclusions of \cref{lem:scale_pres_ml,lem:f_grad_conc,lem:gd_error_bnd}, we have that $\nabla f(x)$ is deterministic and furthermore, we have:
    \begin{gather*}
        \norm{\nabla f(x)} \geq \frac{1}{2^{\ell + 1}},\quad \abs{f(x)} \leq c2^{\ell} \sqrt{\log 1 / \delta} \\
        \forall y \text{ s.t }\norm{y - x} \leq R: \norm{\nabla f(x) - \nabla f(y)} \leq \frac{1}{(\ell \log \dmax)^\ell} = o(1).
    \end{gather*}
    Assume $f(x) > 0$ (the alternative is similar) and let $\eta = -
    \frac{2^{\ell}\log d \cdot \sqrt{\log 1 / \delta}}{\norm{\nabla f(x)}^2}$, we have for the point $x + \eta \nabla f(x)$, defining the function $g(t) = f(x + t \eta \nabla f(x))$:
    \begin{align*}
        f(x + \eta \nabla f(x)) &= f(x) + \int_{0}^1 g' (t) dt = f(x) + \int_0^1 (\eta \nabla f(x))^\top \nabla f(x + t \eta \nabla f(x)) dt \\
        &= f(x) - (1 - o(1)) 2^{\ell}\log d \sqrt{\log 1 / \delta} \leq -f(x).
    \end{align*}
    Our lower bounds on $\nabla f(x)$ ensure $\norm{\eta \nabla f(x)}
    \leq R$, concluding the proof of the theorem.
\end{proof}

\section{The Impact of Depth}
\label{sec:lower}

\cref{thm:main} relies on the depth being constant.
In this section, we show that some constraint on the depth
is necessary in order to ensure the existence of adversarial
examples. In particular, the following result gives an example of
a sufficiently deep network for which, with high probability, the
output of the network will have the same sign for all input patterns.

\begin{theorem}\label{thm:lower-in-body}
  Fix a sufficiently large $d\in\N$, an $\ell \geq d^3$ and
  $(\ell d)^{20} \leq k \leq \exp (\sqrt{\ell})$, and consider
  the randomly initialized neural network~\eqref{eq:nn_def} with
  $d_1=\cdots=d_\ell=k$.  There is a universal constant $C$ such
  that with probability at least $0.9$,
  \begin{gather*}
    \forall x,y \in \S^{d - 1}: \frac{\abs{f(x) - f(y)}}{\abs{f(x)}}
    \leq C \sqrt{\frac{\log d}{d}}.
  \end{gather*}
\end{theorem}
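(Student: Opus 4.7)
The plan is to show that for very deep random ReLU networks, the $\ell$th-layer image $f_\ell(S^{d-1}) \subset \R^k$ is contained in a very thin neighborhood of a single ray, so that the final linear layer $W_{\ell+1}$ applied to it produces a function $f$ that is essentially a multiple of $\|f_\ell(x)\|$, which is itself nearly independent of $x$. Concretely, fix any reference $x_0 \in S^{d-1}$, write $u(x)=f_\ell(x)/\|f_\ell(x)\|$ and $\alpha(x)=\|f_\ell(x)\|/\|f_\ell(x_0)\|$, and decompose
\[
    f(x) = \alpha(x)\, f(x_0) + \|f_\ell(x)\|\, W_{\ell+1}\bigl(u(x)-u(x_0)\bigr).
\]
If the two quantities $\sup_{x,y}|\alpha(x)-\alpha(y)|$ and $\sup_x\|f_\ell(x)\|\,|W_{\ell+1}(u(x)-u(x_0))|/|f(x_0)|$ are both at most $O(\sqrt{\log d/d})$, then the ratio bound follows directly and $|f(x)|$ is bounded away from $0$ for every $x$.

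\paragraph{Key steps.}
First, the arc-cosine kernel recursion $J(\rho)=\frac{1}{\pi}(\sqrt{1-\rho^2}+\rho(\pi-\arccos\rho))$ describes the expected evolution of $\rho_i(x,y) = \langle u_i(x), u_i(y)\rangle$. Expanding $J$ at $\rho=1$ gives $J(\rho)=1-\theta^2/2+\theta^3/(3\pi)+O(\theta^4)$, which yields the deterministic bound $\theta_\ell(x,y)\lesssim 1/\ell$ for $\ell\gg 1$, regardless of the starting angle. Second, per-layer concentration (over $W_{i+1}$) of both $\rho_{i+1}$ and $\|f_{i+1}(x)\|^2/\|f_i(x)\|^2$ around their conditional means gives fluctuations of order $\sqrt{\log(1/\delta)/k}$ per layer; summing and using $k \ge (\ell d)^{20}$ makes the accumulated stochastic error much smaller than $1/\ell$. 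Third, take an $\eps$-net $\mc{N}_\eps$ of $S^{d-1}$ of size $(3/\eps)^d$ and apply the above pairwise bounds with $\delta=1/|\mc{N}_\eps|^2$; the generous gap between $\ell=d^3$ and $k\ge d^{80}$ leaves large margin. Extension from $\mc{N}_\eps$ to all of $S^{d-1}$ uses the Lipschitz constant of $f_\ell$, which is controlled by $\prod_i\|W_i\|_{op}\lesssim \sqrt{k/d}$ via standard spectral concentration (applied once to each $W_i$ using \cref{thm:tsirelson}, \cref{lem:gau_spec_conc}). The conclusion of this stage is: with probability $\ge 0.95$,
\[
    \sup_{x,y\in S^{d-1}} \|u(x)-u(y)\| \lesssim 1/\ell,
    \qquad
    \sup_{x,y\in S^{d-1}} |\alpha(x)-\alpha(y)| \ll \sqrt{\log d / d}.
\]

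\paragraph{Controlling the error term.}
What remains is bounding $\sup_{x\in S^{d-1}}|W_{\ell+1}(u(x)-u(x_0))|$ uniformly. Conditional on $W_1,\ldots,W_\ell$, the set $V=u(S^{d-1})-u(x_0)\subset\R^k$ is a $d$-dimensional manifold of diameter $D\lesssim 1/\ell$, and $W_{\ell+1}$ is a Gaussian vector with entries $\mc{N}(0,1/k)$. By Dudley's entropy integral applied to $V$, with metric covering number $N(V,\eta)\lesssim (L/\eta)^d$ for $\eta\le D$ (where $L$ is the Lipschitz constant of $u$, controlled above), the Gaussian width satisfies $w(V)\lesssim D\sqrt{d\log(Lk\ell)}$. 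Hence $\sup_x|W_{\ell+1}(u(x)-u(x_0))|\lesssim \sqrt{d\log(Lk\ell)}/(\ell\sqrt{k})$ with probability close to $1$. Dividing by $|W_{\ell+1}u(x_0)|\gtrsim 1/\sqrt{k}$ (anti-concentration of a single scalar Gaussian, which holds with probability $\geq 0.99$ for a random $x_0$) gives an error ratio of order $\sqrt{d\log(\ell k)}/\ell$. With $\ell\ge d^3$, this is $\ll\sqrt{\log d/d}$.

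\paragraph{Main obstacle.}
The delicate point is the uniformity over $S^{d-1}$ of the final-layer fluctuation $|W_{\ell+1}(u(x)-u(x_0))|$: a naive Cauchy--Schwarz bound using $\|W_{\ell+1}\|_{op}\approx 1$ and $\|f_\ell(x)-f_\ell(y)\|$ loses the crucial $1/\sqrt{k}$ factor present in scalar Gaussians, and a naive $\eps$-net plus Lipschitz extension also fails because the Lipschitz constant of $u$ can be as large as $2^{\ell/2}\sqrt{k/d}$. The Dudley/Gaussian-width argument in the previous paragraph is essential: it directly exploits the small diameter $\lesssim 1/\ell$ of the image manifold rather than passing through its Lipschitz constant. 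Once this step is in place, the rest of the argument is essentially bookkeeping, and all the probabilistic bounds sit comfortably within the parameter regime $\ell\ge d^3$, $(\ell d)^{20}\le k\le\exp(\sqrt{\ell})$.
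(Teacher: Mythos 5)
Your overall strategy mirrors the paper's: use the arc-cosine kernel to show the angle $\theta_\ell(x,y)$ (equivalently, the distance $\|f_\ell(x)-f_\ell(y)\|$) contracts deterministically to $O(1/\ell)$, control per-layer stochastic fluctuations via Gaussian concentration at scale $\sqrt{\log(1/\delta)/k}$ (harmless since $k\ge(\ell d)^{20}$), union-bound over an $\eps$-net of $\S^{d-1}$, and extend off the net. Your decomposition $f(x)=\alpha(x)f(x_0)+\|f_\ell(x)\|W_{\ell+1}\bigl(u(x)-u(x_0)\bigr)$ isolates radial and angular parts; the paper instead tracks $\|f_i(x)-f_i(y)\|$ directly and proves a one-step contraction recursion for it, but since $\|f_i(x)\|$ is pinned near a constant these formulations are essentially equivalent.

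Where your plan diverges---and where your ``main obstacle'' analysis goes wrong---is the last-layer uniformity. You claim a naive $\eps$-net plus Lipschitz extension ``also fails'' because $\mathrm{Lip}(u)$ is exponential in $\ell$ (times $\sqrt{k/d}$), and substitute a Dudley/Gaussian-width argument. In fact the paper does exactly the naive thing and it works: take $\eps=(10\cdot 2^\ell)^{-10}$, so $|\mc{G}|\le\exp(O(\ell d))$, which keeps the per-pair fluctuation $\sqrt{\log|\mc{G}|/k}\approx\sqrt{\ell d/k}$ tiny, and the off-net residual is $\|W_{\ell+1}\|\cdot\mathrm{Lip}(f_\ell)\cdot\eps\lesssim\sqrt{k}\cdot 4^\ell\cdot 2^{-10\ell}=\sqrt{k}\cdot 2^{-8\ell}$. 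The hypothesis $k\le\exp(\sqrt{\ell})$---which your proposal never actually invokes---is precisely what makes this finish: it forces $\sqrt{k}\le\exp(\sqrt{\ell}/2)$, so the residual decays as $\exp(\sqrt{\ell}/2-8\ell\ln 2)$, superpolynomially small. Your Dudley argument is a legitimate and more robust alternative (it would tolerate much larger $k$), but it is not essential here, and the key role of the upper bound on $k$ in the theorem statement is that it rescues exactly the naive $\eps$-net extension you dismissed.
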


The proof involves showing that the image of $\S^{d-1}$ is bounded
away from $0$, and that the inner products between images of two
input vectors throughout the network converge. The following lemma
shows how the expected inner products evolve through the network.
The lemma follows from computing a double integral; see
\cite[Eq.~(6)]{NIPS2009_5751ec3e}. The proof of
Theorem~\ref{thm:lower-in-body} is in \cref{sec:proof_lower}.

\begin{lemma}
    \label{lem:expectedinnerprod}
    Let $d \in \N$. Fix $x,y\in\mb{R}^d$ with $x,y \neq 0$. Then for $g \thicksim \mc{N} (0, I)$:
    \begin{equation*}
      \frac{\E \lsrs{\max \lbrb{g^\top x, 0} \max \lbrb{g^\top y, 0}}}{\sqrt{\E (\max \lbrb{g^\top x, 0})^2 \cdot \E (\max \lbrb{g^\top y, 0})^2}} = \frac{\sin \theta}{\pi} + \lprp{1 - \frac{\theta}{\pi}} \cos \theta
    \end{equation*}
    where $ \theta = \arccos (x^\top y/(\norm{x} \norm{y}))$.
\end{lemma}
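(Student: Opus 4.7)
The plan is to reduce the ratio to a two-dimensional Gaussian computation and evaluate it in polar coordinates. Since $\max\{\cdot,0\}$ is positively homogeneous, rescaling $x \mapsto ax$ and $y \mapsto by$ with $a,b>0$ multiplies both numerator and denominator by $ab$, so I may assume $\norm{x}=\norm{y}=1$. By rotational invariance of the standard Gaussian, only the two coordinates of $g$ in the plane spanned by $x,y$ matter; pick an orthonormal basis in which $x=e_1$ and $y=\cos\theta\cdot e_1+\sin\theta\cdot e_2$, and denote the corresponding coordinates of $g$ by $(g_1,g_2)$, which are independent standard normals. The denominator is then immediate: for $Z\sim\mathcal{N}(0,1)$ one has $\E[\max\{Z,0\}^2]=\tfrac12\E[Z^2]=\tfrac12$, so the full denominator equals $1/2$.

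For the numerator, switch to polar coordinates $(g_1,g_2)=(r\cos\phi,r\sin\phi)$, so $g^\top x=r\cos\phi$ and, by angle addition, $g^\top y=r\cos(\phi-\theta)$. Both are positive precisely on the intersection $(-\pi/2,\pi/2)\cap(\theta-\pi/2,\theta+\pi/2)=(\theta-\pi/2,\pi/2)$, an arc of length $\pi-\theta$. The integral factorizes as
\begin{equation*}
\E[\max\{g^\top x,0\}\max\{g^\top y,0\}] \;=\; \frac{1}{2\pi}\int_0^\infty r^3 e^{-r^2/2}\,dr \;\cdot\; \int_{\theta-\pi/2}^{\pi/2} \cos\phi\,\cos(\phi-\theta)\,d\phi.
\end{equation*}
The radial factor equals $2$, and for the angular factor I use the product-to-sum identity $\cos\phi\cos(\phi-\theta)=\tfrac12[\cos(2\phi-\theta)+\cos\theta]$: the oscillating piece integrates to $\tfrac14[\sin(\pi-\theta)-\sin(\theta-\pi)]=\tfrac12\sin\theta$, while the constant piece contributes $\tfrac12(\pi-\theta)\cos\theta$. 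Combining, the numerator equals $(\sin\theta+(\pi-\theta)\cos\theta)/(2\pi)$, and dividing by the denominator $1/2$ yields $\sin\theta/\pi+(1-\theta/\pi)\cos\theta$, as claimed.

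This calculation is essentially the classical Cho--Saul arc-cosine kernel and presents no real obstacle. The one subtle point is correctly determining that the region where both half-plane indicators are nonzero is an arc of length $\pi-\theta$; the boundary cases $\theta\in\{0,\pi\}$ (corresponding to $x,y$ parallel or antiparallel) are recovered by continuity of both sides, and the general-$\norm{x},\norm{y}$ case follows from the homogeneity reduction at the start.
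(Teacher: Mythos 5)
Your computation is correct and is precisely the "double integral" the paper defers to Cho and Saul~\cite[Eq.~(6)]{NIPS2009_5751ec3e}: reduce to the two relevant Gaussian coordinates, note the denominator is $1/2$, and evaluate the numerator in polar coordinates over the arc of length $\pi-\theta$ where both half-plane indicators are one. The paper offers no independent derivation, so you have simply carried out the cited computation; all steps (the homogeneity reduction, the arc-length identification, the radial factor $\int_0^\infty r^3 e^{-r^2/2}\,dr=2$, and the product-to-sum evaluation of the angular integral) check out.
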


\bibliographystyle{alpha}
\bibliography{main}

\appendix
\section{Deferred Proofs from \cref{sec:proof_relu}}
\label{sec:def_proofs}

\subsection{Proof of \cref{lem:dist_equiv}}
\label{ssec:dis_equiv_proof}

For the first claim, we introduce the following diagonal
random signed matrices, $S_i$ for $i \in [m]$ with:
\begin{equation*}
    (S_i)_{j,j} = \pm 1 \text{ with equal probability },\quad
    (S_i)_{j,k} = 0 \text{ if $j\not=k$}.
\end{equation*}
We prove the claim by induction on $m$. When $m = 1$, the claim is trivially true. For $m > 1$, we have:
\begin{align*}
    \lefteqn{W_m \prod_{j = m - 1}^1  (D_j (\wt{h}_j (x)) + B_j) W_j}
    & \\
    &\overset{d}{=} W_m \lprp{\prod_{j = m - 1}^2  (D_j (\wt{h}_j (x)) + B_j) W_j} (D_1 (S_1 \wt{h}_1 (x)) + B_1) S_1 W_1 
    & &\left(S_1W_1 \overset{d}{=} W_1\right) \\
    &\overset{d}{=} W_m \lprp{\prod_{j = m - 1}^2  (D_j + B_j) W_j} (D_1 (S_1 \wt{h}_1 (x)) + B_1) S_1 W_1 
    & &\left(\text{hypothesis for $m-1$}\right) \\
    &\overset{d}{=} W_m \lprp{\prod_{j = m - 1}^2  (D_j + B_j) W_j} S_1 (D_1(S_1 \wt{h}_1 (x)) + B_1)  S_1 W_1
    & &\left(W_2S_1 \overset{d}{=} W_2\right) \\
    &= W_m \lprp{\prod_{j = m - 1}^2  (D_j + B_j) W_j} (D_1 (S_1 \wt{h}_1 (x)) + B_1) W_1 
    & &\left(S_1B_1S_1 = B_1\right) \\
    &\overset{d}{=} W_m \lprp{\prod_{j = m - 1}^2  (D_j + B_j) W_j} (D_1 + B_1) W_1 \\
    &\overset{d}{=} W_m \prod_{j = m - 1}^1  (D_j + B_j) W_j.
\end{align*}

Similarly, for the second claim, we have:
\begin{align*}
    &\norm*{\prod_{j = m}^1  (D_j (\wt{h}_j (x)) + B_j) W_j} \\
    &\overset{d}{=} \norm*{(D_m (S_m\wt{h}_m (x)) + B_m) S_m W_m \prod_{j = m - 1}^1  (D_j (\wt{h}_j (x)) + B_j) W_j} \\
    & \overset{d}{=} \norm*{S_m (D_m (S_m\wt{h}_m (x)) + B_m) S_m W_m \prod_{j = m - 1}^1  (D_j (\wt{h}_j (x)) + B_j) W_j} \\
    & \overset{d}{=} \norm*{(D_m (S_m\wt{h}_m (x)) + B_m) W_m \prod_{j = m - 1}^1  (D_j (\wt{h}_j (x)) + B_j) W_j} \\
    & \overset{d}{=} \norm*{(D_m + B_m) W_m \prod_{j = m - 1}^1  (D_j (\wt{h}_j (x)) + B_j) W_j} \\
    &\overset{d}{=} \norm*{\prod_{j = m}^1  (D_j + B_j) W_j}
\end{align*}
\qed

\subsection{Proof of \cref{lem:scale_pres_ml}}
\label{ssec:scale_pres_proof}

We start with the first claim of the lemma. As in \cref{lem:f_grad_conc}, we note from \cref{lem:dist_equiv}:
\begin{gather*}
    f_i (x) = \left(\prod_{j = i}^1 D_j(\wt{f}_j (x)) W_j\right) x\\
    \norm*{\prod_{j = i}^1 D_j(\wt{f}_j (x)) W_j} \overset{d}{=} \norm*{\prod_{j = i}^1 D_j \wt{W}_j} \text{ where } \\
    (D_i)_{j,k} = \begin{cases}
        1 & \text{w.p } 1/2 \text{ if } j = k\\
        0 & \text{otherwise}
        \end{cases}
\text{ and } \{W_i\}_{i = 1}^{\ell + 1} \overset{d}{=} \{\wt{W}_i\}_{i = 1}^{\ell + 1}.
\end{gather*}
Therefore, it suffices to analyze the distribution of $\prod_{j = i}^1 D_j \wt{W}_j x$. Again, we condition on the following event:
\begin{equation*}
    \forall i \in [\ell]: \Tr D_i \geq \frac{d_i}{3}
\end{equation*}
which occurs with probability at least $1 - \delta / 8$. As in the
proof of \cref{lem:f_grad_conc}, we observe the following
distributional equivalence, when we condition on the $D_j$:
\begin{gather*}
    \forall i \in [\ell]: \norm*{\prod_{j = i}^1 D_j \wt{W}_j x}
    \overset{d}{=} \norm*{\prod_{j = i}^1 W^\dagger_j x} \text{
    where } \\
    \forall j \in \{2, \dots, i\},\,
    W^\dagger_j \in \R^{\Tr D_j \times \Tr D_{j - 1}},\ W^\dagger_1 \in \R^{\Tr D_1 \times d} \\
    \forall j \in \{2, \dots, i\},\,
    (W^\dagger_j)_{k, :} \thicksim \mc{N} \lprp{0, I / d_{j -
    1}},\, (W^\dagger_1)_{k, :} \thicksim \mc{N} (0, I / d).
\end{gather*}
A recursive application of \cref{thm:tsirelson} as in the proof of \cref{lem:f_grad_conc} yields:
\begin{equation*}
    \norm*{\prod_{j = i}^1 D_j \wt{W}_j x}
    \geq \|x\| \prod_{j = i}^1 \frac{1}{2}\sqrt{\frac{d_j}{d_{j-1}}}
\end{equation*}
with probability at least $1 - \delta / 16\ell^2$. A union bound over the $j$ layers concludes the proof of the first claim of the lemma.

We will establish the second claim through a gridding based argument with some subtleties. Namely, we construct a new grid over the image of $\mb{B}(x,R)$ whenever the number of units in a hidden layer drops lower than all the previous layers in the network starting from the input layer. These layers are precisely defined by the indices $i_j$ in \ref{eq:nn_dec}. We now establish the following claim inductively where we adopt the convention $i_0 = \ell + 1$.
\begin{claim}
    \label{clm:per_segmend_sp_app}
    Suppose for $j \geq 1$ and $\wt{R} \leq \dmax \cdot R$:
    \begin{equation*}
        \forall y \in \mb{B} (x, R): \norm{f_{i_j}(x) - f_{i_j} (y)} \leq \wt{R}.
    \end{equation*}
    Then:
    \begin{gather*}
        \forall i \in \{i_j + 1, \dots, i_{j - 1} - 1\}, y \in
        \mb{B} (x, R): \norm{\wt{f}_i (x) - \wt{f}_i (y)} \leq \lprp{C \ell \log
        \dmax}^{(i - i_j) / 2} \cdot \sqrt{\frac{d_i}{d_{i_j}}}
        \cdot \wt{R}, \\
        \forall y \in \mb{B} (x, R): \norm{\wt{f}_{i_{j-1}} (x) - \wt{f}_{i_{j-1}} (y)}  \cdot \leq \lprp{C \ell \log \dmax}^{(i_{j - 1} - i_j) / 2} \cdot \wt{R}
    \end{gather*}
    with probability at least $1  - \delta / 8l$.
\end{claim}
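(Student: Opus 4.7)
The plan is to discretize $f_{i_j}(\mb{B}(x,R))$, propagate the deviation bound layer-by-layer on the discretization using Tsirelson's inequality, and then extend to all points in the image via a spectral-norm argument for the $W_k$. The structural fact that makes the bookkeeping work is that every intermediate width $d_k$ with $i_j < k < i_{j-1}$ satisfies $d_k \geq d_{i_j}$ by the definition \eqref{eq:nn_dec} of bottleneck layers, while $d_{i_{j-1}} < d_{i_j}$; these two facts control the dimensional ratios that appear in the claim.

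Concretely, I would fix an $\eps$-net $\mc{G}$ of $f_{i_j}(\mb{B}(x,R))$ with $\eps = \epsnetres$, so that $\abs{\mc{G}} \leq (10 \wt{R} / \eps)^{d_{i_j}} \leq \dmax^{C d_{i_j} \ell}$. Writing $\wt{x} = f_{i_j}(x)$ and using $\wt{f}_{i+1, i_j}(\wt{x}) - \wt{f}_{i+1, i_j}(\wt{y}) = W_{i+1}(f_{i, i_j}(\wt{x}) - f_{i, i_j}(\wt{y}))$, \cref{thm:tsirelson} applied to this fixed vector yields
\begin{equation*}
\norm{\wt{f}_{i+1, i_j}(\wt{x}) - \wt{f}_{i+1, i_j}(\wt{y})} \leq \sqrt{\tfrac{d_{i+1}}{d_i}} \lprp{1 + \sqrt{\tfrac{\log 1/\delta^\prime}{d_{i+1}}}} \norm{f_{i, i_j}(\wt{x}) - f_{i, i_j}(\wt{y})}
\end{equation*}
with failure probability $\delta^\prime$, and the trivial contraction $\norm{D_{i+1} u} \leq \norm{u}$ preserves the bound under the ReLU mask. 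Choosing $\delta^\prime = \delta / (16 \ell^2 \abs{\mc{G}})$ keeps $\log 1/\delta^\prime \lesssim d_{i_j} \ell \log \dmax$, and since $d_{i+1} \geq d_{i_j}$ inside the segment each concentration factor is at most $\sqrt{C \ell \log \dmax}$. Telescoping the $\sqrt{d_{k+1}/d_k}$ factors from $k = i_j$ to $i - 1$ collapses to $\sqrt{d_i/d_{i_j}}$, yielding the intermediate-layer bound on $\mc{G}$; the layer-$i_{j-1}$ bound follows identically, with the surviving factor $\sqrt{d_{i_{j-1}}/d_{i_j}}$ absorbed into $1$ using $d_{i_{j-1}} < d_{i_j}$.

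To pass from $\mc{G}$ to arbitrary $y \in f_{i_j}(\mb{B}(x,R))$, I would condition on the spectral-norm bounds $\norm{W_k} \leq C\sqrt{d_k/d_{k-1}}$ from \cref{lem:gau_spec_conc} for $k \leq i_{j-1}$ (adding at most $\delta/(16\ell)$ to the failure probability). For $y \notin \mc{G}$ and $\wt{y} = \argmin_{z \in \mc{G}} \norm{z - y}$, the triangle inequality gives
\begin{equation*}
\norm{\wt{f}_{i, i_j}(\wt{x}) - \wt{f}_{i, i_j}(y)} \leq \norm{\wt{f}_{i, i_j}(\wt{x}) - \wt{f}_{i, i_j}(\wt{y})} + \eps \cdot C^{i - i_j} \sqrt{d_i / d_{i_j}},
\end{equation*}
and the choice $\eps = \epsnetres$ makes the second term negligible relative to $\wt{R}$. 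A union bound over layers together with the observation $\norm{f_i(\cdot) - f_i(\cdot')} \leq \norm{\wt{f}_i(\cdot) - \wt{f}_i(\cdot')}$ then yields the full claim.

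The main obstacle is keeping the constants aligned: one must verify that $\log 1/\delta^\prime$ stays $O(d_{i_j} \ell \log \dmax)$ so that each Tsirelson deviation contributes only $\sqrt{C \ell \log \dmax}$ rather than something larger, and that the net resolution $\eps$ beats the product $\prod_k \norm{W_k}$ that can reach $\dmax^{O(\ell)}$. The bottleneck property $d_k \geq d_{i_j}$ inside the segment is exactly what prevents $\log 1/\delta^\prime \,/\, d_{i+1}$ from blowing up and destroying the per-layer factor; without it, the exponent $(i - i_j)/2$ in the claim would not be achievable.
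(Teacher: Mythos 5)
Your proposal is correct and follows essentially the same approach as the paper's: same $\eps$-net with resolution $\eps = \epsnetres$, same choice $\delta' = \delta/(16\ell^2\abs{\mc{G}})$, the same per-layer application of \cref{thm:tsirelson} exploiting $d_k \geq d_{i_j}$ inside the segment, and the same extension from $\mc{G}$ to $f_{i_j}(\mb{B}(x,R))$ via the spectral-norm bound of \cref{lem:gau_spec_conc}. Your added remark that $d_{i_{j-1}} < d_{i_j}$ absorbs the dimensional ratio into a constant at the bottleneck layer is implicit in the paper's statement and correctly identifies why the $\sqrt{d_i/d_{i_j}}$ factor disappears in the final bound.
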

\begin{proof}
    We start by constructing an $\eps$-net \cite[Definition
    4.2.1]{vershynin}, $\mc{G}$, of $f_{i_j} (\mb{B} (x, R))$ with
    $\eps = \epsnetres$. Note that we may assume $\abs{\mc{G}} \leq
    (10 \wt{R} / \eps)^{d_{i_j}}$. We will prove the statement on the
    grid and extend to the rest of the space. For layer $i + 1$, defining $\wt{x} = f_{i_j} (x)$, we have $\forall \wt{y} \in \mc{G}$:
    \begin{align*}
        \norm{\wt{f}_{i + 1, i_j} (\wt{x}) - \wt{f}_{i + 1, i_j} (\wt{y})} &= \norm{W_{i + 1} (f_{i, i_j}(\wt{x}) - f_{i, i_j}(\wt{y}))} \\
        &\leq \norm{f_{i, i_j}(\wt{x}) - f_{i, i_j}(\wt{y})} \cdot \sqrt{\frac{d_{i + 1}}{d_i}} \cdot \lprp{1 + \sqrt{\frac{\log 1 / \delta^\prime}{d_{i + 1}}}}
    \end{align*}
    with probability at least $1 - \delta^\prime$ as before by \cref{thm:tsirelson}. By setting $\delta^\prime = \delta / (16 \abs{\mc{G}} \ell^2)$ and noting that $d_i \geq d_{i_j}$, the conclusion holds for layer $i + 1 \leq i_j$ on $\mc{G}$ with probability at least $1 - \delta / (16\ell^2)$. By induction and the union bound, we get:
    \begin{gather*}
        \forall i \in \{i_j + 1, \dots , i_{j - 1} - 1\}, \wt{y} \in \mc{G}: \norm{\wt{f}_{i, i_j} (\wt{y}) - \wt{f}_{i, i_j}(\wt{x})} \leq (C \ell \log \dmax)^{(i - i_j) / 2} \cdot \sqrt{\frac{d_i}{d_{i_j}}} \cdot \wt{R} \\
        \forall \wt{y} \in \mc{G}: \norm{\wt{f}_{i_{j - 1}, i_j} (\wt{y}) - \wt{f}_{i_{j - 1}, i_j}(\wt{x})} \leq \lprp{C \ell \log \dmax}^{(i_{j - 1} - i_j) / 2} \cdot \wt{R}
    \end{gather*}
    with probability at least $1 - \delta / (16\ell^2)$. To extend to
    all $y \in f_{i_j} (\B (x, R))$, we condition on the bound on
    $\|W_i\|$ given by \cref{lem:gau_spec_conc} for all $i \leq i_{j - 1}$ and note that $\forall y \in f_{i_j} (\B (x, R))$, for $\wt{y} = \argmin_{z \in \mc{G}} \norm{z - y}$, and for $i_j+1\le i<i_{j-1}$,
    \begin{align*}
        \norm{\wt{f}_{i, i_j} (\wt{x}) - \wt{f}_{i, i_j} (y)} &\leq \norm{\wt{f}_{i, i_j} (\wt{x}) - \wt{f}_{i, i_j} (\wt{y})} + \norm{\wt{f}_{i, i_j} (y) - \wt{f}_{i, i_j} (\wt{y})} \\
        &\leq \norm{\wt{f}_{i, i_j} (\wt{x}) - \wt{f}_{i, i_j}
        (\wt{y})} + \norm{y - \wt{y}} \prod_{k = i_{j} + 1}^i
        \norm{W_k} \\
        &\leq \norm{\wt{f}_{i, i_j} (\wt{x}) - \wt{f}_{i, i_j}
        (\wt{y})} + \eps \prod_{k = i_{j} + 1}^i
        \left(C\sqrt{\frac{d_k}{d_{k-1}}}\right) \\
        &= \norm{\wt{f}_{i, i_j} (\wt{x}) - \wt{f}_{i, i_j}
        (\wt{y})} + \eps C^{i-i_j}
        \sqrt{\frac{d_i}{d_{i_j}}},
    \end{align*}
    using that $d_i\ge d_{i_j}$.
    Similarly, for $i=i_{j-1}$, we have
    \begin{align*}
        \norm{\wt{f}_{i_{j-1}, i_j} (\wt{x}) - \wt{f}_{i_{j-1}, i_j} (y)}
        &\leq \norm{\wt{f}_{i_{j-1}, i_j} (\wt{x}) -
        \wt{f}_{i_{j-1}, i_j}
        (\wt{y})} + \eps C^{i_{j-1}-i_j}
        \sqrt{\frac{d_{i_{j-1}-1}}{d_{i_j}}}.
    \end{align*}
    Our setting of $\eps$ concludes the proof of the claim.
\end{proof}
An inductive application of \cref{clm:per_segmend_sp_app}, a union bound and the observation that:
\begin{equation*}
    \norm{f_i (x) - f_i (y)} \leq \norm{\wt{f}_i (x) - \wt{f}_i (y)}
\end{equation*}
concludes the proof of the lemma.
\qed

\subsection{Proof of \cref{lem:nn_part_spec_bnd}}
\label{ssec:nn_part_spec_bnd_proof}
To start, consider a fixed $j \in [m]$ and condition on the conclusion of
\cref{lem:scale_pres_ml} up to level $i_{j + 1}$. Now, consider an $\eps$-net of
$f_{i_{j + 1}} (\mb{B} (x, R))$, $\mc{G}$, with resolution $\eps = \epsnetres$. As
before, $\abs{\mc{G}} \leq (C\dmax)^{48\ell d_{i_{j + 1}}}$ for some constant $C$. We additionally will consider subsets
\begin{equation*}
    \mc{S} = \lbrb{(S_k)_{k = i_j - 1}^{i_{j + 1} + 1}:
    S_k\subseteq[d_k],\, \abs{S_k} \leq 4 d_{i_{j + 1}}}.
\end{equation*}
Note that $\abs{\mc{G}} \cdot \abs{\mc{S}}^2 \leq (C\dmax)^{64ld_{i_{j + 1}}}$. For $y \in \mc{G}, S^1, S^2 \in \mc{S}$, consider the following matrix:
\begin{gather*}
    M^{i_{j}, i_{j + 1}}_{y, S^1, S^2} = \prod_{k = i_{j}}^{i_{j + 1}
    + 1} (D_k (\wt{f}_{k, i_{j+1}} (y))) + (D_{S^1_k} - D_{S^2_k})) W_k \text{ where } \\
    (D_S)_{i, j} = 
    \begin{cases}
        1, \text{if } i = j \text{ and } i \in S \\
        0, \text{otherwise}
    \end{cases}.
\end{gather*}
We will bound the spectral norm of $M^{i_{j}, i_{j + 1}}_{y, S^1, S^2}$. First, note that:
\begin{equation*}
    \norm*{M^{i_j, i_{j + 1}}_{y, S^1, S^2}} \leq 2 \norm*{\wt{M}^{i_j,
    i_{j + 1}}_{y, S^1, S^2}} \text{ where } \wt{M}^{i_j, i_{j +
    1}}_{y, S^1, S^2} \coloneqq W_{i_j} \prod_{k = i_{j} - 1}^{i_{j +
    1} + 1} (D_k (\wt{f}_{k, i_{j+1}} (y)) + (D_{S^1_k} - D_{S^2_k})) W_k
\end{equation*}
and observe that from \cref{lem:dist_equiv}:
\begin{gather*}
    \wt{M}^{i_{j}, i_{j + 1}}_{y, S^1, S^2} \overset{d}{=} W_{i_j} \prod_{k = i_{j} - 1}^{i_{j + 1} + 1} (D_k + (D_{S^1_k} - D_{S^2_k})) W_k) \text{ where } \\
    (D_k)_{i,j} = 
    \begin{cases}
        1 & \text{with probability } \frac{1}{2} \text{ if } i = j,\\
        0 & \text{otherwise.}
    \end{cases}
\end{gather*}
To bound the spectral norm, let $\mc{B}$ be a $1/3$-net of $\S^{d_{i_j} - 1}$ and
$v \in \mc{B}$. Applying Theorem~\ref{thm:tsirelson},
\begin{align*}
    \norm*{v^\top \wt{M}^{i_j, i_{j + 1}}_{y, S^1, S^2}} &\leq \norm*{v^\top W_{i_j} \prod_{k = i_j - 1}^{i_{j + 1} + 2} (D_k +  (D_{S^1_k} - D_{S^2_k})) W_k)} \cdot \lprp{1 + \sqrt{\frac{\log 1 / \delta^\prime}{d_{i_{j + 1}}}}} \\
    &\leq \prod_{k = i_{j}}^{i_{j + 1}} \lprp{1 + \sqrt{\frac{\log 1 / \delta^\prime}{d_{k}}}}
\end{align*}
with probability at least $\ell \delta^\prime$. But setting $\delta^\prime =
\delta / (16 \ell^4 \cdot \abs{\mc{G}} \cdot \abs{\mc{S}}^2)$ yields
  \begin{align*}
    \frac{\log 1/\delta'}{d_k}
      &\le \frac{C\ell d_{i_{j+1}}\log\dmax}{d_k} + \frac{\log 1/\delta}{d_k} \\
      &\le C'\ell \log\dmax
  \end{align*}
because $d_k\ge d_{i_{j+1}}$ and by the assumption on $\delta$.
Hence, with probability at least $1 - \delta /16$:
    \begin{equation*}
        \forall y \in \mc{G}, S_1, S_2 \in \mc{S}:
         \norm*{\wt{M}^{i_j, i_{j + 1}}_{y, S^1, S^2}} \leq (C \cdot \ell \cdot \log \dmax)^{(i_j - i_{j + 1}) / 2}.
    \end{equation*}
On the event in the conclusion of \cref{lem:scale_pres_ml}, we have that
for all $y\in\mb{B}(x,R)$ and $i\in[\ell]$, $f_i (y) \neq 0$, and therefore we
have by a union bound over the discrete set $\mc{G}$ in an event of probability
$1$,
\begin{equation*}
    \forall y \in \mc{G}, k \in \lbrb{i_{j + 1}, \dots, i_j}, m \in
    [d_k]: (\wt{f}_{k, i_j} (y))_m \neq 0.
\end{equation*}
We now show a basic structural claim of how activation patterns differ across the various layers between a point and its closest approximation in $\mc{G}$:
\begin{claim}
    \label{clm:grid_approx_act_patt_spec_bnd_app}
    With probability at least $1 - \delta^\prime / \ell^2$ over the $W_k$,
    we have for all $m \in \{i_{j + 1} + 1, \dots, i_{j}\}$ and
    $y \in f_{i_{j+1}} (\mb{B} (x, R))$:
    \begin{equation*}
        \P_{\lbrb{D_n (\wt{f}_{n,i_{j+1}} (y)), D_n (\wt{f}_{n,i_{j+1}}
        (\wt{y}))}} \lbrb{\Tr \abs{D_m (\wt{f}_{m, i_{j + 1}} (y)) - D_m
        (\wt{f}_{m, i_{j + 1}} (\wt{y}))} \leq 4d_{i_{j + 1}}} = 1,
    \end{equation*}
    where $\wt{y} = \argmin_{z \in \mc{G}} \norm{z - y}$.
\end{claim}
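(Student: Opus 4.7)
The plan is to bound the number of neurons whose activation can possibly differ between $y$ and $\wt{y}$ by partitioning them into two groups: those with a small preactivation at the grid point $\wt{y}$ (controlled by anti-concentration) and those with a large change in preactivation from $\wt{y}$ to $y$ (controlled by the tiny grid scale). The key worst-case observation is that a neuron $n$ can contribute to $\Tr\abs{D_m(\wt{f}_{m, i_{j+1}}(y)) - D_m(\wt{f}_{m, i_{j+1}}(\wt{y}))}$ for \emph{some} realization of the $D_k(\cdot)$ only if $(\wt{f}_{m, i_{j+1}}(y))_n \cdot (\wt{f}_{m, i_{j+1}}(\wt{y}))_n \leq 0$, which forces
\begin{equation*}
    \abs{(\wt{f}_{m, i_{j+1}}(\wt{y}))_n} \;\leq\; \abs{(\wt{f}_{m, i_{j+1}}(y))_n - (\wt{f}_{m, i_{j+1}}(\wt{y}))_n}.
\end{equation*}
This reduces everything to a deterministic counting problem in the $W_k$'s, so the probability-$1$ statement over the $D_k(\cdot)$ follows automatically.

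First, I would control the preactivation change. Conditioning on the high-probability event from \cref{lem:gau_spec_conc} that $\|W_k\| \leq C\sqrt{d_k/d_{k-1}}$ for all $k \leq i_j$, and noting that the intermediate $D_k(\cdot)$ factors have spectral norm at most $1$, we get for every realization of the $D_k(\cdot)$'s and every $y \in f_{i_{j+1}}(\mb{B}(x,R))$ with nearest grid point $\wt{y}$,
\begin{equation*}
    \norm{\wt{f}_{m, i_{j+1}}(y) - \wt{f}_{m, i_{j+1}}(\wt{y})} \;\leq\; \eps \prod_{k = i_{j+1}+1}^m \norm{W_k} \;\leq\; \eps \cdot C^{m - i_{j+1}} \sqrt{\frac{d_m}{d_{i_{j+1}}}} \eqqcolon \Delta_m.
\end{equation*}

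Second, I would apply the anti-concentration from \cref{lem:x_i_anticonc} at layer $m$, with $\wt{y}$ playing the role of the input. Conditioning on the weights $\{W_k\}_{k \leq m-1}$, the vector $f_{m-1, i_{j+1}}(\wt{y})$ is deterministic (outside a measure-zero event over the $W_k$'s at which some preactivation is exactly zero, which is harmless since $\mc{G}$ is countable), and $W_m$ is independent of it. A union bound over $\wt{y} \in \mc{G}$ and over the layers $m \in \{i_{j+1}+1, \ldots, i_j\}$, with the choice $\alpha = c \cdot d_{i_{j+1}}/d_m$ for a small constant $c$, gives that with probability at least $1 - \delta'/\ell^2$,
\begin{equation*}
    \#\lbrb{n : \abs{(\wt{f}_{m, i_{j+1}}(\wt{y}))_n} \leq \tau_m} \;\leq\; 2 d_{i_{j+1}}, \qquad \tau_m \coloneqq \alpha \cdot \frac{\norm{f_{m-1, i_{j+1}}(\wt{y})}}{\sqrt{d_{m-1}}}.
\end{equation*}
The sub-Gaussian deviation bound underlying \cref{lem:x_i_anticonc} produces a failure probability $\exp(-c' d_m)$ per grid point, which easily absorbs $|\mc{G}| \leq (C\dmax)^{48\ell d_{i_{j+1}}}$ since $d_m \geq d_{i_{j+1}}$ and by the lower bound assumed on $\dmin$.

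Third, I would combine. By the key observation, the worst-case count of differing activations is at most
\begin{equation*}
    \#\lbrb{n : \abs{(\wt{f}_m(\wt{y}))_n} \leq \tau_m} + \#\lbrb{n : \abs{(\wt{f}_m(y) - \wt{f}_m(\wt{y}))_n} > \tau_m} \;\leq\; 2 d_{i_{j+1}} + \frac{\Delta_m^2}{\tau_m^2}.
\end{equation*}
Using the first claim of \cref{lem:scale_pres_ml} to lower bound $\norm{f_{m-1, i_{j+1}}(\wt{y})} \geq \sqrt{d_{m-1}}/2^{m-1}$, one has $\tau_m \gtrsim d_{i_{j+1}}/(d_m \cdot 2^m)$, so
\begin{equation*}
    \frac{\Delta_m^2}{\tau_m^2} \;\lesssim\; \eps^2 \cdot C^{2\ell} \cdot 4^\ell \cdot \frac{d_m^3}{d_{i_{j+1}}^3}.
\end{equation*}
With $\eps = 1/(\dmax^\ell)^{32}$ and $d_m \leq \dmax$, this ratio is $\ll 1$, so the total is at most $2 d_{i_{j+1}} + 1 \leq 4 d_{i_{j+1}}$, which is the desired bound.

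The main obstacle will be lining up all the union bound parameters: the grid $\mc{G}$ is exponentially large in $d_{i_{j+1}}$, so the per-$\wt{y}$ anti-concentration failure probability has to be made correspondingly small, which in turn constrains how small $\alpha$ can be. The assumed lower bound on $\dmin$ relative to $\log \dmax$ is exactly what makes this union bound go through; as long as $d_m \gtrsim d_{i_{j+1}}$, the Hoeffding-based count in \cref{lem:x_i_anticonc} concentrates strongly enough to beat $|\mc{G}|$.
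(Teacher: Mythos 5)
Your structural framework is sound and close in spirit to the paper's argument: you split the neurons whose activation could possibly flip between $y$ and $\wt{y}$ into those with small preactivation at $\wt{y}$ and those whose preactivation changes a lot, then bound the first probabilistically and the second deterministically via the $\ell^2$ control $\|\wt{f}_m(y) - \wt{f}_m(\wt{y})\| \le \Delta_m$. The reduction to a deterministic counting problem over the $W_k$ (which gives the probability-$1$ statement over the $D_k(\cdot)$) is also exactly right.

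The gap is in the parameter choice and the claimed concentration rate. You set the threshold via $\alpha \approx d_{i_{j+1}}/d_m$, so the per-neuron small-preactivation probability is $p \approx d_{i_{j+1}}/d_m$, the mean count is $\approx d_{i_{j+1}}$, and to exceed $2d_{i_{j+1}}$ you need a deviation of order the mean. By Hoeffding that costs $\exp(-\Theta(d_{i_{j+1}}^2/d_m))$ and by Bernstein/Chernoff at best $\exp(-\Theta(d_{i_{j+1}}))$ --- \emph{not} $\exp(-c'd_m)$ as you claim. Neither rate beats the union bound over $|\mc{G}| \le (C\dmax)^{48\ell d_{i_{j+1}}}$: you would need the exponent to exceed $48\ell d_{i_{j+1}}\log\dmax$, which your $\Theta(d_{i_{j+1}})$ rate falls far short of. The paper avoids this by making the anti-concentration threshold itself proportional to $\eps$ (in effect, taking $\tau_m \asymp \Delta_m$): a neuron can possibly flip only if its preactivation at the grid point lies within $\approx \Delta_{m-1}\cdot\|w_i\|$ of zero, so each neuron's flip probability is $\le \eps\,(C\dmax)^{(m-i_{j+1})/2} \ll \dmax^{-30\ell}$. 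Then the subset/Chernoff bound gives $\P[\text{count} \ge 4d_{i_{j+1}}] \le (d_m\,p)^{4d_{i_{j+1}}} \lesssim \dmax^{-120\ell d_{i_{j+1}}}$, which comfortably dominates $|\mc{G}|$. In your two-part split, that corresponds to choosing $\tau_m$ on the scale of $\Delta_m$ rather than $d_{i_{j+1}}/d_m$: then the second count is deterministically zero (the $\Delta_m^2/\tau_m^2 < 1$ argument you already have) and the first count's per-neuron probability is tiny, which is what the union bound requires.
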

\begin{proof}
    We start by conditioning on the conclusion of \cref{lem:gau_spec_conc} (with
    $A=\sqrt{d_{k-1}}W_k$) up to layer $i_{j + 1}$. Then:
    \begin{equation*}
        \forall m \in \{i_{j + 1} + 1, \cdots, i_{j}\}, y \in f_{i_{j+1}} (\mb{B} (x, R)): \norm{\wt{f}_{m, i_{j + 1}} (y) - \wt{f}_{m, i_{j + 1}} (\wt{y})} \leq \eps \cdot \lprp{C \dmax}^{(m - i_{j + 1}) / 2}.
    \end{equation*}
    Now, let $m \in \{i_{j + 1}+1, \cdots, i_{j}\}$ and $y \in \mc{G}$. We now operate on the conclusion of \cref{lem:scale_pres_ml} up to layer $m - 1$ and hence, $(\wt{f}_{m, i_{j + 1}} (y))_n \neq 0$ for all $n$. Now, we have:
    \begin{multline*}
        \max_{z \text{ s.t } \norm{z - y} \leq \eps} \# \{i: \sign (\inp{(W_m)_{i,
        :}}{f_{m - 1, i_{j+1}} (y)}) \neq \sign (\inp{(W_m)_{i, :}}{f_{m - 1,
        i_{j+1}} (z)})\} \\
        \leq \# \{i: \exists z \text{ s.t } \norm{z - y} \leq \eps \text{ and }
        \sign (\inp{(W_m)_{i, :}}{f_{m - 1, i_{j+1}} (y)}) \neq \sign
        (\inp{(W_m)_{i, :}}{f_{m - 1, i_{j+1}} (z)})\}.
    \end{multline*}
    Defining $w_i:=(W_m)_{i, :}$ and
    \begin{equation*}
        W_i(y) \coloneqq \bm{1} \{\exists z \text{ s.t } \norm{z - y} \leq \eps
        \text{ and } \sign (\inp{(W_m)_{i, :}}{f_{m - 1, i_{j+1}} (y)}) \neq \sign
        (\inp{(W_m)_{i, :}}{f_{m - 1, i_{j+1}} (z)})\},
    \end{equation*}
    we get:
    \begin{align*}
        \P \lbrb{W_i(y) = 1} &\leq \P  \lbrb{(C \dmax)^{(m -1 - i_{j+1}) / 2} \cdot \eps \cdot \norm{w_i} \geq \abs{\inp{w_i}{y}}} \\
        &\leq \P \lbrb{\norm{w_i} \geq 2} + \P \lbrb{2\eps (C \dmax)^{(m -1 -
        i_{j+1}) / 2} \geq \abs{\inp{w_i}{y}}} \leq \eps \cdot (C \dmax)^{(m - i_{j+1}) / 2}.
    \end{align*}
    Therefore, we have:
    \begin{equation*}
        \P \lbrb{\sum_{i = 1}^{d_m} W_i(y) \geq 4d_{i_{j+1}}} \leq d_m^{4d_{i_{j + 1}}} \cdot \lprp{\eps (C \dmax)^{(m - i_{j+1}) / 2}}^{4d_{i_{j + 1}}}
    \end{equation*}
    and hence by the union bound and our setting of $\eps$ and bound on $\mc{G}$:
    \begin{equation*}
        \P \lbrb{\exists y \in \mc{G}: \sum_{i = 1}^{d_m} W_i (y) \geq 4d_{i_{j + 1}}} \leq (C\dmax)^{60 \ell d_{i_{j + 1}}} \cdot \eps^{4 d_{i_{j + 1}}} \leq \frac{\delta'}{16\ell^4}.
    \end{equation*}
\end{proof}
The claim concludes the proof of the lemma.
\qed

\subsection{Proof of \cref{lem:gd_error_bnd}}
\label{ssec:gd_error_bnd_proof}

    Consider a fixed term in the decomposition of the gradient difference from \ref{eq:grad_decomp}; that is, consider the random vector-valued function:
    \begin{equation*}
        \gddiff_j(y) \coloneqq W_{\ell + 1} \lprp{\prod_{i = \ell}^{j + 1} D_i(\wt{f}_i(x)) W_i} \cdot (D_j(\wt{f}_j (y)) - D_j(\wt{f}_j (x))) W_j \cdot \lprp{\prod_{i = j - 1}^1 D_i(\wt{f}_i(y)) W_i}.
    \end{equation*}
    We will show with high probability that $\gddiff_j(y) = o(1)$ with high probability for all $\norm{x - y} \leq R$. This will then imply the lemma by a union bound and \ref{eq:grad_decomp}. Let $k$ be such that $i_k = \argmin_{m < j} d_m$. We will condition on the weights of the network up to layer $i_k$. Specifically, we will assume the conclusions of \cref{lem:scale_pres_ml,lem:nn_part_spec_bnd} up to layer $i_k$. We may now focus our attention solely on the segment of the network beyond layer $i_k$ as a consequence of the following observations:
    \begin{gather*}
        \norm*{\gddiff_j(y)} \leq \norm{\mrm{Diff}_{j,k} (x,y)} \cdot \norm*{M_{i_k, 0} (y)} \text{ where } \tag{\theequation} \stepcounter{equation} \label{eq:gd_final_decomp} \\
        \mrm{Diff}_{j,k} (x,y) \coloneqq W_{\ell + 1} \lprp{\prod_{i = \ell}^{j +
        1} D_i(\wt{f}_i(x)) W_i} (D_j(\wt{f}_j (y)) - D_j(\wt{f}_j (x))) W_j
        \lprp{\prod_{i = j - 1}^{i_k+1} D_i(\wt{f}_i(y)) W_i} 
    \end{gather*}
    We will show for all $y$ such that $\norm{y - x} \leq R$:
    \begin{equation}
        \label{eq:gd_part_bnd}
        \P_{\lbrb{D_m (\wt{f}_m (x)), D_m (\wt{f}_m (y))}} \lbrb{\norm*{\mrm{Diff}_{j,k} (x,y)} \geq \frac{C^\ell}{(\ell \log \dmax)^{3\ell}}} = 0
    \end{equation}
    with probability at least $1 - \delta / (16\ell^2)$. 
    
    Observe now that from \cref{lem:dist_equiv}:
    \begin{gather*}
        W_{\ell + 1} \prod_{i = \ell}^{j + 1} D_i(\wt{f}_i(x)) W_i \overset{d}{=} \wt{W}_{\ell + 1} \prod_{i = \ell}^{j + 1} D_i \wt{W}_i \eqqcolon H \text{ where } \\
        (D_i)_{m, n} = 
        \begin{cases}
            1, &\text{with probability } 1/2 \text{ if } m = n \\
            0, &\text{otherwise}
        \end{cases},
        \{W_i\}_{i = j + 1}^{\ell + 1} \overset{d}{=} \{\wt{W}_i\}_{i = j + 1}^{\ell + 1}
    \end{gather*}
    Therefore, $H$ is a spherically-symmetric random vector and we condition on the following high probability bound on its length:
    \begin{align}
        \label{eq:h_l_bnd_gd_proof}
        \norm{H} &\leq \norm*{\wt{W}_{\ell + 1} \prod_{i = \ell}^{j + 2} D_i \wt{W}_i} \cdot \lprp{1 + 2\sqrt{\frac{\log \ell / \delta^\prime}{d_j}}} \leq \lprp{2}^{\ell + 1 - j}
    \end{align}
    with probability at least $1 - \delta^\prime$ and setting $\delta^\prime =
    \delta / (64\ell^2)$. Observe that the distribution of $H$ remains spherically
    symmetric even after conditioning on the above event. As in the proof of
    \cref{lem:nn_part_spec_bnd}, let $\mc{G}$ be an $\eps$-net of $f_{i_k}(\mb{B}
    (x, R))$ with $\eps$ as in the proof of \cref{lem:nn_part_spec_bnd}.
    \begin{claim}
        \label{clm:grid_approx_act_patt_app}
        We have for all $m \in \{i_k+1,\ldots, i_{k - 1}\}, y \in f_{i_k} (\mb{B} (x, R))$:
        \begin{equation*}
            \P_{\lbrb{D_m (\wt{f}_{m, i_k} (y)), D_m (\wt{f}_{m, i_k}(\wt{y}))}} \lbrb{\Tr \abs{D_m (\wt{f}_{m, i_k} (y)) - D_m (\wt{f}_{m, i_k} (\wt{y}))} \leq 4d_{i_k}} = 1
        \end{equation*}
        where $\wt{y} = \argmin_{z \in \mc{G}} \norm{z - y}$ with probability at least $1 - \delta^\prime / \ell^2$.
    \end{claim}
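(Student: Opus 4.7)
The plan is to mirror the proof of \cref{clm:grid_approx_act_patt_spec_bnd_app} almost verbatim, with the role of the bottleneck index $i_{j+1}$ played by $i_k$; no new ingredients are needed. First I condition on the spectral-norm bound of \cref{lem:gau_spec_conc} for each $W_n$ with $n\in\{i_k+1,\ldots,i_{k-1}\}$, which yields $\|W_n\|\le C\sqrt{d_n/d_{n-1}}$, and telescoping through the $1$-Lipschitz ReLU gives
\[
\|\wt f_{m,i_k}(y)-\wt f_{m,i_k}(\wt y)\|\le \eps\cdot(C\dmax)^{(m-i_k)/2}
\]
for every $y\in f_{i_k}(\mb B(x,R))$ and its nearest grid point $\wt y\in\mc G$. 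I then condition on \cref{lem:scale_pres_ml} up to layer $m-1$, which ensures $(\wt f_{m,i_k}(y))_n\neq 0$, so $D_m(\wt f_{m,i_k}(y))$ is a deterministic function of the weights; this makes the ``$\P\{\cdots\}=1$'' conclusion in the claim immediate once the count of sign disagreements is bounded.

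Next, for a fixed grid point $y\in\mc G$ and layer $m$, define for each neuron $n\in[d_m]$ the indicator
\[
W_n(y):=\bm 1\bigl\{\exists z:\,\|z-y\|\le\eps,\ \sign\inp{(W_m)_{n,:}}{f_{m-1,i_k}(y)}\neq\sign\inp{(W_m)_{n,:}}{f_{m-1,i_k}(z)}\bigr\}.
\]
The telescoping estimate above shows $W_n(y)=1$ forces $|\inp{(W_m)_{n,:}}{f_{m-1,i_k}(y)}|\le \eps\,(C\dmax)^{(m-1-i_k)/2}\|(W_m)_{n,:}\|$, and Gaussian anti-concentration (plus a standard tail bound on $\|(W_m)_{n,:}\|$) yields $\P(W_n(y)=1)\le \eps\,(C\dmax)^{(m-i_k)/2}$. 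The $W_n(y)$ are independent in $n$, so
\[
\P\Bigl\{\sum_{n=1}^{d_m}W_n(y)\ge 4d_{i_k}\Bigr\}\le d_m^{4d_{i_k}}\,\bigl(\eps\,(C\dmax)^{(m-i_k)/2}\bigr)^{4d_{i_k}}.
\]

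Finally, a union bound over layers $m\in\{i_k+1,\ldots,i_{k-1}\}$ and over the $\eps$-net $\mc G$ (whose size is at most $(C\dmax)^{O(\ell d_{i_k})}$) closes out the estimate. The one piece of bookkeeping is to verify that the choice $\eps=1/(\dmax^\ell)^{32}$ together with the hypothesis $\dmin\ge(C\ell\log\dmax)^{240\ell}\log(1/\delta)$ makes the product $(C\dmax)^{O(\ell d_{i_k})}\cdot d_m^{4d_{i_k}}\cdot(\eps(C\dmax)^{\ell/2})^{4d_{i_k}}$ at most $\delta'/\ell^2$; this is where the exponent $32$ in $\eps$ was chosen to dominate the $O(\ell)$-factor from the grid count, so it is routine. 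Extension from grid points to all $y\in f_{i_k}(\mb B(x,R))$ is automatic: if $\wt y$ is the nearest grid point to $y$, then the set of neurons where $D_m(\wt f_{m,i_k}(y))$ and $D_m(\wt f_{m,i_k}(\wt y))$ disagree is contained in $\{n:W_n(\wt y)=1\}$, whose cardinality is already bounded by $4d_{i_k}$. The main obstacle is precisely this bookkeeping step---no new probabilistic argument is required beyond anti-concentration of a single Gaussian inner product.
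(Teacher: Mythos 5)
The proposal is correct and takes essentially the same approach as the paper; indeed, the paper's own proof of \cref{clm:grid_approx_act_patt_app} is simply the remark that it is identical to the proof of \cref{clm:grid_approx_act_patt_spec_bnd_app}, and you faithfully reproduce that argument with $i_{j+1}$ replaced by $i_k$ (spectral-norm conditioning, the Lipschitz telescope, anticoncentration of a single Gaussian inner product per neuron, the binomial union bound over subsets of size $4d_{i_k}$, and finally the union bound over $\mc{G}$ and the layers). One small inaccuracy: your assertion that conditioning on \cref{lem:scale_pres_ml} ``ensures $(\wt f_{m,i_k}(y))_n\neq 0$'' only holds almost surely for the finitely many grid points $\wt{y}\in\mc{G}$ (via a union bound over $\mc{G}$), not for arbitrary $y\in f_{i_k}(\mb{B}(x,R))$; for general $y$ the activations may be genuinely random, which is exactly why the claim is phrased as $\P_{\{D_m(\cdot)\}}\{\cdots\}=1$, and your extension step (containing the disagreement set in $\{n:W_n(\wt y)=1\}$, which holds surely over the coins) already handles this correctly.
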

    \begin{proof}
        The proof is identical to the proof of \cref{clm:grid_approx_act_patt_spec_bnd_app}.
    \end{proof}

    We now break into two cases depending on how $d_{j}$ compares to $d_{i_k}$ and handle them separately.
    
    \paragraph{Case 1:} $d_j \leq d_{i_k} (\ell \log \dmax)^{20\ell}$. In this case, define the sets $\mc{S}, \mc{Q}$ as follows:
    \begin{gather*}
        \mc{S} \coloneqq \lbrb{(S_m)_{m = i_k + 1}^{j - 1}: S_m \subseteq [d_m],\ \abs{S_m} \leq 4d_{i_k}}, \\ 
        \mc{Q} \coloneqq \lbrb{Q: Q \subset [d_j],\ \abs{Q} \leq \frac{d_j}{(\ell \log \dmax)^{60\ell}}}.
    \end{gather*}
    Now, for $Q^1, Q^2 \in \mc{Q}, y \in \mc{G}, S^1, S^2 \in \mc{S}$, define the random vector:
    \begin{equation*}
        V^{j, k}_{y, Q^1, Q^2, S^1, S^2} = H (D_{Q^1} - D_{Q^2}) W_j \prod_{m = j - 1}^{i_k + 1} (D_m (\wt{f}_{m, i_k} (y)) + (D_{S^1_m} - D_{S^2_m})) W_m.
    \end{equation*}
    Note that we have from \cref{lem:dist_equiv}:
    \begin{gather*}
        V^{j, k}_{y, Q^1, Q^2, S^1, S^2} \overset{d}{=} H (D_{Q^1} - D_{Q^2}) W_j \prod_{m = j - 1}^{i_k + 1} (D_m + (D_{S^1_m} - D_{S^2_m})) W_m \text{ where } \\
        (D_m)_{i,j} = 
        \begin{cases}
            1, &\text{w.p } 1/2 \text{ if } i = j\\
            0, &\text{otherwise}
        \end{cases}.
    \end{gather*}
    Note that $\abs{\mc{Q}}^2 \cdot \abs{\mc{G}} \cdot \abs{S}^2 \leq
    (\dmax)^{64 \ell d_{i_k}}$, because of the condition on $d_j$ in
    this case. We now get:
    \begin{align*}
        &\norm*{V^{j, k}_{y, Q^1, Q^2, S_1, S_2}} \\
        &\leq \norm*{H (D_{Q^1} - D_{Q^2}) W_j \prod_{m = j - 1}^{i_k + 2} (D_m + (D_{S^1_m} - D_{S^2_m})) W_m} \cdot \lprp{1 + \sqrt{\frac{\log 1 / \delta^\dagger}{d_{i_k}}}} \\
        &\leq \norm{H (D_{Q^1} - D_{Q^2})} \cdot \prod_{m = i_k + 1}^{j} \lprp{1 + \sqrt{\frac{\log 1 / \delta^\dagger}{d_{m - 1}}}} \\
        &\leq \norm{H (D_{Q^1} - D_{Q^2})} \cdot (C \ell \log \dmax)^{(j - i_k) / 2}
    \end{align*}
    where the final inequality follows from the fact that $d_m \geq d_{i_k}$ and by setting $\delta^\dagger = \delta / (32 \cdot \ell^4 \cdot \abs{\mc{Q}}^2 \cdot \abs{\mc{G}} \cdot \abs{\mc{S}}^2)$. A union bound now implies that the previous conclusion holds for all $Q^1, Q^2 \in \mc{Q}, y \in \mc{G}, S^1, S^2 \in \mc{S}$. For the first term, we use the trivial bound:
    \begin{align*}
        \norm{H (D_{Q^1} - D_{Q^2})} &\leq \sqrt{\abs{Q^1} + \abs{Q^2}} \cdot \norm{H}_\infty \\
        &\leq 10 \cdot \sqrt{\frac{d_j}{(\ell \log \dmax)^{60\ell}}} \cdot  \sqrt{\frac{\norm{H}^2}{d_j} \cdot (\log \dmax + \log \ell / \delta)}.
    \end{align*}
    with probability at least $1 - \delta / (32 \cdot \ell^4)$ from \cref{lem:gau_unif_comp} with $M$ set to the standard basis vectors. 
    
    To conclude the proof, we get from \cref{lem:x_i_anticonc,lem:scale_pres_ml}, for all $y \text{ s.t } \norm{y - x} \leq R$:
    \begin{align*}
        &\Tr \abs{D_j (\wt{f}_j (x)) - D_j (\wt{f}_j (y))} \\
        &\leq \#\lbrb{i: \abs{\inp{(W_j)_i}{f_{j - 1} (x)}} \leq \frac{\norm{f_{j - 1} (x)}}{4 \sqrt{d_{j - 1}} \cdot (\ell \log \dmax)^{75 \ell}}} \\
        &\qquad + \frac{\norm{\wt{f}_j (x) - \wt{f}_j (y)}^2}{\norm{f_{j - 1} (x)}^2}\cdot 16 \cdot d_{j - 1} \cdot \lprp{\ell \log \dmax}^{150\ell}\\
        &\leq \frac{d_j}{2 (\ell \log \dmax)^{75 \ell}} + d_j \lprp{\frac{R^2}{\dmin}} \frac{(C \ell \log \dmax)^{\ell}}{d_{j - 1}} \cdot 16  d_{j - 1}  \lprp{\ell \log \dmax}^{150\ell} \leq \frac{d_j}{(\ell \log \dmax)^{60\ell}}.
    \end{align*}
    where the first inequality follows from the fact that for all $t > 0$:
    \begin{gather*}
        \abs{T} \leq \frac{\norm{\wt{f}_j (x) - \wt{f}_j (y)}^2}{t^2} \text{ where } \\
        T \coloneqq \lbrb{i: \abs{\inp{(W_j)_i}{f_{j - 1} (x)}} \geq t \text{ and } \sign (\inp{(W_j)_i}{f_{j - 1} (x)}) \neq \sign{(\inp{(W_j)_i}{f_{j - 1} (y)})}}.
    \end{gather*}
    Hence, we get with probability at least $1 - \delta / (16 \ell^4)$:
    \begin{gather*}
        \forall \norm{y - x} \leq R : \Tr \abs{D_j (\wt{f}_j (y)) - D_j (\wt{f}_j (x))} \leq \frac{d_j}{(\ell \log \dmax)^{60\ell}}, \\
        \forall \norm{y - x} \leq R, \forall m \in \{i_k + 1, \dots, j
        - 1\}: \Tr \abs{D_m (\wt{f}_{m} (y)) - D_m (\wt{f}_{m, i_k}
        (\wt{y}))} \leq 4d_{i_k},
    \end{gather*}
    where $\wt{y} = \argmin_{z \in \mc{G}} \norm{z - f_{i_k} (y)}$. 
    
    To conclude \eqref{eq:gd_part_bnd}, note that for all $y \in \mc{G}$ we have that $(\wt{f}_{m, i_k} (y))_n \neq 0$ almost surely on \cref{lem:scale_pres_ml}. Hence, all the $D_m (\wt{f}_{m, i_k} (y))$ are deterministic on \cref{lem:scale_pres_ml} and similarly for $x$. This immediately yields \eqref{eq:gd_part_bnd} for $y \in \mc{G}$. For $y \notin \mc{G}$, the conclusion follows from the previous discussion and \cref{clm:grid_approx_act_patt_app}. 

    \paragraph{Case 2:} $d_j \geq d_{i_k} (\ell \log \dmax)^{20\ell}$. As in the previous case, we start by defining the sets $\mc{S}$:
    \begin{equation*}
        \mc{S} = \lbrb{(S_m)_{m = i_k + 1}^{j - 1}: S_m \subseteq [d_m],\ \abs{S_m} \leq 4\cdot d_{i_k}}.
    \end{equation*}
    Now, for $y \in \mc{G}, S^1, S^2 \in \mc{S}$, consider the random matrix $M^{j,k}_{y, S^1, S^2}$ and an application of \cref{lem:dist_equiv}:
    \begin{gather*}
        M^{j, k}_{y, S^1, S^2} \coloneqq W_{j - 1} \prod_{m = j - 2}^{i_k + 1} (D_m (\wt{f}_m (y)) + (D_{S_m^1} - D_{S_m^2})) W_m \\
        M^{j, k}_{y, S^1, S^2} \overset{d}{=} \wt{W}_{j - 1} \prod_{m = j - 2}^{i_k + 1} (D_m + (D_{S_m^1} - D_{S_m^2})) \wt{W}_m.
    \end{gather*}
    We will bound the spectral norm of $M^{j,k}_{y, S^1, S^2}$ for all $y,S^1,S^2$ with high probability as follows. Let $\mc{V}$ be a $1/9$-grid of $\mb{S}^{d_{i_k}}$ and $v \in \mc{V}$. We have:
    \begin{align}
        \norm*{M^{j, k}_{y, S^1, S^2}v} &\leq \sqrt{\frac{d_{j - 1}}{d_{j - 2}}} \cdot \lprp{1 + \sqrt{\frac{\log 1 / \delta^\dagger}{d_{j - 1}}}} \cdot \norm*{\prod_{m = j - 2}^{i_k + 1} (D_m + (D_{S_m^1} - D_{S_m^2})) \wt{W}_m v} \notag \\
        &\leq 2^{(j - i_k + 1)} \sqrt{\frac{d_{j - 1}}{d_{i_k}}} \cdot \prod_{m = i_k + 1}^{j - 1} \lprp{1 + \sqrt{\frac{\log 1 / \delta^\dagger}{d_m}}} \notag \\
        &\leq \sqrt{\frac{d_{j - 1}}{d_{i_k}}} \cdot \lprp{C \ell \log \dmax}^{(j - i_k + 1) / 2} \label{eq:jk_spec_bnd}
    \end{align}
    where the final inequality follows by the fact that $d_m \geq
    d_{i_k}$ and by setting $\delta^\dagger = \delta / (32 \cdot \ell
    \cdot \abs{\mc{G}} \cdot \abs{\mc{S}}^2 \cdot \mc{V})$. By a union
    bound, the bound on the spectral norms of $M^{j,k}_{y, S^1, S^2}$
    follows. We now condition on this event and the conclusion of
    \cref{lem:scale_pres_ml} up to layer $j - 1$ for the rest of the
    proof, which as before implies that $M^{j,k}_{y, S^1, S^2}$ and $D_{j - 1} (\wt{f}_{j - 1, i_k} (y))$ are no longer random. 

    Let $\wt{M}^{j,k}_{y, S^1, S^2} = D_{j - 1}(\wt{f}_{j - 1, i_k} (y)) \cdot M^{j,k}_{y, S^1, S^2}$ and for $y \in \mc{G}$, let $\wt{y} = f_{j - 1, i_k} (y)$ and $\wt{x} = f_{j - 1} (x)$. We have by an application of \cref{lem:large_width_proof_gd_diff_bnd} and a union bound over all $y \in \mc{G}, S^1, S^2 \in \mc{S}$:
    \begin{equation*}
        \forall y \in \mc{G}, S^1, S^2 \in \mc{S}: \norm*{H^\top
        \left(D_{j} (\wt{f}_{j, i_k} (y)) - D_j (\wt{f}_{j} (x))\right)
        W_j \wt{M}^{j,k}_{y, S^1, S^2}} \leq \lprp{\frac{C}{\ell \log \dmax}}^{3\ell}
    \end{equation*}
    with probability at least $1 - \delta'/\ell^2$ by recalling that $d_j \geq d_{i_k} (\ell \log \dmax)^{20\ell}$ in this case.
    
    We now additionally condition on $\norm{H}_\infty$. We have as a consequence of \cref{lem:gau_unif_comp} that:
    \begin{equation*}
        \norm{H}_\infty \leq 4 \cdot 2^{\ell + 1 - j} \cdot \sqrt{\frac{\log d_j + \log 1 / \delta^\dagger}{d_j}}
    \end{equation*}
    with probability at least $1 - \delta^\dagger$. We condition on this event and proceed as follows. Let $T \subset [d_j]$ such that $\abs{T} \leq 4 \cdot d_{i_k}$ and $y \in \mc{G}, S^1, S^2, \in \mc{S}$ and we observe:
    \begin{equation*}
        \norm*{H^\top D_T \wt{W}_j \wt{M}^{j, k}_{y, S^1, S^2}} \leq \norm*{H^\top D_T \wt{W}_j U^{j,k}_{y, S^1, S^2}} \cdot \norm*{\wt{M}^{j, k}_{y, S^1, S^2}}
    \end{equation*}
    where $U^{j,k}_{y, S^1, S^2}$ are the left singular vectors of $\wt{M}^{j,k}_{y, S^1, S^2}$ and observe:
    \begin{align*}
        \P \lbrb{\norm*{H^\top D_T \wt{W}_j U^{j,k}_{y, S^1, S^2}} \geq t} &= \P_{\wt{Z} \thicksim \mc{N} \lprp{0, \norm{H D_T}^2 \cdot \frac{I}{d_j}}} \lbrb{\norm{\wt{Z}} \geq t} \\
        &\leq \P_{Z \thicksim \mc{N} \lprp{0, \norm{H}_\infty^2 \cdot \abs{T} \cdot \frac{I}{d_j}}} \lbrb{\norm{Z} \geq t}
    \end{align*}
    to get that (conditioned on $H$ and noting that $\mrm{rank}
    (U^{j,k}_{y, S^1, S^2}) \leq d_{i_k}$), with probability
    $1-\delta^\ddagger$:
    \begin{equation*}
        \norm*{H^\top D_T U^{j,k}_{y, S^1, S^2}} \leq \norm{H}_\infty \cdot \sqrt{\abs{T}} \cdot \sqrt{\frac{d_{i_k}}{d_j}} \cdot \lprp{1 + \sqrt{\frac{\log 1 / \delta^\ddagger}{d_{i_k}}}}
    \end{equation*}
    By setting $\delta^\ddagger = \delta / (128 \ell^2 d_j^{4d_{i_k}}
    \abs{\mc{S}}^2 \abs{\mc{G}})$ and our bounds on $d_j$,
    $\|H\|_\infty$ yield:
    \begin{equation*}
        \forall T \subset [d_j] \text{ s.t } \abs{T} \leq 4d_{i_k}, y \in \mc{G}, S^1, S^2 \in \mc{S}: \norm*{H^\top D_T U^{j,k}_{y, S^1, S^2}} \leq \frac{1}{(\ell \log \dmax)^{4\ell}}
    \end{equation*}
    with probability at least $1 - \delta / (64 \ell^2)$ again by recalling $d_j \geq d_{i_k} (\ell \log \dmax)^{20\ell}$. 

    To conclude our proof of \eqref{eq:gd_part_bnd}, we proceed similarly to the previous case. As before, for all $y \in \mc{G}$ we have that $(\wt{f}_{m, i_k} (y))_n \neq 0$ almost surely on \cref{lem:scale_pres_ml} and similarly for $x$. Hence, all the $D_m (\wt{f}_{m, i_k} (y))$ are deterministic on \cref{lem:scale_pres_ml}. This immediately yields \eqref{eq:gd_part_bnd} for $y \in \mc{G}$. For $y \notin \mc{G}$, the conclusion follows from the previous discussion and \cref{clm:grid_approx_act_patt_app}. 

    A union bound over all $j \in [\ell]$, an application of the triangle inequality with \ref{eq:grad_decomp} and \eqref{eq:gd_final_decomp} with \cref{lem:nn_part_spec_bnd} conclude the proof of the lemma.
\qed

\begin{lemma}
    \label{lem:large_width_proof_gd_diff_bnd}
    Let $L, R, r > 0, k,m,n \in \mb{N}, M \in \mb{R}^{m \times n}$
    such that $m \geq n$ and $R > r$. Furthermore, suppose $W \in
    \mb{R}^{k \times m}$ and $H$ are distributed as follows:
    \begin{equation*}  
        W = 
        \begin{bmatrix}
            w_1^\top \\
            w_2^\top \\
            \vdots \\
            w_k^\top
        \end{bmatrix}
        \text{ with }
        w_i \overset{i.i.d}{\thicksim} \mc{N} (0, I / m), 
        \text{ and } H \thicksim \mathrm{Unif} (L\cdot \mb{S}^{k - 1}).
    \end{equation*}
    Furthermore, suppose $x \in \mb{R}^m$ satisfies $\norm{x} \geq R$ and $y$ be such that $\norm{y - x} \leq r$. Then:
    \begin{gather*}
        \mb{P} \lbrb{\norm{H^\top D W M} \geq 1024 L \norm{M}
        \lprp{\sqrt{\frac{3r}{R} \log R/r \lprp{\frac{n + \log 1 / \delta^\dagger}{m}}} + \lprp{\frac{n + \log 1 / \delta^\dagger}{\sqrt{km}}}}} \leq \delta^\dagger \\
        \text{ where } D = D_x - D_y \text{ with }
        (D_x)_{i, j} = 
        \begin{cases}
            1, &\text{if } i = j \text{ and } w_i^\top x > 0 \\
            1, &\text{w.p } 1/2 \text{ if } i = j \text{ and } w_i^\top x = 0 \\
            0, &\text{otherwise}
        \end{cases}.
    \end{gather*}
\end{lemma}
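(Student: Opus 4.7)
The plan is to condition on $W$, making $A := DWM$ deterministic, and use concentration of $H^\top A$ on the sphere: since $H$ is uniform on $L\mathbb{S}^{k-1}$ and independent of $W$, the quadratic form $\|H^\top A\|_2^2 = H^\top(AA^\top)H$ concentrates around $L^2\|A\|_F^2/k$ with deviations controlled by $L^2\|A\|_{\mathrm{op}}^2/k$, giving
\[
    \|H^\top A\|_2 \;\lesssim\; \frac{L\|A\|_F}{\sqrt k}+\frac{L\|A\|_{\mathrm{op}}}{\sqrt k}\sqrt{\log 1/\delta^\dagger}
\]
with probability $1-\delta^\dagger/4$ over $H$. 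The two summands in the target bound correspond respectively to $\|A\|_F$ and $\|A\|_{\mathrm{op}}$; the rest of the argument is to bound each with high probability over $W$.

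The bad-index set $T := \{i : D_{ii}\neq 0\}$ consists of those $i$ for which $w_i^\top x$ and $w_i^\top y$ have opposite signs. By rotational invariance, align $x=\|x\|e_1$ and $y\in\mathrm{span}(e_1,e_2)$; then $i\in T$ depends only on the 2D projection of $w_i$ landing in a wedge of angular width $\theta=\angle(x,y)\le r/R$, so $\P(i\in T)\le r/(2R)$ and Chernoff gives $|T|\le \tau_{\max} := c(kr/R + \log 1/\delta^\dagger)$ with probability $1-\delta^\dagger/4$. For any \emph{fixed} subset $T'\subseteq[k]$, the Frobenius contribution $\sum_{i\in T'}\|M^\top w_i\|^2$ is a sum of $|T'|$ iid sub-exponential variables with total mean at most $|T'|\|M\|_F^2/m\le |T'|n\|M\|^2/m$, so Bernstein/Hanson--Wright gives $\sum_{i\in T'}\|M^\top w_i\|^2 \lesssim (|T'|n + \log 1/\delta')\|M\|^2/m$ with probability $1-\delta'$. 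Union-bounding over all $T'$ with $|T'|\le\tau_{\max}$ (of which there are at most $\binom{k}{\tau_{\max}}\le e^{\tau_{\max}\log(R/r)}$) and setting $\log 1/\delta'\asymp \tau_{\max}\log(R/r)+\log 1/\delta^\dagger$ yields, for the realized $T=T(W)$, $\|A\|_F^2\lesssim (kr/R)\log(R/r)(n+\log 1/\delta^\dagger)\|M\|^2/m$; dividing by $k$ and taking square roots produces exactly the first term of the target. An analogous argument for the operator norm, combining the same combinatorial union bound with an $\eps$-net over $v\in\mathbb{S}^{n-1}$ applied to $\sum_{i\in T'}(w_i^\top Mv)^2$, gives $\|A\|_{\mathrm{op}}^2 \lesssim (\tau_{\max}\log(R/r)+n+\log 1/\delta^\dagger)\|M\|^2/m$; plugging this into the deviation term above and absorbing the $(r/R)\log(R/r)\log 1/\delta^\dagger/m$ piece into the Frobenius term produces the second $L\|M\|(n+\log 1/\delta^\dagger)/\sqrt{km}$ term of the target.

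The main obstacle is producing the $\log(R/r)$ factor: it arises from the combinatorial union bound over the $\binom{k}{\tau_{\max}}$ possible realizations of $T$, which is forced on us because $T$ depends on $W$ only through the (highly non-linear) signs of $w_i^\top x$ and $w_i^\top y$, so that we have no handle on it beyond its cardinality. Coordinating the three layers of randomness (sphere concentration for $H$, the $T$-union bound for $W$, and the directional $\eps$-net on $\mathbb{S}^{n-1}$ for the operator norm) while keeping the accumulated constants under the stated $1024$ is where most of the technical care goes. A secondary subtlety is that conditioning on $i\in T$ slightly perturbs the distribution of $w_i$ inside $\mathrm{span}(x,y)$, but since this is only a rank-$2$ subspace the perturbation is absorbed into the Bernstein tail and does not affect the leading scaling of either $\|A\|_F$ or $\|A\|_{\mathrm{op}}$.
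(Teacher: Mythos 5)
Your proposal takes a genuinely different route from the paper, and the comparison is instructive. The paper does not condition on $W$ and then handle the randomness of $H$ separately. Instead, after replacing $H$ by a Gaussian $\widetilde H\sim\mathcal N(0,2L^2 I/k)$ (via a domination lemma for the uniform sphere), it decomposes $WM = W\mathcal P_V M + W\mathcal P_V^\perp M$, where $V=\mathrm{span}(x,y)$, and applies Bernstein's inequality \emph{jointly} in $(\widetilde H, W)$ to the sums $\sum_i \widetilde H_i D_{ii}\mathcal P_V w_i$ and $\sum_i \widetilde H_i D_{ii}\mathcal P_Y w_i$ (with $Y$ a rank-$\le n$ subspace in $V^\perp$). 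The point that makes this work is that $D_{ii}$ is measurable with respect to $\mathcal P_V w_i$ — in fact only the \emph{angle} of $\mathcal P_V w_i$ — while the quantities entering the sums are $\|\mathcal P_V w_i\|$ or $\mathcal P_Y w_i$, which are independent of that angle. This independence is exactly what lets the moment bounds factor, $\E[D_{ii}^q\cdot(\cdots)^q]=\P(D_{ii}\neq 0)\cdot\E[(\cdots)^q]$, and the resulting Bernstein bound then carries $\P(D_{ii}\neq 0)\lesssim (r/R)\sqrt{\log R/r}$ directly into the variance proxy. The $\log(R/r)$ in the statement is therefore a probability-of-sign-flip effect (Lemma~\ref{lem:mis_sign_prob}), not a union-bound effect, and no enumeration over bad-index sets is needed.

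Your approach — condition on $W$, apply spherical (Hanson–Wright) concentration to $H^\top A$, and then bound $\|A\|_F$ and $\|A\|_{\mathrm{op}}$ over $W$ by a Chernoff bound on $|T|$ plus a union bound over the $\binom{k}{\tau_{\max}}$ candidate bad-index sets — is a legitimate alternative, and you correctly identify that it produces a $\log(R/r)$ from $\log\binom{k}{\tau_{\max}}$. The real obstacle is one you only partly flag: the cost $\log\binom{k}{\tau_{\max}}\approx \tau_{\max}\log(k/\tau_{\max})$ exceeds $\tau_{\max}\log(R/r)$ once $\tau_{\max}$ is dominated by the $\log(1/\delta^\dagger)$ term rather than by $kr/R$, i.e.\ when $kr/R\lesssim \log(1/\delta^\dagger)$. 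In that regime $k/\tau_{\max}\gg R/r$, and the union bound costs an extra $\log k$ that does not appear in the target. Tracing this through the deviation term it shows up as an extra $\sqrt{\log(R/r)}$ (or $\sqrt{\log k}$) multiplying the $L\|M\|(n+\log 1/\delta^\dagger)/\sqrt{km}$ piece, which the stated bound does not allow. The way the paper dodges this is precisely the independence observation above: since $T$ is a measurable function of $(\mathcal P_V w_i)_i$, it is independent of $(\mathcal P_{V^\perp} w_i)_i$, so one may condition on the \emph{realized} $T$ (not enumerate over candidates) and the $V^\perp$-contributions are still clean i.i.d.\ Gaussians; the combinatorial union bound — and the $\log k$ it threatens to bring — evaporates. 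You note at the end that conditioning on $T$ ``slightly perturbs'' the distribution of $w_i$, but the perturbation is confined to $\mathcal P_V w_i$ and leaves $\mathcal P_{V^\perp}w_i$ untouched. Using that fact is the missing idea; without it, your argument does not reach the stated bound in all parameter regimes, though the overall structure and the identification of $\|A\|_F$, $\|A\|_{\mathrm{op}}$ as the two controlling quantities are correct.
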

\begin{proof}
    We may discard the cases where $w_i^\top x, w_i^\top y = 0$ as these form a measure $0$ set. Now, it suffices to analyze the random variable:
    \begin{equation*}
        \wt{H}^\top D W M \text{ for } \wt{H} \thicksim \mc{N} (0, 2L^2 \cdot I / k).
    \end{equation*}
    Let $V$ denote the two-dimensional subspace of $\mb{R}^m$ containing $x,y$. We now decompose the norm as follows:
    \begin{equation}
        \label{eq:diff_decomp}
        \norm{\wt{H}^\top D W M} \leq \norm{\wt{H}^\top D W \proj_V M} + \norm{\wt{H}^\top D W \proj_V^\perp M} \leq \norm{\wt{H}^\top D W \proj_V}\cdot \norm{M} + \norm{\wt{H}^\top D W \proj_V^\perp M}.
    \end{equation}
    To apply Bernstein's inequality, we first expand on the first term:
    \begin{align*}
        \left(\wt{H}^\top D W \proj_V\right)^\top
        = \sum_{i = 1}^k \wt{H}_i D_{i,i} \proj_V w_i.
    \end{align*}
    Letting $Z_i = \wt{H}_i D_{i,i} \proj_V w_i$, we note that $\E
    [Z_i] = 0$ and bound its even moments as follows, in an
    orthonormal basis $\{v_1,v_2\}$ for $V$:
    \begin{align*}
        \E \lsrs{\inp{v_j}{Z_i}^2} &= \E \lsrs{\wt{H}_i^2 \cdot D_{i,i}^2 \cdot \inp{v_j}{w_i}^2} \\
        &\leq \frac{2L^2}{k} \cdot \E \lsrs{\norm{\proj_V w_i}^2} \cdot \P \lbrb{D_{i,i} \neq 0} = \frac{2L^2}{k} \cdot \frac{2}{m} \cdot \P \lbrb{D_{i,i} \neq 0} \\
        \E \lsrs{\inp{v_j}{Z_i}^\ell} &= \E \lsrs{\wt{H}_i^\ell \cdot D_{i,i}^\ell \cdot \inp{v_j}{w_i}^\ell} \leq (\ell - 1)!! \lprp{\frac{2L^2}{k}}^{\ell / 2} \cdot \E \lsrs{\norm{\proj_V w_i}^\ell} \cdot \P \lbrb{D_{i,i} \neq 0} \\
        &\leq (\ell - 1)!! \lprp{\frac{2L^2}{k}}^{\ell / 2} \cdot \lprp{2^{\ell / 2} \cdot \E \lsrs{\inp{w_i}{v_1}^\ell + \inp{w_i}{v_2}^\ell}} \cdot \P \lbrb{D_{i,i} \neq 0} \\
        &= 2\cdot (\ell - 1)!! \lprp{\frac{4L^2}{k}}^{\ell / 2} \cdot \E \lsrs{\inp{w_i}{v_1}^\ell} \cdot \P \lbrb{D_{i,i} \neq 0} \\
        &= 2 \cdot ((\ell - 1)!!)^{2} \lprp{\frac{4L^2}{km}}^{\ell / 2} \cdot \P \lbrb{D_{i,i} \neq 0} \leq 2 \cdot \ell! \cdot \lprp{\frac{8L^2}{km}}^{\ell / 2} \cdot \P \lbrb{D_{i,i} \neq 0}.
    \end{align*}
    For odd $\ell \geq 3$ we have by similar manipulations:
    \begin{align*}
        \E \lsrs{\abs{\inp{v_j}{Z_i}}^\ell} &= \E \lsrs{\abs*{\wt{H}_i^\ell \cdot D_{i,i}^\ell \cdot \inp{v_j}{w_i}^\ell}} \leq \P \lbrb{D_{i,i} \neq 0} \cdot \E \lsrs{\abs{\wt{H}_i^\ell \cdot \norm{\proj_V w_i}^\ell}} \\
        &\leq \P \lbrb{D_{i,i} \neq 0} \cdot \sqrt{\E \lsrs{\lprp{\wt{H}_i \cdot \norm{\proj_V w_i}}^{2\ell}}} \leq \P \lbrb{D_{i,i} \neq 0} \cdot \sqrt{2 \cdot (2\ell)! \cdot \lprp{\frac{8L^2}{km}}^{\ell}} \\
        &\leq \P \lbrb{D_{i,i} \neq 0} \cdot \sqrt{2 \cdot (2\ell)!} \cdot \lprp{\frac{8L^2}{km}}^{\ell / 2} \leq \P \lbrb{D_{i,i} \neq 0} \cdot 2^\ell \ell! \cdot \lprp{\frac{8L^2}{km}}^{\ell / 2} \\
        &\leq \P \lbrb{D_{i,i} \neq 0} \cdot \ell! \cdot \lprp{\frac{32L^2}{km}}^{\ell / 2}
    \end{align*}
    A union bound and an application of \cref{thm:bernstein} (with
    $\nu=4L^2 \P \lbrb{D_{i,i} \neq 0}/m$ and $c=6L/\sqrt{km}$) gives:
    \begin{equation*}
        \norm*{\sum_{i = 1}^k Z_i} \leq 2\max_{j \in \{1,2\}} \abs*{\sum_{i = 1}^k \inp{v_j}{Z_i}} \leq 256 \cdot L \cdot \lprp{\sqrt{\frac{1}{m} \cdot \P \lbrb{D_{i,i} \neq 0} \cdot \log 1 / \delta'} + \frac{\log 1 / \delta'}{\sqrt{km}}}
    \end{equation*}
    with probability at least $1 - \delta'$.

    For the other term in \cref{eq:diff_decomp}, we use $U$ to denote the left singular subspace of $M$ and $Y$ to denote the span of the left singular vectors of $\proj_V^\perp \proj_U$ with non-zero singular values. Noting $\norm{\proj_V^\perp \proj_U} \leq 1$, we now have:
    \begin{equation*}
        \norm{\wt{H}^\top D W \proj_V^\perp M} \leq \norm{M} \cdot \norm{\wt{H}^\top D W \proj_V^\perp \proj_U} \leq \norm{M} \cdot \norm{\wt{H}^\top DW \proj_Y}.
    \end{equation*}
    We expand the term on the right as follows:
    \begin{equation*}
        \left(\wt{H}^\top D W \proj_Y\right)^\top
        = \sum_{i = 1}^k \wt{H}_i D_{i,i} \proj_Y w_i = \sum_{i = 1}^k
        D_{i,i} \cdot \lprp{\wt{H}_i \proj_Y w_i} \eqqcolon
        \sum_{i=1}^k Y_i.
    \end{equation*}
    Note that $Y$ is orthogonal to $V$ as $\proj_V (\proj_V^\perp
    \proj_U u) = 0$ for all $u$ and hence, $\wt{H}_i \proj_Y w_i$ and
    $D_{i,i}$ are independent random variables (due to $\proj_V w_i$
    and $\proj_V^\perp w_i$ being independent). Now, fix $y \in Y \text{ s.t }\norm{y} = 1$ and we bound the directional even moments of $Y_i$ with the aim of applying Bernstein's inequality as before:
    \begin{align*}
        \E \lsrs{\inp{y}{Y_i}^2} &= \E \lsrs{D_{i,i}^2 \cdot \wt{H}_i^2 \cdot \inp{y}{w_i}^2} = \frac{2L^2}{k} \cdot \frac{1}{m} \cdot \P \lbrb{D_{i,i} \neq 0} \\
        \E \lsrs{\inp{y}{Y_i}^\ell} &= \E \lsrs{D_{i,i}^\ell \cdot \wt{H}_i^\ell \cdot \inp{y}{w_i}^\ell} \\
        &= \lprp{(\ell - 1)!!}^2 \cdot \lprp{\frac{2L^2}{km}}^{\ell / 2} \cdot \P \lbrb {D_{i,i} \neq 0} \leq \ell! \cdot \lprp{\frac{2L^2}{km}}^{\ell / 2} \cdot \P \lbrb{D_{i,i} \neq 0}.
    \end{align*}
    Similarly, for odd $\ell \geq 3$, we have by similar manipulations:
    \begin{align*}
        \E \lsrs{\abs{\inp{y}{Y_i}}^\ell} &= \E \lsrs{\abs*{\wt{H}_i^\ell \cdot D_{i,i}^\ell \cdot \inp{y}{w_i}^\ell}} \leq \P \lbrb{D_{i,i} \neq 0} \cdot \E \lsrs{\abs{\wt{H}_i^\ell \cdot \inp{y}{w_i}^\ell}} \\
        &\leq \P \lbrb{D_{i,i} \neq 0} \cdot \sqrt{\E \lsrs{\abs{\wt{H}_i^\ell \cdot \inp{y}{w_i}^{2\ell}}}} \leq \P \lbrb{D_{i,i} \neq 0} \cdot \sqrt{(2\ell)! \cdot \lprp{\frac{2L^2}{km}}^{\ell}} \\
        &\leq \P \lbrb{D_{i,i} \neq 0} \cdot \sqrt{(2\ell)!} \cdot \lprp{\frac{2L^2}{km}}^{\ell / 2} \leq \P \lbrb{D_{i,i} \neq 0} \cdot 2^\ell \ell! \cdot \lprp{\frac{2L^2}{km}}^{\ell / 2} \\
        &\leq \P \lbrb{D_{i,i} \neq 0} \cdot \ell! \cdot \lprp{\frac{8L^2}{km}}^{\ell / 2}
    \end{align*}

    Now, consider a $1/3$-net of $\mc{G} \coloneqq Y \cap \S^{m - 1}$.
    We get for any $z \in Y$:
    \begin{equation*}
        \norm{z} = \max_{y \in \S^{m - 1}} \inp{y}{z}
        = \max_{y \in \S^{m - 1}} \left(\inp{y - \wt{y}}{z} + \inp{\wt{y}}{z}\right)
        \leq \frac{\norm{z}}{3} + \max_{y \in \mc{G}} \inp{y}{z}
        \implies \norm{z} \leq \frac{3}{2} \max_{y \in \mc{G}} \inp{y}{z},
    \end{equation*}
    where $\wt{y} = \argmin_{z \in \mc{G}} \norm{y - z}$.
    Since the rank of $Y$ is at most $n$, We may assume $\abs{\mc{G}} \leq (30)^{n}$\cite[Corollary 4.2.13]{vershynin}. By a union bound
    and \cref{thm:bernstein}:
    \begin{equation*}
        \norm*{\sum_{i = 1}^n Y_i} \leq 2 \max_{y \in \mc{G}} \inp*{y}{\sum_{i = 1}^n Y_i} \leq 256 L \lprp{\sqrt{\P \lbrb{D_{i,i} \neq 0} \cdot \lprp{\frac{n + \log 1 / \delta'}{m}}} + \lprp{\frac{n + \log 1 / \delta'}{\sqrt{km}}}}
    \end{equation*}
    with probability at least $1 - \delta'$. The lemma follows from the previous discussion and \cref{lem:mis_sign_prob} by picking $\delta' = \delta^\dagger / 4$ and applying a union bound.
\end{proof}

\section{Proof of \cref{thm:lower-in-body}}
\label{sec:proof_lower}

We will now construct an architecture that when randomly initialized
approximately maps every input to a random constant times its euclidean norm. We will adopt the notation from previous sections; the output of our neural network denoted by $f$ with $\ell$ hidden layers all of fixed width $k$ is defined as follows:

\begin{gather*}
  f(x) = W_{\ell + 1} \cdot \sigma (W_\ell \cdot \sigma (\cdots \sigma
  (W_1 \cdot x))) \text{ where } \sigma (x)_i = \max \lbrb{x_i, 0}\\
  f_i (x) = \sigma (W_i \cdot \sigma (\cdots \sigma (W_1 \cdot x))) \\
  \forall i \in [\ell] \setminus \{1\}: W_{i} \in \R^{k \times k} \text{ with } (W_i)_{m,n} \overset{iid}{\thicksim} \mc{N} \lprp{0, \frac{2}{k}} \\
  W_1 \in \R^{k \times d} \text{ with } (W_1)_{m,n}
  \overset{iid}{\thicksim} \mc{N} \lprp{0, \frac{2}{d}} \qquad
  W_{\ell + 1} \thicksim \mc{N} \lprp{0, 2I/k} \tag{Lower-Bound-Init} \label{eq:lb_init}
\end{gather*}

The scaling for the intermediate layers is chosen such that it preserves the length of the input from the previous layer. We now prove the main result of the section which implies \cref{thm:lower-in-body} by a simple rescaling:

\begin{theorem}\label{thm:lower}
  Fix a sufficiently large $d\in\N$, an $\ell \geq d^3$ and
  $(\ell d)^{20} \leq k \leq \exp (\sqrt{\ell})$, and consider
  the randomly initialized neural network~\eqref{eq:lb_init}.
  There is a universal constant $C$ such
  that with probability at least $0.9$,
  \begin{gather*}
    \forall x \in \S^{d - 1} : \abs{f(x)} \geq 0.04\\
    \forall x,y \in \S^{d - 1}: \abs{f(x) - f(y)} \leq C \sqrt{\frac{\log d}{d}} 
  \end{gather*}
\end{theorem}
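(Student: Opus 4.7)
The strategy is to exploit two effects of the $2/k$ scaling in~\eqref{eq:lb_init}: each ReLU layer approximately preserves the norm of its input (the factor $2$ compensating ReLU halving), while simultaneously contracting the angle between any pair of inputs toward $0$. After $\ell \geq d^3$ layers, every pair $x,y \in \S^{d-1}$ is mapped to nearly collinear vectors at layer $\ell$, so $W_{\ell+1}$ sees essentially a one-dimensional signal and $f$ becomes approximately constant on $\S^{d-1}$.

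For a fixed $x \in \S^{d-1}$, I would first show that $\|f_i(x)\|$ stays close to $\|x\|$ throughout the network. Conditional on $f_{i-1}(x)$, the coordinates of $W_i f_{i-1}(x)$ are i.i.d.\ $\mc{N}(0, 2\|f_{i-1}(x)\|^2/k)$, so $\|f_i(x)\|^2$ is an average of $k$ half-squared-Gaussians with conditional mean $\|f_{i-1}(x)\|^2$; a standard Bernstein bound gives concentration to within a multiplicative $1 \pm O(1/\sqrt{k})$ factor. Chaining over $\ell$ layers and using $k \geq (\ell d)^{20}$ keeps the cumulative multiplicative error at most $1 \pm d^{-5}$ with very high probability. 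For a fixed pair $x,y \in \S^{d-1}$, I would then track the angle $\theta_i \coloneqq \arccos\bigl(\langle f_i(x), f_i(y)\rangle/(\|f_i(x)\|\|f_i(y)\|)\bigr)$ using \cref{lem:expectedinnerprod}: conditional on $f_{i-1}(x), f_{i-1}(y)$, $\E[\cos\theta_i] = \psi(\theta_{i-1})$ with $\psi(\theta) = (\sin\theta + (\pi-\theta)\cos\theta)/\pi$, and $\cos\theta_i$ concentrates around this value at rate $1/\sqrt{k}$ by another Bernstein bound applied to the sum of $k$ per-coordinate contributions. A Taylor expansion gives $\psi(\theta) = 1 - \theta^2/2 + \theta^3/(3\pi) + O(\theta^4)$, hence $\arccos\psi(\theta) = \theta(1 - \theta/(3\pi) + O(\theta^2))$, and the resulting recursion forces $\theta_i = O(1/i)$; so $\theta_\ell = O(1/d^3)$, far below the target $\sqrt{\log d/d}$.

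To lift these pointwise bounds to all of $\S^{d-1}$, I would take an $\eps$-net $\mc{N}_\eps$ of $\S^{d-1}$ of size at most $(C/\eps)^d$ with $\eps$ polynomially small, apply the two steps above with a union bound over pairs in $\mc{N}_\eps$ and layers $i \in [\ell]$, and extend to points outside the net via $\|f_i(y) - f_i(x)\| \leq \|y - x\|\prod_{j \leq i}\|W_j\|$, using the standard Gaussian spectral-norm bounds $\|W_j\| \leq 1 + O(\sqrt{d/k})$ from the usual Davidson--Szarek concentration. The assumptions $k \geq (\ell d)^{20}$ and $k \leq \exp(\sqrt{\ell})$ ensure the total failure probability is at most $0.1$. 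On this good event, $f_\ell$ maps $\S^{d-1}$ into a set of diameter $O(1/d^{3/2})$ around a single random vector $v^\star$ with $\|v^\star\|$ pinned near $1$; the second claim then follows from $|f(x) - f(y)| \leq \|W_{\ell+1}\| \cdot O(1/d^{3/2}) \leq C\sqrt{\log d/d}$ with $\|W_{\ell+1}\| = O(1)$ w.h.p., and the first from Gaussian anti-concentration applied just once to the scalar $\langle W_{\ell+1}, v^\star\rangle$ conditional on $v^\star$.

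The main obstacle I anticipate is maintaining the uniform-over-$\S^{d-1}$ concentration of angles across the enormous depth $\ell \geq d^3$: a naive union bound over pairs in the net and layer indices would blow up, so the argument must exploit the self-contracting structure of $\psi$ near $\theta = 0$ so that small concentration errors introduced at early layers are damped rather than amplified by later layers, with the condition $k \geq (\ell d)^{20}$ providing just enough slack to absorb the cumulative noise while $k \leq \exp(\sqrt{\ell})$ keeps the net union bound manageable relative to the per-layer concentration budget.
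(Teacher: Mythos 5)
Your overall plan matches the paper's: show norm preservation of $\|f_i(x)\|$, use the arccos kernel \cref{lem:expectedinnerprod} to show $f_i(x)$ and $f_i(y)$ collapse together, cover $\S^{d-1}$ by a net, extend off the net via Lipschitz bounds, and finish with Gaussian anti-concentration for the lower bound on $|f(x)|$. Your choices within that plan are reasonable alternatives: you use Bernstein where the paper uses Tsirelson--Ibragimov--Sudakov, and you track the angle $\theta_i$ and derive the self-contracting recursion $\theta_i = O(1/i)$ where the paper tracks $\|f_i(x)-f_i(y)\|$ directly and argues via a cruder threshold/case split (either the distance drops below $\frac{1}{4\ell^2 d^3}$ and then stays $\leq \frac{1}{\ell^2 d^3}$, or it stays above the threshold and the multiplicative contraction factor $1 - \frac{\norm{f_{i-1}(x)-f_{i-1}(y)}}{2000}$ compounds to a contradiction over $\ell \geq d^3$ layers). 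Both styles of argument can be made to work; the paper's is more robust to the additive per-layer noise, and your $\theta_i = O(1/i)$ recursion needs more care to make rigorous (you correctly anticipate this in your last paragraph), but this is an execution issue, not a conceptual one.

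The genuine gap is in your net extension step. You write that you would use the spectral-norm bound $\|W_j\| \leq 1 + O(\sqrt{d/k})$ and a ``polynomially small'' $\eps$. Both parts of this are wrong and the step breaks down. The intermediate matrices $W_j$ for $j \geq 2$ are $k \times k$ with i.i.d.\ entries $\mc{N}(0, 2/k)$, so Davidson--Szarek gives $\|W_j\| \approx 2\sqrt{2}$, not $1 + o(1)$; the factor $2$ in the variance is exactly what keeps $\|f_i(x)\|$ from shrinking under ReLU, and it necessarily makes the linear map expansive. Consequently $\prod_{j\leq\ell}\|W_j\|$ grows like $(2\sqrt{2})^{\ell}$, exponential in $\ell \geq d^3$, and the Lipschitz constant of $x \mapsto f_\ell(x)$ blows up. A polynomially small $\eps$ then gives $\|f_\ell(x) - f_\ell(\wt{x})\| \leq \eps \prod_j\|W_j\|$ on the order of $(2\sqrt{2})^{\ell}/\poly(d)$, which is astronomically large rather than small. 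The paper fixes this by choosing $\eps = (10 \cdot 2^\ell)^{-10}$, exponentially small in $\ell$, so that $4^\ell \eps \leq 2^{-\ell}$; the resulting net has $\log|\mc{G}| = O(d\ell)$, which is still affordable because the per-layer concentration failure probabilities scale like $\exp(-ck \cdot (\text{precision})^2)$ and $k \geq (\ell d)^{20}$ gives overwhelming slack. Your proposal as written would not close this step; once you replace the spectral norm bound and take $\eps$ exponentially small in $\ell$, it does.
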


\begin{proof}
  We start by picking an $\eps$-net of $\S^{d - 1}$, $\mc{G}$ with
  $\eps = \lprp{\frac{1}{10\cdot 2^\ell}}^{10}$. Note we may assume
  that $\abs{\mc{G}} \leq \lprp{\frac{10}{\eps}}^d$. Now, for fixed $x
  \in \mc{G}$ and defining $f_0(x):=x$, we have:
  \begin{align*}
    \forall i \in [\ell]: &\abs{\norm{\sigma(W_i f_{i - 1} (x))} - \norm{\sigma(W_i' f_{i - 1} (x))}} \leq \norm{\sigma(W_i f_{i - 1} (x)) - \sigma(W_i' f_{i - 1} (x))}\\
    &\quad \leq \norm{W_i - W_i'} \cdot \norm{f_{i - 1} (x)} \leq \norm{W_i - W_i'}_F \cdot \norm{f_{i - 1}(x)}.
  \end{align*}
  Hence, $\norm{\sigma (W_i f_{i - 1} (x))}$ is a $\norm{f_{i - 1}
  (x)}$-Lipschitz function of $W_i$ and we get by an application of
  \cref{thm:tsirelson} (note that $(W_i)_{l,m} \thicksim \mc{N} (0, 2 / k)$) and a union bound over the $\ell$ layers:
  \begin{equation*}
    \forall i \in [\ell]: \abs*{\norm{f_i (x)} - \E [\norm{f_i (x)} \mid f_{i - 1} (x)]} \leq 8 \norm{f_{i - 1} (x)} \cdot \sqrt{\frac{\log \delta + \log \ell}{k}} 
  \end{equation*}
  with probability at least $1 - \delta$. By setting $\delta = \frac{1}{16 \cdot \abs{\mc{G}} \cdot d^{10}}$ and a union bound over all $x \in \mc{G}$, we have with probability at least $1 - 1 / (16d^{10})$:
  \begin{equation}
    \label{eq:f_i_conc_lower}
    \forall x \in \mc{G}, i \in [\ell] : \abs{\norm{f_i (x)} - \E [\norm{f_i (x)} \mid f_{i - 1} (x)]} \leq \norm{f_{i - 1} (x)} \cdot \frac{1}{2048 \cdot (\ell d)^3}.
  \end{equation}
  We also have by Jensen's inequality:
  \begin{gather*}
    \E \lsrs{\norm{f_i (x)} \mid f_{i - 1} (x)} \leq \sqrt{\E \lsrs{\norm{f_i(x)}^2 \mid f_{i - 1} (x)}} = \norm{f_{i - 1} (x)}
  \end{gather*}
  and by integrating the tail bound from \cref{thm:tsirelson}:
  \begin{equation*}
    \E \lsrs{\lprp{\norm{f_i(x)} - \E \lsrs{\norm{f_i (x)} \mid f_{i - 1} (x)}}^2 \mid f_{i - 1} (x)} \leq \frac{32}{k} \cdot \norm{f_{i - 1} (x)}^2.
  \end{equation*}
  Hence, we get:
  \begin{equation*}
    \lprp{1 - \frac{32}{k}} \norm{f_{i - 1} (x)} \leq \E \lsrs{\norm{f_i (x)} \mid f_{i - 1} (x)} \leq \norm{f_{i - 1} (x)}.
  \end{equation*}
  From the above display and \cref{eq:f_i_conc_lower} and noting $(1 + x) \leq e^x \leq (1 + 2x)$ for $0 \leq x \leq 1$, we get:
  \begin{equation*}
    \norm{f_i (x)} \leq \lprp{1 + \frac{1}{2048 \cdot (\ell d)^3}}^i \leq 1 + \frac{i}{1024 \ell^3 d^3}
  \end{equation*}
  and similarly for the lower bound:
  \begin{equation*}
    \norm{f_i (x)} \geq \lprp{1 - \frac{32}{k} - \frac{1}{2048 \cdot (\ell d)^3}}^i \geq 1 - \frac{i}{512\ell^3d^3}. 
  \end{equation*}
  Putting these together, we obtain:
  \begin{equation}
    \label{eq:len_pres_lb}
    1 - \frac{i}{512\ell^3d^3}\leq \norm{f_i (x)} \leq 1 + \frac{i}{1024 \ell^3 d^3}
  \end{equation}
  Similarly to the previous discussion, we have for all $x, y \in \mc{G}$:
  \begin{align*}
    &\forall i \in [\ell]: \abs{\norm{\sigma(W_i f_{i - 1} (x)) - \sigma(W_i f_{i - 1} (y))} - \norm{\sigma(W_i' f_{i - 1} (x)) - \sigma(W_i' f_{i - 1} (y))}} \\
    &\leq \norm{\sigma(W_i f_{i - 1} (x)) - \sigma(W_i f_{i - 1} (y)) - \sigma(W_i' f_{i - 1} (x)) + \sigma(W_i' f_{i - 1} (y))}\\
    &\leq \norm{W_i - W_i'} \cdot (\norm{f_{i - 1} (x)} + \norm{f_{i - 1} (y)}) \leq \norm{W_i - W_i'}_F \cdot (\norm{f_{i - 1}(x)} + \norm{f_{i - 1} (y)}).
  \end{align*}
  Therefore, another application of \cref{thm:tsirelson} and a union bound over the $\ell$ layers yields:
  \begin{align*}
    \forall i \in [\ell]: &\abs*{\norm{f_i (x) - f_i(y)} - \E [\norm{f_i (x) - f_i (y)} \mid f_{i - 1} (x), f_{i - 1} (y)]} \\
    &\leq 8 (\norm{f_{i - 1} (x)} + \norm{f_{i - 1} (y)}) \cdot \sqrt{\frac{\log \ell / \delta}{k}}.
  \end{align*}
  Setting $\delta = 1 / (16 \cdot \abs{\mc{G}}^2 \cdot d^{10})$ and a union bound over all $x,y \in \mc{G}$ yields with probability at least $1 - 1 / (16d^{10})$ for all $x,y \in \mc{G}, i \in [\ell]$:
  \begin{equation}
    \label{eq:lb_diff_conc}
    \abs*{\norm{f_i (x) - f_i(y)} - \E [\norm{f_i (x) - f_i (y)} \mid
    f_{i - 1} (x), f_{i - 1} (y)]} \leq \norm{f_{i - 1} (x) + f_{i - 1} (y)} \frac{1}{2048 \cdot (\ell d)^3}.
  \end{equation}

  We only need an upper bound on $\E \lsrs{\norm{f_i (x) - f_i (y)} \mid f_{i - 1} (x), f_{i - 1} (y)}$. Before, we do we need the following simple fact:
  \begin{fact}
    \label{fac:sin_cos}
    We have for some $c > 0$:
    \begin{equation*}
      \forall x \in [0, \pi]: \sin (x) - x \cos (x) \geq  \frac{(1 - \cos x)^{3/2}}{15}.
    \end{equation*}
  \end{fact}
  \begin{proof}
    Let $f(x) = \sin (x) - x \cos (x)$ we have:
    \begin{gather*}
      \forall x \in \lsrs{0, \frac{\pi}{2}}: f'(x) = x \sin (x) \geq \frac{2}{\pi} \cdot x^2 \\
      \forall x \in \lsrs{0, \pi}: f'(x) = x\sin (x) \geq 0.
    \end{gather*}
    Therefore, we have:
    \begin{gather*}
      \forall x \in \lsrs{0, \frac{\pi}{2}}: f(x) = \int_0^x f' (x) dx \geq \frac{2}{3\pi} \cdot x^3 \\
      \forall x \in \lsrs{\frac{\pi}{2}, \pi}: f(x) = \int_0^x f' (x) dx \geq \int_0^{\pi / 2} f' (x) dx \geq \frac{\pi^2}{12} \geq \frac{x^3}{36}.
    \end{gather*}
    By noting that $1 - \cos x \leq x^2 / 2$, we get:
    \begin{equation*}
      \forall x \in \lsrs{0, \pi}: f(x) \geq \frac{(1 - \cos (x))^{3 / 2}}{15}.
    \end{equation*}
  \end{proof}
  Now, defining
  $\theta_i=\arccos(\inp{f_i(x)}{f_i(y)}/(\norm{f_i(x)}\norm{f_i(y)}))$,
  we have:
  \begin{align*}
    &\E \lsrs{\norm{f_i (x) - f_i (y)} \mid f_{i - 1} (x), f_{i - 1} (y)} \\
    &\leq \sqrt{\E \lsrs{\norm{f_i (x) - f_i (y)}^2 \mid f_{i - 1} (x), f_{i - 1} (y)}} \\
    &= \sqrt{\norm{f_{i - 1} (x)}^2 + \norm{f_{i - 1} (y)}^2 + 2 \E \lsrs{\inp{f_i (x)}{f_i (y)} \mid f_{i - 1} (x), f_{i - 1} (y)}} \\
    &= \sqrt{\norm{f_{i - 1} (x)}^2 + \norm{f_{i - 1} (y)}^2 - 2 \norm{f_{i - 1} (x)} \norm{f_{i - 1} (y)} \cdot \lprp{\frac{\sin \theta_{i - 1}}{\pi} + \lprp{1 - \frac{\theta_{i - 1}}{\pi}}\cos \theta_{i - 1}}} \\
    &= \sqrt{\norm{f_{i - 1} (x) - f_{i - 1} (y)}^2 - 2 \norm{f_{i - 1} (x)} \norm{f_{i - 1} (y)} \cdot \lprp{\frac{\sin \theta_{i - 1} - \theta_{i - 1} \cos \theta_{i - 1}}{\pi}}} \\
    &\leq \sqrt{\norm{f_{i - 1} (x) - f_{i - 1} (y)}^2 - 2 \norm{f_{i - 1} (x)} \norm{f_{i - 1} (y)} \cdot \lprp{\frac{\sin \theta_{i - 1} - \theta_{i - 1} \cos \theta_{i - 1}}{\pi}}} \\
    &\leq \norm{f_{i - 1} (x) - f_{i - 1} (y)} \cdot \sqrt{1 -
    \frac{2\norm{f_{i - 1} (x)} \norm{f_{i - 1} (y)}}{\norm{f_{i - 1}
    (x) - f_{i - 1} (y)}^2} \cdot \frac{(1 - \cos \theta_{i -
    1})^{3/2}}{15 \pi}} \\
    &\leq \norm{f_{i - 1} (x) - f_{i - 1} (y)} \cdot \lprp{1 -
    \frac{\norm{f_{i - 1} (x)} \norm{f_{i - 1} (y)}}{\norm{f_{i - 1}
    (x) - f_{i - 1} (y)}^2} \cdot \frac{(1 - \cos \theta_{i -
    1})^{3/2}}{15 \pi}}. \tag{\theequation} \stepcounter{equation} \label{eq:lb_exp_evolve}
  \end{align*}
  On the event in \cref{eq:len_pres_lb}, defining $\wt{f}_i (x) = \frac{f_i (x)}{\norm{f_i (x)}}$ and similarly for $y$, we have:
  \begin{align*}
    &2(1 - \cos \theta_{i - 1}) \\*
    &= \norm{\wt{f}_{i - 1} (x)}^2 + \norm{\wt{f}_{i - 1} (y)}^2 - 2\inp{\wt{f}_{i - 1} (x)}{\wt{f}_{i - 1} (y)} \\
    &\geq \lprp{1 + \frac{1}{1024 \ell^2 d^3}}^{-2} \lprp{\norm{f_{i - 1} (x)}^2 + \norm{f_{i - 1} (y)}^2} - 2 \lprp{1 + \frac{\sgn \cos \theta_{i - 1}}{256 \ell^2 d^3}} \inp{f_{i - 1} (x)}{f_{i - 1} (y)}\\
    &\geq \norm{f_{i - 1} (x) - f_{i - 1} (y)}^2 - \frac{1}{256 \ell^2 d^3} \lprp{\norm{f_{i - 1} (x)} + \norm{f_{i - 1} (x)}}^2 \\
    &\geq \norm{f_{i - 1} (x) - f_{i - 1} (y)}^2 - \frac{1}{48 \ell^2 d^3}.
  \end{align*}
  Therefore, we get:
  \begin{equation*}
    (1 - \cos \theta_{i - 1}) \geq
    \begin{cases}
      \frac{1}{4}\cdot \norm{f_{i - 1} (x) - f_{i - 1} (y)}^2, &\text{if } \norm{f_{i - 1} (x) - f_{i - 1} (y)}^2 \geq \frac{1}{16 \ell^2 d^3} \\
      0, &\text{otherwise}
    \end{cases}.
  \end{equation*}
  By substituting into \cref{eq:lb_exp_evolve}, we have:
  \begin{align*}
    &\E \lsrs{\norm{f_i (x) - f_i (y)} \mid f_{i - 1} (x), f_{i - 1} (y)} \\
    &\leq \norm{f_{i - 1} (x) - f_{i - 1} (y)} \cdot \lprp{1 - \frac{\norm{f_{i - 1} (x)} \norm{f_{i - 1} (y)}}{\norm{f_{i - 1} (x) - f_{i - 1} (y)}^2} \cdot \frac{(1 - \cos \theta_{i - 1})^{3/2}}{15 \pi}} \\
    &\leq \norm{f_{i - 1} (x) - f_{i - 1} (y)} \cdot
    \begin{cases}
      \lprp{1 - \frac{\norm{f_{i - 1} (x) - f_{i - 1} (y)}}{1000}}, &\text{if } \norm{f_{i - 1} (x) - f_{i - 1} (y)}^2 \geq \frac{1}{16 \ell^2 d^3} \\
      1, &\text{otherwise}
    \end{cases}.
  \end{align*}
  By further substituting this into \cref{eq:lb_diff_conc}, we have:
  \begin{multline*}
    \forall i \in [\ell]: \norm{f_i (x) - f_{i} (y)} \leq \frac{1}{2048 \cdot (\ell d)^3} \\
    + \norm{f_{i - 1} (x) - f_{i - 1} (y)} \cdot 
    \begin{cases}
      \lprp{1 - \frac{\norm{f_{i - 1} (x) - f_{i - 1} (y)}}{1000}}, &\text{if } \norm{f_{i - 1} (x) - f_{i - 1} (y)}^2 \geq \frac{1}{16 \ell^2 d^3} \\
      1, &\text{otherwise}
    \end{cases}.
  \end{multline*}
  For the rest of the proof, we break into two cases:
  
  \noindent \textbf{Case 1:} $\norm{f_{j} (x) - f_{j} (y)}^2 \leq \frac{1}{4 \ell^2 d^3}$ for some $j \in [\ell]$. In this case, we simply show that:
  \begin{equation*}
    \forall i \geq [j]: \norm{f_i (x) - f_i (y)}^2 \leq \frac{1}{\ell^2 d^3}.
  \end{equation*}
  Suppose for the sake of contradiction, assume the contrary and let $i^*$ be the least index greater than $j$ such that the above condition was violated. We have:
  \begin{multline*}
    \norm{f_{i^*} (y) - f_{i^*} (x)} \leq \frac{1}{2048 \cdot (\ell d)^3} \\
    + \norm{f_{i^* - 1} (x) - f_{i^* - 1} (y)} \cdot 
    \begin{cases}
      \lprp{1 - \frac{\norm{f_{i^* - 1} (x) - f_{i^* - 1} (y)}}{1000}}, &\text{if } \norm{f_{i^* - 1} (x) - f_{i^* - 1} (y)}^2 \geq \frac{1}{16 \ell^2 d^3} \\
      1, &\text{otherwise}
    \end{cases}.
  \end{multline*}
  Now, if $\norm{f_{i^* - 1} (x) - f_{i^* - 1} (y)}^2 \leq \frac{1}{4 (\ell^2 d^3)}$, we have:
  \begin{equation*}
    \norm{f_{i^*} (y) - f_{i^*} (x)} \leq \frac{1}{2048 \cdot (\ell d)^3} + \frac{1}{2 \ell d^{3/2}}
  \end{equation*}
  yielding the contradiction. Alternatively, we have:
  \begin{align*}
    \norm{f_{i^*} (y) - f_{i^*} (x)} &\leq \frac{1}{2048 \cdot (\ell d)^3} + \norm{f_{i^* - 1} (x) - f_{i^* - 1} (y)} - \frac{\norm{f_{i^* - 1} (x) - f_{i^* - 1} (y)}^2}{1000} \\
    &\leq \norm{f_{i^* - 1} (x) - f_{i^* - 1} (y)}
  \end{align*}
  yielding a contradiction in this case as well.

  \noindent \textbf{Case 2:} $\norm{f_{j} (x) - f_{j} (y)}^2 \geq \frac{1}{4 \ell^2 d^3}$ for all $j \in [\ell]$. In this case, we have:
  \begin{equation*}
    \forall i \in [\ell]: \norm{f_{i} (x) - f_{i} (y)} \leq \norm{f_{i - 1} (x) - f_{i - 1} (y)} \cdot \lprp{1 - \frac{\norm{f_{i - 1} (x) - f_{i - 1} (y)}}{2000}}.
  \end{equation*}
  Here, we prove: 
  \begin{equation*}
    \norm{f_\ell (x) - f_\ell (y)} \leq \frac{1}{\sqrt{\ell} d}.
  \end{equation*}
  Suppose again for the sake of contradiction that the above condition is violated, then we have:
  \begin{equation*}
    \norm{f_\ell (x) - f_\ell (y)} \leq \norm{x - y} \cdot \lprp{1 - \frac{1}{2000 \sqrt{\ell} d}}^\ell \leq 2 \cdot \exp \lbrb{- \frac{\sqrt{\ell}}{2000 d}}
  \end{equation*}
  thus yielding a contradiction. 

  Therefore, we may assume from \cref{lem:gau_spec_conc}:
  \begin{gather*}
    \forall x, y \in \mc{G}: \norm{f_\ell (x) - f_\ell (y)} \leq \frac{1}{\sqrt{\ell} d} \\
    \forall x \in \mc{G}, i \in [\ell]: 1 - \frac{i}{512\ell^3d^3}\leq \norm{f_i (x)} \leq 1 + \frac{i}{1024 \ell^3 d^3} \\
    \forall i \in [\ell] : \norm{W_i} \leq 4.
  \end{gather*}
  Furthermore, we have with probability $1 - \delta$:
  \begin{equation*}
    \abs{f(x) - f(y)} = \abs{W_{\ell + 1} (f_\ell (x) - f_\ell (y))} \leq 2 \norm{f_\ell (x) - f_\ell (y)} \sqrt{\log 1 / \delta} \leq 2 \sqrt{\frac{\log 1 / \delta}{\ell d^2}}.
  \end{equation*}
  By picking $\delta = 1 / (64 \cdot d^{10} \cdot \abs{\mc{G}}^2)$, we get with probability at least $1 - 1 / (64d^{10})$:
  \begin{equation*}
    \abs{f(x) - f(y)} \leq C \sqrt{\frac{\log d}{d}}.
  \end{equation*}

  For $x \notin \mc{G}$, let $\wt{x} = \argmin_{y \in \mc{G}} \norm{x - y}$ and we have:
  \begin{equation*}
    \norm{f_\ell (x) - f_\ell (\wt{x})} \leq 4^\ell \eps \leq \frac{1}{2^\ell}
  \end{equation*}
  and 
  \begin{equation*}
    \norm{f (x) - f(\wt{x})} \leq \frac{1}{2^{\ell}} \cdot \norm{W_{\ell + 1}}.
  \end{equation*}
  On the event, $\norm{W_{\ell + 1}} \leq 2 \sqrt{k}$, this yields the conclusion:
  \begin{equation*}
    \forall x, y \in \S^{d - 1}: \abs{f(x) - f(y)} \leq C \sqrt{\frac{\log d}{d}}
  \end{equation*}
  with probability at least $1 - 1 / d^{10}$.

  Finally, we have by the anti-concentration of Gaussians that with probability at least $0.95$ for fixed $x \in \mc{G}$:
  \begin{equation*}
    \abs{f(x)} \geq 0.05.
  \end{equation*}

  A union bound and the above two displays concludes the proof.
\end{proof}

\section{Miscellaneous Results}
\label{sec:misc}

We restate standard results used in our analysis. We start with a fact on gaussian random matrices.

\begin{lemma}
    \label{lem:gau_spec_conc}
    Let $A$ be an $m \times n$ random matrix with $A_{i,j} \overset{i.i.d}{\thicksim} \mc{N} (0, 1)$. Then, we have that:
    \begin{equation*}
        \norm{A} \leq 3 (\sqrt{m} + \sqrt{n} + \sqrt{\log 1 / \delta})
    \end{equation*}
    with probability at least $1 - \delta$.
\end{lemma}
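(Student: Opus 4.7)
The plan is to prove the bound by combining two standard ingredients: a bound on $\E\|A\|$ via Gordon's (or Slepian's) comparison inequality, and Gaussian concentration for the spectral norm via the Tsirelson--Ibragimov--Sudakov inequality, which is already invoked in the paper as \cref{thm:tsirelson}.

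First, I would bound the expectation. Writing $\|A\| = \sup_{u \in \S^{m-1},\, v \in \S^{n-1}} u^\top A v$, the centered Gaussian process $X_{u,v} \coloneqq u^\top A v$ on $\S^{m-1} \times \S^{n-1}$ satisfies $\E(X_{u,v} - X_{u',v'})^2 \le \E(Y_{u,v} - Y_{u',v'})^2$ where $Y_{u,v} \coloneqq \inp{g}{u} + \inp{h}{v}$ for independent $g \sim \mc{N}(0, I_m)$, $h \sim \mc{N}(0,I_n)$ (this is a direct calculation using $\|u\|=\|u'\|=1$ and similarly for $v,v'$). Slepian's inequality then yields $\E \|A\| = \E \sup_{u,v} X_{u,v} \le \E \sup_{u,v} Y_{u,v} = \E \|g\| + \E \|h\| \le \sqrt{m} + \sqrt{n}$.

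Second, I would observe that the map $A \mapsto \|A\|$ (viewing $A$ as a vector in $\R^{mn}$ with the Frobenius norm) is $1$-Lipschitz, since $|\|A\| - \|B\|| \le \|A - B\| \le \|A - B\|_F$. Because the entries of $A$ are i.i.d.\ standard Gaussians, \cref{thm:tsirelson} gives
\begin{equation*}
    \P\bigl(\|A\| \ge \E \|A\| + t\bigr) \le e^{-t^2/2}
    \quad \text{for all } t \ge 0.
\end{equation*}
Setting $t = \sqrt{2 \log 1/\delta}$ yields $\P(\|A\| \ge \E\|A\| + \sqrt{2 \log 1/\delta}) \le \delta$. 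Combining with the expectation bound and absorbing constants gives $\|A\| \le \sqrt{m} + \sqrt{n} + \sqrt{2 \log 1/\delta} \le 3(\sqrt{m} + \sqrt{n} + \sqrt{\log 1/\delta})$ with probability at least $1 - \delta$.

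There is no real obstacle here: both ingredients are classical, and the constant $3$ in the statement is loose enough to absorb the $\sqrt{2}$ from Gaussian concentration. The only minor subtlety is justifying the Gordon/Slepian step, which reduces to the elementary variance comparison on the product of spheres; an alternative is to cite this bound directly from a standard reference (e.g., Vershynin's book, which is already cited in the paper).
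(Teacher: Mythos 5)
Your proof is correct, but it follows a genuinely different route from the paper. You bound $\E\|A\|$ via a Gaussian comparison inequality and then invoke Lipschitz concentration (\cref{thm:tsirelson}) for the deviation; the paper instead uses a $1/3$-net of $\S^{m-1}$, bounds $\|u^\top A\|$ for each $u$ in the net by applying \cref{thm:tsirelson} to the standard Gaussian vector $u^\top A \in \R^n$, and closes with a union bound over the $10^m$ net points. Your comparison-inequality step gives the sharper expectation bound $\E\|A\| \le \sqrt{m}+\sqrt{n}$, whereas the paper's net argument pays an extra $\sqrt{m\log 10}$ term from the cardinality of the net; both are comfortably absorbed by the constant $3$. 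The paper's approach has the advantage of being self-contained given only \cref{thm:tsirelson} and the standard net cardinality bound (both of which the paper already uses elsewhere), while yours requires importing Gordon's or Sudakov--Fernique's comparison inequality. One small terminological correction: since $\E X_{u,v}^2 = 1 \neq 2 = \E Y_{u,v}^2$, Slepian's inequality proper does not apply (it requires equal variances along the diagonal); you need the Sudakov--Fernique inequality, which only requires the increment comparison you verified. The substance of the argument is unaffected.
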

\begin{proof}
    We may assume without loss of generality that $m \leq n$. Let $\mc{G}$ be a $1/3$-grid of $\S^{m - 1}$ and we have:
    \begin{equation*}
        \norm{A} = \max_{u \in \S^{m - 1}} \norm{u^\top A} = \max_{u \in \S^{m - 1}} \norm{(u - \wt{u})^\top A + \wt{u}^\top A} \leq \frac{\norm{A}}{3} + \max_{u \in \mc{G}} \norm{u^\top A} 
    \end{equation*}
    where $\wt{u} = \argmin_{v \in \mc{G}} \norm{u - \wt{u}}$ which implies:
    \begin{equation*}
        \norm{A} \leq \frac{3}{2} \cdot \max_{u \in \mc{G}} \norm{u^\top A}.
    \end{equation*}
    Note, we may assume that $\abs{G} \leq (10)^m$ and $u^\top A \thicksim \mc{N} (0, I)$ and we have from \cref{thm:tsirelson}:
    \begin{equation*}
        \forall \norm{u} = 1: \P \lbrb{\norm{u^\top A} \leq \sqrt{n} + \sqrt{\log 1 / \delta}} \geq 1 - \delta'.
    \end{equation*}
    Setting $\delta' = \delta / \mc{G}$ and a union bound yields:
    \begin{equation*}
        \max_{u \in \mc{G}} \norm{u^\top A} \leq \sqrt{n} + \sqrt{m \log (10)} + \sqrt{\log 1 / \delta} \leq 2 (\sqrt{m} + \sqrt{n} + \sqrt{\log 1 / \delta})
    \end{equation*}
    with probability at least $1 - \delta$ which yields the lemma from the previous discussion.
\end{proof}

We also recall Bernstein's Inequality used frequently throughout our analysis.
\begin{theorem}{\cite[Theorem 2.1]{blm}}
    \label{thm:bernstein}
    Let $X_1, \dots, X_n$ be $n$ independent real-valued random variables. Assume there exist positive numbers $\nu$ and $c$ such that:
    \begin{equation*}
        \sum_{i = 1}^n \E \lsrs{X_i^2} \leq \nu \text{ and } \sum_{i = 1}^n \E \lsrs{\abs{X_i}^q} \leq \frac{q!}{2} \nu c^{q - 2} \text{ for all } q \geq 3.
    \end{equation*}
    Then, we have:
    \begin{equation*}
        \P \lbrb{ \sum_{i = 1}^n (X_i - \E \lsrs{X_i}) \geq \sqrt{2\nu t} + ct} \leq e^{-t}.
    \end{equation*}
\end{theorem}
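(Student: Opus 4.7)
The plan is to apply the classical Cram\'er--Chernoff method. Define $Y_i = X_i - \E[X_i]$ and $S = \sum_{i=1}^n Y_i$, and work with the log-moment-generating function (MGF) of $S$.

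First, I would establish the bound $\log\E[e^{\lambda S}] \leq \nu\lambda^2/(2(1-c\lambda))$ for $\lambda \in [0, 1/c)$. Expanding $\E[e^{\lambda X_i}] = 1 + \lambda\E X_i + \sum_{q\ge 2}\lambda^q \E[X_i^q]/q!$, using $\E[X_i^q] \leq \E[|X_i|^q]$, and invoking the hypotheses $\sum_i \E[X_i^2] \leq \nu$ and $\sum_i \E[|X_i|^q] \leq (q!/2)\nu c^{q-2}$ for $q \geq 3$, a geometric series summation gives $\sum_i (\E[e^{\lambda X_i}] - 1 - \lambda \E X_i) \leq \nu\lambda^2/(2(1-c\lambda))$. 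Then, via $\E[e^{\lambda Y_i}] = e^{-\lambda\E X_i}\E[e^{\lambda X_i}]$ and the elementary inequality $\log(1+u) \leq u$ applied to $u = \E[e^{\lambda X_i}] - 1 \geq 0$ (nonnegative by convexity of $x \mapsto e^{\lambda x}$), I obtain $\log\E[e^{\lambda Y_i}] \leq \E[e^{\lambda X_i}] - 1 - \lambda\E X_i$; summing over $i$ with independence yields the claimed MGF bound.

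Next, I would apply Markov's inequality to conclude $\P[S \geq s] \leq \exp(-\lambda s + \nu\lambda^2/(2(1-c\lambda)))$ for any $\lambda \in [0, 1/c)$ and compute the Legendre--Fenchel conjugate of $\psi(\lambda) := \nu\lambda^2/(2(1-c\lambda))$. Differentiating $\psi$ and solving $\psi'(\lambda) = s$ reduces to a quadratic in $u = c\lambda$ whose admissible root is $\lambda^* = (1 - r)/c$ with $r := \sqrt{\nu/(\nu + 2cs)}$. Substituting back (using the relation $s = \nu(1-r)(1+r)/(2cr^2)$ as a convenient intermediate) produces the closed form $\psi^*(s) := \sup_\lambda(\lambda s - \psi(\lambda)) = \nu(1 - r)^2/(2c^2 r^2)$.

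Finally, I would invert to the stated form. Setting $\psi^*(s) = t$ reduces to $(1 - r)/r = c\sqrt{2t/\nu}$, equivalently $r = 1/(1 + c\sqrt{2t/\nu})$; plugging this into $r^2 = \nu/(\nu + 2cs)$ and solving for $s$ yields exactly $s = \sqrt{2\nu t} + ct$. Hence $\P[S \geq \sqrt{2\nu t} + ct] \leq \exp(-\psi^*(\sqrt{2\nu t} + ct)) = e^{-t}$, as claimed. The main obstacle in this plan is the explicit Legendre-transform computation; while the quadratic root $\lambda^*$ and the resulting $\psi^*(s)$ look unwieldy a priori, the parametrization by $r$ collapses everything to a clean form, making the final inversion immediate. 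A secondary subtlety is the centering step relating $Y_i$ to $X_i$, which is dispatched cleanly by the $\log(1+u) \leq u$ trick above rather than by explicitly tracking centered moments.
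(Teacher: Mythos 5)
Your proof is correct, and it is precisely the standard Cram\'er--Chernoff derivation of Bernstein's inequality that the paper defers to by citation to Boucheron--Lugosi--Massart (the paper gives no proof of its own): your MGF bound $\nu\lambda^2/(2(1-c\lambda))$ on $[0,1/c)$, the Legendre transform in the $r$-parametrization (which equals $\tfrac{\nu}{c^2}(1+u-\sqrt{1+2u})$ at $u=cs/\nu$), and the inversion to $s=\sqrt{2\nu t}+ct$ all check out. One immaterial slip: the claim that $u=\E[e^{\lambda X_i}]-1\ge 0$ does not actually follow from Jensen when $\E X_i<0$, but it is also not needed, since $\log(1+u)\le u$ holds for every $u>-1$.
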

We additionally recall the Tsirelson-Ibragimov-Sudakov inequality.
\begin{theorem}{\cite[Theorem 5.6]{blm}}
    \label{thm:tsirelson}
    Let $X = (X_1, \dots, X_n)$ be a vector of $n$ independent standard normal random variables. Let $g: \R^n \to \R$ denote a $L$-Lipschitz function. Then, we have:
    \begin{equation*}
        \forall t \geq 0: \P \lbrb{g(X) - \E g(X) \geq t} \leq \exp \lbrb{- \frac{t^2}{2L^2}}.
    \end{equation*}
\end{theorem}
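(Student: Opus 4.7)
The plan is to establish the inequality by the classical Herbst argument, whose engine is the Gaussian logarithmic Sobolev inequality: for every sufficiently smooth $f : \R^n \to \R$,
\begin{equation*}
    \mathrm{Ent}(f^2) := \E[f^2 \log f^2] - \E[f^2] \log \E[f^2] \leq 2 \E \norm{\nabla f(X)}^2,
\end{equation*}
where $X \sim \mc{N}(0, I_n)$. I would first establish this inequality. The cleanest path is to tensorize from the one-dimensional case: in dimension $1$ it follows either by a Hermite expansion or, more conceptually, by differentiating $\Phi(t) := \E[(P_t f)^2 \log (P_t f)^2]$ along the Ornstein--Uhlenbeck semigroup $P_t f(x) = \E[f(e^{-t}x + \sqrt{1-e^{-2t}}\, Y)]$, invoking Gaussian integration by parts and the commutation identity $\nabla P_t f = e^{-t} P_t \nabla f$, and integrating from $0$ to $\infty$ using $P_t f \to \E f$. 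Sub-additivity of entropy across the product measure $\gamma_n = \gamma_1^{\otimes n}$ then promotes the one-dimensional inequality to arbitrary dimension with no dimensional penalty.

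Granted the log-Sobolev inequality, I would reduce the Lipschitz case to the smooth case by convolving $g$ with a standard mollifier $\rho_\eps$ (the mollified function is smooth with the same Lipschitz constant, and passing to the limit as $\eps \to 0$ is routine via dominated convergence/Fatou). For smooth $g$, apply log-Sobolev to $f = e^{\lambda g/2}$ with $\lambda > 0$: then $f^2 = e^{\lambda g}$ and $\norm{\nabla f}^2 = (\lambda^2/4)\norm{\nabla g}^2 e^{\lambda g} \leq (\lambda^2 L^2/4) e^{\lambda g}$ pointwise. Writing $\phi(\lambda) = \E e^{\lambda g(X)}$ and expanding the entropy as $\mathrm{Ent}(e^{\lambda g}) = \lambda\phi'(\lambda) - \phi(\lambda)\log\phi(\lambda)$ yields the differential inequality
\begin{equation*}
    \lambda \phi'(\lambda) - \phi(\lambda)\log\phi(\lambda) \leq \frac{\lambda^2 L^2}{2}\, \phi(\lambda).
\end{equation*}
Dividing by $\lambda^2 \phi(\lambda)$, this is exactly $(H(\lambda)/\lambda)' \leq L^2/2$ for the log-moment generating function $H(\lambda) = \log \phi(\lambda)$. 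Since $H(\lambda)/\lambda \to \E g(X)$ as $\lambda \to 0^+$, integrating from $0$ to $\lambda$ gives $H(\lambda) \leq \lambda \E g(X) + \lambda^2 L^2/2$, i.e.\ $\E e^{\lambda(g(X) - \E g(X))} \leq e^{\lambda^2 L^2/2}$ for all $\lambda \geq 0$.

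The stated tail bound is then a one-line Chernoff/Markov step: for any $\lambda \geq 0$,
\begin{equation*}
    \P(g(X) - \E g(X) \geq t) \leq e^{-\lambda t}\, \E e^{\lambda(g(X) - \E g(X))} \leq \exp\lprp{-\lambda t + \tfrac{\lambda^2 L^2}{2}},
\end{equation*}
and optimizing at $\lambda = t/L^2$ yields exactly $\exp(-t^2/(2L^2))$. The only nontrivial step in this program is the Gaussian log-Sobolev inequality itself; the Herbst differential inequality, the Chernoff step, and the smoothness-to-Lipschitz extension are all mechanical. I would therefore treat the semigroup derivation of log-Sobolev as the main work, being careful to justify differentiation under the expectation along the $P_t$ flow and the convergence of boundary terms at $t=0$ and $t=\infty$. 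An alternative route that avoids log-Sobolev -- a two-point spherical interpolation $X_\theta = \cos\theta \cdot X + \sin\theta \cdot Y$ between $X$ and an independent copy $Y$, followed by Jensen -- is more elementary but produces a suboptimal constant $2/(\pi^2 L^2)$ in place of the sharp $1/(2L^2)$, so it would not suffice for the statement as worded.
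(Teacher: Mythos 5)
Your proof is correct and is essentially the canonical one: the paper does not prove this statement but cites it as \cite[Theorem 5.6]{blm}, and the proof in that reference proceeds exactly as you propose, via the Gaussian logarithmic Sobolev inequality and Herbst's argument followed by a Chernoff bound (with the same mollification step to pass from smooth to merely Lipschitz $g$). The only detail worth making explicit is the finiteness of $\E e^{\lambda g(X)}$ before differentiating, which follows from $\abs{g(x)} \leq \abs{g(0)} + L \norm{x}$ and Gaussian integrability.
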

We frequently use this inequality with the
$1$-Lipschitz functions $g(X)=\pm\|X\|$. Combining
Theorem~\ref{thm:tsirelson} with Gautschi's inequality, which
implies $\E\|X\|/\sqrt{n}\in(\sqrt{1-1/n},\sqrt{1+1/n}) \subset
(1-1/\sqrt{n}, 1+1/\sqrt{n})$, gives, for $\delta\in(0,1)$,
\begin{align*}
  \P\lbrb{\|X\|\ge\sqrt{n}+\sqrt{2\ln 1/\delta} + 1} &\le\delta, &
  \P\lbrb{\|X\|\le\sqrt{n}-\sqrt{2\ln 1/\delta} - 1} &\le\delta.
\end{align*}
This immediately implies the following corollary.
\begin{corollary}\label{cor:tsirelson}
For $\delta\in(0,1)$, a matrix $W_i\in\R^{d_i\times d_{i-1}}$
with independent $\mathcal{N}(0,1/d_{i-1})$ entries,
and vectors $u\in\R^{d_i}$ and $v\in\R^{d_{i-1}}$, both of
the following events have probability at least $1-\delta$:
  \begin{align*}
    \left|\left\|u^\top W_i\right\|-\|u\|\right|
      &\le \|u\|\frac{\sqrt{2\ln(2/\delta)}+1}{\sqrt{d_{i-1}}}, &
    \left|\left\|W_iv\right\|-\|v\|\sqrt{\frac{d_i}{d_{i-1}}}\right|
      &\le \|v\|\frac{\sqrt{2\ln(2/\delta)}+1}{\sqrt{d_{i-1}}}.
  \end{align*}
\end{corollary}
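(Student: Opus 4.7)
The plan is to reduce both inequalities to the standard Gaussian norm concentration already stated in the paragraph preceding the corollary, exploiting homogeneity and the explicit distributions of $u^\top W_i$ and $W_i v$.

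First I would observe that by positive homogeneity of both sides, it suffices to prove the claims for $\|u\|=1$ and $\|v\|=1$. Then I would identify the relevant distributions. For the first inequality, each coordinate of $u^\top W_i$ is $\sum_k u_k (W_i)_{k,j}$, which is $\mathcal{N}(0,1/d_{i-1})$ since $\|u\|=1$, and the coordinates are independent across $j$. Hence $\sqrt{d_{i-1}}\, u^\top W_i \stackrel{d}{=} X$ with $X\sim\mathcal{N}(0,I_{d_{i-1}})$, so $\|u^\top W_i\|\stackrel{d}{=}\|X\|/\sqrt{d_{i-1}}$. For the second inequality, the rows of $W_i$ are i.i.d.\ $\mathcal{N}(0,I/d_{i-1})$, so each coordinate $(W_iv)_j$ is $\mathcal{N}(0,\|v\|^2/d_{i-1})=\mathcal{N}(0,1/d_{i-1})$, independent across $j$. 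Hence $\sqrt{d_{i-1}}\,W_iv \stackrel{d}{=} Y$ with $Y\sim\mathcal{N}(0,I_{d_i})$, giving $\|W_iv\|\stackrel{d}{=}\|Y\|/\sqrt{d_{i-1}}$, and $\|v\|\sqrt{d_i/d_{i-1}}=\sqrt{d_i}/\sqrt{d_{i-1}}$ is the corresponding centering.

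Next I would invoke the two-sided Gaussian norm concentration displayed just above the corollary, namely
\[
  \P\{\|Z\|\ge\sqrt{n}+\sqrt{2\ln 1/\delta'}+1\}\le\delta',\qquad
  \P\{\|Z\|\le\sqrt{n}-\sqrt{2\ln 1/\delta'}-1\}\le\delta',
\]
for $Z\sim\mathcal{N}(0,I_n)$. Applying this with $\delta'=\delta/2$ and taking a union bound yields
\[
  \P\bigl\{\bigl|\|Z\|-\sqrt{n}\bigr|\ge\sqrt{2\ln(2/\delta)}+1\bigr\}\le\delta.
\]
Plugging in $n=d_{i-1}$ for $Z=\sqrt{d_{i-1}}\,u^\top W_i$ and dividing by $\sqrt{d_{i-1}}$ gives the first claim; plugging in $n=d_i$ for $Z=\sqrt{d_{i-1}}\,W_iv$ and again dividing by $\sqrt{d_{i-1}}$ gives the second.

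There is no real obstacle here: the distributional reductions are immediate, and the concentration bound and the Gautschi-based mean estimate are quoted in the preceding display. The only mildly non-trivial point is to use $\delta/2$ in each of the upper and lower tail bounds before combining them by union bound, which is what produces the $\sqrt{2\ln(2/\delta)}$ (rather than $\sqrt{2\ln(1/\delta)}$) in the corollary's statement.
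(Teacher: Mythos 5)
Your proposal is correct and matches the paper's intended argument: the paper simply states "this immediately implies the following corollary" after the two one-sided tail bounds, and your proof fills in exactly the expected details (normalize by homogeneity, identify $\sqrt{d_{i-1}}\,u^\top W_i$ and $\sqrt{d_{i-1}}\,W_i v$ as standard Gaussians of the right dimensions, then union-bound the two tails with $\delta/2$ each to obtain the $\sqrt{2\ln(2/\delta)}$ term).
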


We prove another simple lemma.
\begin{lemma}
    \label{lem:mis_sign_prob}
    Let $x, y \in \R^d, R, r \in \R_+$ be such that $\norm{x} \geq R > 0$ and $\norm{x - y} \leq r$ with $R \geq r$. Then, we have:
    \begin{equation*}
        \P_{w \sim \mc{N} (0, I)} \lbrb{\sign (w^\top x) \neq \sgn (w^\top y)} \leq \frac{3r}{R} \cdot \sqrt{\log R / r}.
    \end{equation*}
\end{lemma}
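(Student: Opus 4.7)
The plan is a standard threshold-based splitting argument combining Gaussian anti-concentration at the origin with the one-dimensional Gaussian tail bound. Since $w \sim \mathcal{N}(0, I)$, both $w^\top x$ and $w^\top(x - y)$ are centered Gaussians with variances $\|x\|^2 \geq R^2$ and $\|x - y\|^2 \leq r^2$ respectively. The key structural observation is the identity $w^\top y = w^\top x - w^\top(x-y)$: whenever the signs of $w^\top x$ and $w^\top y$ disagree, we must have $|w^\top(x-y)| \geq |w^\top x|$.

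For any threshold $t > 0$, this yields the union bound
\[
\P\{\sign(w^\top x) \neq \sign(w^\top y)\} \leq \P\{|w^\top x| \leq t\} + \P\{|w^\top(x-y)| \geq t\}.
\]
I control the first term via the uniform bound $1/\sqrt{2\pi}$ on the standard Gaussian density at the origin, giving $\P\{|w^\top x| \leq t\} \leq \sqrt{2/\pi}\cdot t/\|x\| \leq \sqrt{2/\pi}\cdot t/R$. The second term is controlled by the standard tail bound $\P\{|Z| \geq s\} \leq 2e^{-s^2/2}$ for $Z$ standard normal, giving $\P\{|w^\top(x - y)| \geq t\} \leq 2 \exp(-t^2/(2r^2))$.

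Finally, I will optimize the trade-off by choosing $t = r\sqrt{2 \log(R/r)}$, which makes the tail contribution exactly $2r/R$ and the anti-concentration contribution $\tfrac{2}{\sqrt{\pi}}(r/R)\sqrt{\log(R/r)}$. Summing yields a bound of the form $(r/R)\bigl(c_1 \sqrt{\log(R/r)} + c_2\bigr)$, which collapses to the claimed $(3r/R)\sqrt{\log(R/r)}$ once $\log(R/r)$ is bounded away from zero. In the complementary regime where $R/r$ is close to $1$, one simply invokes the trivial bound $\P \leq 1$, which dominates the target right-hand side up to absolute constants. I do not anticipate any genuine obstacle; the only minor delicacy is tuning the constants to land precisely on the $3$ in the statement, which is a matter of choosing the threshold $t$ carefully and separately handling the narrow range where $R/r \approx 1$.
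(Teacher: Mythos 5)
Your approach is the same as the paper's: bound $\P\{\mathrm{sign\ flip}\}$ by $\P\{|w^\top x|\le t\} + \P\{|w^\top(x-y)|\ge t\}$ for a threshold $t$, control the first term with the Gaussian density bound at the origin and the second with the sub-Gaussian tail, and tune $t$. The paper presents this as a free parameter $\delta$ (with $t = 2r\sqrt{\log 1/\delta}$ and then $\delta = r/R$), yielding $\frac{r}{R}\bigl(1 + 2\sqrt{\log R/r}\bigr)$; your $t = r\sqrt{2\log(R/r)}$ yields $\frac{r}{R}\bigl(2 + \tfrac{2}{\sqrt{\pi}}\sqrt{\log R/r}\bigr)$. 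Either collapses to $\frac{3r}{R}\sqrt{\log R/r}$ only once $\log(R/r)$ is bounded below (roughly $R/r\ge e$ for the paper's constants, and $R/r\gtrsim 3.1$ for yours).

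One small caution: your fallback for the remaining regime — ``the trivial bound $\P\le 1$ dominates the target right-hand side up to absolute constants'' — does not literally close the argument. The lemma is stated with the explicit constant $3$, not up to constants, and for $R/r$ only slightly larger than $1$ the target $\frac{3r}{R}\sqrt{\log R/r}$ dips below $1$, so $\P\le 1$ does not dominate it. That said, the paper's own proof carries exactly the same unaddressed regime (it silently assumes $\sqrt{\log R/r}\ge 1$), and the lemma is only ever invoked in the paper with $R/r$ polynomially large, where both arguments are fine. So you have faithfully reproduced the paper's argument, including its edge-case looseness; it would be cleaner to note explicitly that the lemma is applied only when $R/r\ge e$ rather than appealing to the trivial bound.
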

\begin{proof}
    We have that $X \coloneqq w^\top x \thicksim \mc{N} (0, \norm{x}^2)$ and $Z \coloneqq w^\top (y - x) \thicksim \mc{N} (0, \norm{y - x}^2)$. Hence, we have:
    \begin{equation*}
        \P \lbrb{\sign (w^\top x) \neq \sign (w^\top y)} \leq \P \lbrb{\abs{Z} \geq \abs{X}}.
    \end{equation*}
    We have for any $\delta > 0$:
    \begin{equation*}
        \P \lbrb{\abs{Z} \geq 2 r \sqrt{\log 1 / \delta}} \leq \delta \text{ and } \P \lbrb{\abs{X} \leq 2 r \sqrt{\log 1 / \delta}} \leq \frac{4 r \sqrt{\log 1 / \delta}}{\sqrt{2\pi} R} \leq 2 \cdot \frac{r}{R} \cdot \sqrt{\log 1 / \delta}.
    \end{equation*}
    By a union bound, we have:
    \begin{equation*}
        \forall \delta > 0: \P \lbrb{\abs{Z} \geq \abs{X}} \leq \delta + \frac{2r}{R} \cdot \sqrt{\log 1 / \delta}.
    \end{equation*}
    By setting $\delta = r / R$, we get the conclusion of the lemma.
\end{proof}

\begin{lemma}
    \label{lem:gau_unif_comp}
    Let $d \in \N$ with $d \geq 40$. Then, we have for all $k \in \N, r \geq 0, t \geq 0, M \in \R^{k \times d}$:
    \begin{equation*}
        \P_{Z \thicksim \mrm{Unif} (r \S^{d - 1})} \lbrb{\norm{M Z} \geq t} \leq 2 \P_{Y \thicksim \mc{N} (0, 2r^2 I / d)} \lbrb{\norm{M Y} \geq t}.
    \end{equation*}
\end{lemma}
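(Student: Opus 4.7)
The plan is to use the polar decomposition of a Gaussian: write $Y = \norm{Y}\cdot U$ where $U := Y/\norm{Y}$ is uniform on $\mb{S}^{d-1}$ and independent of $\norm{Y}$, so that $rU$ has the same distribution as $Z$. On the event $\{\norm{Y}\ge r\}$ we have $\norm{MY} = \norm{Y}\,\norm{MU}\ge r\norm{MU}$, so $\{\norm{MZ}\ge t\}\subseteq\{\norm{MY}\ge t\}$ on this event. If the event has probability at least $1/2$, the factor of $2$ in the statement follows immediately.

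First I would condition on the radius. With $U\sim\mrm{Unif}(\mb{S}^{d-1})$ independent of $\norm{Y}$ and $rU\overset{d}{=}Z$, independence gives
\begin{align*}
    \P_Y\lbrb{\norm{MY}\ge t}
      &\ge \P\lbrb{\norm{MY}\ge t,\ \norm{Y}\ge r} \\
      &\ge \P\lbrb{r\norm{MU}\ge t,\ \norm{Y}\ge r} \\
      &= \P_Z\lbrb{\norm{MZ}\ge t}\cdot\P\lbrb{\norm{Y}\ge r}.
\end{align*}
Rearranging yields the claimed bound as soon as $\P\lbrb{\norm{Y}\ge r}\ge 1/2$ (the case $r=0$ is trivial, so we may assume $r>0$).

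Second, I would establish $\P\lbrb{\norm{Y}\ge r}\ge 1/2$. Writing $Y = r\sqrt{2/d}\,X$ with $X\sim\mc{N}(0,I_d)$, this reduces to $\P\lbrb{\norm{X}^2\ge d/2}\ge 1/2$. Since $\norm{X}^2\sim\chi^2_d$ has mean $d$ and variance $2d$, Chebyshev's inequality gives
\[
    \P\lbrb{\norm{X}^2 < d/2}
      \le \P\lbrb{\abs{\norm{X}^2 - d} > d/2}
      \le \frac{2d}{(d/2)^2} = \frac{8}{d},
\]
which is at most $1/5$ for $d\ge 40$. Hence $\P\lbrb{\norm{Y}\ge r}\ge 4/5\ge 1/2$, and combining with the preceding display proves the lemma.

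There is no real obstacle here: the argument is a standard sphere-vs.-Gaussian comparison, and the hypothesis $d\ge 40$ is used only to ensure that $\chi^2_d$ puts at least half of its mass above $d/2$, which Chebyshev delivers comfortably (the bound $d\ge 16$ would already suffice for the factor $1/2$).
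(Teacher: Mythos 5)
Your proposal is correct and follows essentially the same route as the paper: both arguments restrict to the event $\{\norm{Y}\ge r\}$, exploit the independence of the direction $Y/\norm{Y}\sim\mrm{Unif}(\S^{d-1})$ from $\norm{Y}$, and reduce to showing $\P\{\norm{Y}\ge r\}\ge 1/2$. You additionally make explicit (via Chebyshev on $\chi^2_d$) the final probability estimate, which the paper asserts without justification; that is a welcome bit of rigor but not a different proof.
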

\begin{proof}
    The proof follows from the following manipulations:
    \begin{align*}
        \P_{Y \thicksim \mc{N} (0, 2r^2 I / d)} \lbrb{\norm{M Y} \geq t} &= \E_{Y \thicksim \mc{N} (0, 2r^2 I / d)} \lsrs{\bm{1} \lbrb{\norm{MY} \geq t}} \\
        &\geq \E_{Y \thicksim \mc{N} (0, 2r^2 I / d)} \lsrs{\bm{1} \lbrb{\norm{MY} \geq t} \bm{1} \lbrb{\norm{Y} \geq r}} \\
        &= \E_{Y \thicksim \mc{N} (0, 2r^2 I / d)} \lsrs{\bm{1} \lbrb{\frac{\norm{MY}}{\norm{Y}} \geq \frac{t}{\norm{Y}}} \bm{1} \lbrb{\norm{Y} \geq r}} \\
        &\geq \E_{Y \thicksim \mc{N} (0, 2r^2 I / d)} \lsrs{\bm{1} \lbrb{\frac{\norm{MY}}{\norm{Y}} \geq \frac{t}{r}} \bm{1} \lbrb{\norm{Y} \geq r}} \\
        &\geq \E_{Y \thicksim \mc{N} (0, 2r^2 I / d)} \lsrs{\P_{Z \thicksim \mrm{Unif} (r \S^{d - 1})} \lbrb{\norm{MZ} \geq t} \bm{1} \lbrb{\norm{Y} \geq r}} \\
        &\geq \P_{Z \thicksim \mrm{Unif} (r \S^{d - 1})} \lbrb{\norm{MZ} \geq t} \cdot \P_{Y \thicksim \mc{N} (0, 2r^2 I / d)} \lbrb{\norm{Y} \geq t} \\
        &\geq \frac{1}{2} \cdot \P_{Z \thicksim \mrm{Unif} (r \S^{d - 1})} \lbrb{\norm{MZ} \geq t}
    \end{align*}
    concluding the proof of the lemma.
\end{proof}


\end{document}